\documentclass[lettersize,journal]{IEEEtran}

\usepackage{algorithm}
\usepackage{array}
\usepackage[caption=false,font=normalsize,labelfont=sf,textfont=sf]{subfig}
\usepackage{textcomp}
\usepackage{stfloats}
\usepackage{url}
\usepackage{verbatim}
\usepackage{graphicx}
\usepackage{cite}
\usepackage[utf8]{inputenc} 
\usepackage[T1]{fontenc}    
\usepackage{hyperref}       
\usepackage{url}            
\usepackage{booktabs}       
\usepackage{amsfonts}       
\usepackage{nicefrac}       
\usepackage{microtype}      
\usepackage{xcolor}         
\usepackage{amsmath}
\usepackage{mathtools}
\usepackage{amssymb}
\usepackage{wrapfig}
\usepackage{algpseudocode}
\usepackage{diagbox}
\usepackage{amsthm}
\usepackage{amsbsy}
\usepackage{stmaryrd}

\begin{document}
\newcommand\numberthis{\addtocounter{equation}{1}\tag{\theequation}}


\newcommand{\alg}{\text{ELSS}}


\newcommand{\0}{\mathbb{0}}
\newcommand{\1}{\mathbb{1}}

\newcommand{\E}{\mathbb{E}}
\newcommand{\R}{\mathbb{R}}
\renewcommand{\P}{\mathbb{P}}
\newcommand{\U}{\mathbb{U}}


\newcommand{\ab}{\mathbf{a}}
\newcommand{\bb}{\mathbf{b}}
\newcommand{\eb}{\mathbf{e}}
\newcommand{\ib}{\mathbf{i}}
\newcommand{\pb}{\mathbf{p}}
\newcommand{\qb}{\mathbf{q}}
\newcommand{\vb}{\mathbf{v}}
\newcommand{\ub}{\mathbf{u}}
\newcommand{\xb}{\mathbf{x}}
\newcommand{\yb}{\mathbf{y}}
\newcommand{\zb}{\mathbf{z}}

\newcommand{\Ab}{\mathbf{A}}
\newcommand{\Bb}{\mathbf{B}}
\newcommand{\Cb}{\mathbf{C}}
\newcommand{\Db}{\mathbf{D}}
\newcommand{\Eb}{\mathbf{E}}
\newcommand{\Fb}{\mathbf{F}}
\newcommand{\Gb}{\mathbf{G}}
\newcommand{\Hb}{\mathbf{H}}
\newcommand{\Ib}{\mathbf{I}}
\newcommand{\Jb}{\mathbf{J}}
\newcommand{\Kb}{\mathbf{K}}
\newcommand{\Lb}{\mathbf{L}}
\newcommand{\Pb}{\mathbf{P}}
\newcommand{\Qb}{\mathbf{Q}}
\newcommand{\Rb}{\mathbf{R}}
\newcommand{\Sb}{\mathbf{S}}
\newcommand{\Vb}{\mathbf{V}}
\newcommand{\Ub}{\mathbf{U}}
\newcommand{\Wb}{\mathbf{W}}
\newcommand{\Xb}{\mathbf{X}}
\newcommand{\Yb}{\mathbf{Y}}
\newcommand{\Zb}{\mathbf{Z}}

\newcommand{\alphab}{\boldsymbol{\alpha}}
\newcommand{\betab}{\boldsymbol{\beta}}
\newcommand{\gammab}{\boldsymbol{\gamma}}
\newcommand{\phib}{\boldsymbol{\phi}}
\newcommand{\Phib}{\boldsymbol{\Phi}}
\newcommand{\Qhib}{\boldsymbol{\Qhi}}
\newcommand{\omegab}{\boldsymbol{\omega}}
\newcommand{\psib}{\boldsymbol{\psi}}
\newcommand{\sigmab}{\boldsymbol{\sigma}}
\newcommand{\nub}{\boldsymbol{\nu}}
\newcommand{\thetab}{\boldsymbol{\theta}}
\newcommand{\delb}{\boldsymbol{\delta}}
\newcommand{\rhob}{\boldsymbol{\rho}}
\newcommand{\Pib}{\boldsymbol{\Pi}}
\newcommand{\pib}{\boldsymbol{\pi}}
\newcommand{\Sigmab}{\boldsymbol{\Sigma}}


\newcommand{\Ac}{\mathcal{A}}
\newcommand{\Bc}{\mathcal{B}}
\newcommand{\Cc}{\mathcal{C}}
\newcommand{\Dc}{\mathcal{D}}
\newcommand{\Ec}{\mathcal{E}}
\newcommand{\Fc}{\mathcal{F}}
\newcommand{\Gc}{\mathcal{G}}
\newcommand{\Hc}{\mathcal{H}}
\newcommand{\Lc}{\mathcal{L}}
\newcommand{\Nc}{\mathcal{N}}
\newcommand{\Oc}{\mathcal{O}}
\newcommand{\Pc}{\mathcal{P}}
\newcommand{\Rc}{\mathcal{R}}
\newcommand{\Sc}{\mathcal{S}}
\newcommand{\Tc}{\mathcal{T}}
\newcommand{\Uc}{\mathcal{U}}
\newcommand{\Vc}{\mathcal{V}}
\newcommand{\Xc}{\mathcal{X}}
\newcommand{\Yc}{\mathcal{Y}}
\newcommand{\Zc}{\mathcal{Z}}


\newcommand{\tauh}{\widehat{\tau}}
\newcommand{\Sigmah}{\widehat{\Sigma}}

\newcommand{\fh}{\widehat{f}}
\newcommand{\gh}{\widehat{g}}
\newcommand{\kh}{\widehat{k}}
\newcommand{\qh}{\widehat{q}}
\newcommand{\Rh}{\widehat{R}}


\newcommand{\alphabh}{\widehat{\boldsymbol{\alpha}}}
\newcommand{\thetabh}{\widehat{\boldsymbol{\theta}}}

\newcommand{\qbh}{\widehat{\mathbf{q}}}

\newcommand{\Kbh}{\widehat{\mathbf{K}}}


\newcommand{\Fch}{\widehat{\mathcal{F}}}


\newcommand{\argmin}{\mathrm{argmin}}
\newcommand{\arginf}{\mathrm{arginf}}
\newcommand{\argmax}{\mathrm{argmax}}
\newcommand{\minimize}{\mathrm{minimize}}
\newcommand{\maximize}{\mathrm{maximize}}
\newcommand{\supp}{\mathrm{supp}}


\newcommand{\TV}{\text{TV}}
\newcommand{\norm}[1]{\left\lVert#1\right\rVert}
\newcommand{\tr}[1]{\text{Tr}\left[#1\right]}
\newcommand{\inn}[1]{\left<#1\right>}
\newcommand{\seal}[1]{\left \lceil #1\right \rceil}
\newcommand{\floor}[1]{\left \lfloor #1\right \rfloor}
\newcommand{\abs}[1]{\left|#1\right|}
\newcommand{\ind}[1]{\mathbf{1}\left(#1\right)}
\newcommand{\ex}[1]{\E\left[#1\right]}


\newtheorem{theorem}{Theorem}
\newtheorem{acknowledgement}[theorem]{Acknowledgement}
\newtheorem{assumption}{Assumption}
\newtheorem{conjecture}[theorem]{Conjecture}
\newtheorem{corollary}[theorem]{Corollary}
\newtheorem{definition}{Definition}
\newtheorem{example}{Example}
\newtheorem{lemma}[theorem]{Lemma}
\newtheorem{fact}{Fact}
\newtheorem{problem}{Problem}
\newtheorem{proposition}[theorem]{Proposition}
\newtheorem{remark}{Remark}
\newtheorem{solution}[theorem]{Solution}
\newtheorem{summary}[theorem]{Summary}

\newcommand{\bl}{\color{blue}}
\newcommand{\rd}{\color{black}}

\title{Theoretical Analysis of Measure Consistency Regularization for Partially Observed Data}

\author{Yinsong Wang,
        and~Shahin~Shahrampour,~\IEEEmembership{Senior Member,~IEEE}
\IEEEcompsocitemizethanks{
\IEEEcompsocthanksitem Y. Wang is with the University of Nevada Reno, Reno, NV 89557 USA.\\
E-mail: {\tt\small yinsongw@unr.edu}
\IEEEcompsocthanksitem S. Shahrampour is with the Department of Mechanical and Industrial Engineering at Northeastern University, Boston, MA 02115 USA.\protect\\
E-mail: {\tt\small s.shahrampour@northeastern.edu}
}}

\IEEEtitleabstractindextext{%
\begin{abstract}

The problem of corrupted data, missing features, or missing modalities continues to plague the modern machine learning landscape. To address this issue, a class of regularization methods that enforce consistency between imputed and fully observed data has emerged as a promising approach for improving model generalization, particularly in partially observed settings. We refer to this class of methods as \textbf{Measure Consistency Regularization (MCR)}. Despite its empirical success in various applications, such as image inpainting, data imputation and semi-supervised learning, a fundamental understanding of the theoretical underpinnings of MCR remains limited. This paper bridges this gap by offering theoretical insights into {\color{black} when MCR yields a more favorable finite-sample estimation-error upper bound}, viewed through the lens of neural network distance.

{\color{black} Under ideal interpolation and compatibility conditions, we show that the MCR estimation-error upper bound is no larger than vanilla supervised training and becomes strictly smaller when the mixed-sample alternative is favorable. We then extend the analysis to the non-ideal regime, where optimization and compatibility residuals can potentially offset this finite-sample advantage. Guided by these insights, we propose a novel practical diagnostic that leverages the duality gap and a calibrated drift estimator to infer the potential benefit of MCR training. We present detailed empirical evidence to support our theoretical claims and to show the effectiveness and accuracy of our practical diagnostic.} We further provide simulations on real-world datasets to show the versatility of MCR under different model architectures designed for different data sources.

\end{abstract}

\begin{IEEEkeywords}
Imputation, Integral Probability Metric, Neural Net Distance, Measure Consistency Regularization
\end{IEEEkeywords}}

\maketitle

\IEEEdisplaynontitleabstractindextext

\IEEEpeerreviewmaketitle

\section{Introduction}

Rapid advances in data acquisition technologies, ranging from high-resolution cameras and environmental sensors to transmission electron microscopy and single-cell sequencing, have fueled an unprecedented boom in data generation, forming the backbone of modern machine learning and artificial intelligence. Yet, despite this explosion in data volume, a persistent and troubling issue remains: missing data, in which features are unobserved or lost. This problem is pervasive across domains and has become even more pronounced as data have grown richer and more multimodal. In single-cell biology, for example, the historical norm was to profile each cell using only one omic technology, such as gene expression, chromatin accessibility, or protein abundance, leaving the remaining modalities unobserved—effectively creating missing features. Truly multimodal single-cell datasets have only begun to emerge in recent years \cite{swanson2021simultaneous, mimitou2021scalable}, underscoring how deeply entrenched this limitation has been. Moreover, the issue is not confined to specialized fields: more than $40\%$ of datasets in the UCI Machine Learning Repository contain missing values \cite{choudhury2020missing}, emphasizing the ubiquity of missingness even in widely used benchmark datasets.

The most direct approach to handling missing data is imputation. Formally, the problem can be described using two datasets: (i) a {\it fully} observed dataset, $\Db_l={\{(\xb_i, \zb_i)\}}_{i=1}^n$, consisting of $n$ data points where $\xb \in \Xc \subseteq \R^{d_x}$ and $\zb \in \Zc \subseteq \R^{d_z}$ are two sets of features; and (ii) a {\it partially} observed dataset, $\Db_u=\{\xb_i\}_{i=n+1}^{n+m}$, of $m$ data points for which $\zb$ is missing. The imputation goal is to construct a prediction function $f$ that can recover or approximate the unobserved features. A wide range of methods have historically been proposed for this task, often tailored to specific missing-data patterns. Classical examples include K-nearest neighbors (KNN) imputation \cite{garcia2009k}, regression-based approaches \cite{yu2020regression}, and multiple regression techniques \cite{zhao2016multiple}, among others.

The recent explosion in both data volume and dimensionality has motivated various research communities to develop domain-specific solutions to the missing data imputation problem. Examples span computer vision \cite{tran2017missing, ma2021smil, joy2021learning, sun2024redcore}, sensor networks \cite{kang2023cm, cheng2024collaboratively}, and single-cell data integration \cite{ashuach2023multivi, tu2022cross, gong2021cobolt}. Although these efforts share a common underlying mathematical formulation, the problem often appears under different names, such as image inpainting \cite{elharrouss2020image}, cross-modal imputation \cite{cohen2023joint}, imputation with structured missingness \cite{mitra2023learning}, RNA imputation \cite{bahador2021reconstruction}, or missing channel reconstruction \cite{hou2020systematic}. A unifying characteristic across much of this literature, however, is that the partially observed dataset $\Db_u$ is typically leveraged only as an auxiliary resource for model testing or to support downstream learning tasks after imputation, rather than being directly employed to improve the imputation quality itself. 

Interestingly, a practice that has emerged across different domains explicitly leverages the partially observed dataset $\Db_u$ to enhance imputation quality. This simple, practical approach lacks a formal name or a theoretical foundation despite its prevalence across domains. The central idea is intuitive: to improve imputation, one can impose a regularization constraint requiring that the empirical measure of the imputed dataset derived from $\Db_u$ remains consistent with that of the fully observed dataset $\Db_l$. In this work, we refer to this strategy as \textit{measure consistency regularization} ({\bf MCR}), which we will formally introduce in Section \ref{sec:pre}. Although this idea has achieved notable empirical success in many applications \cite{yoon2020gamin, yoon2018gain, muzellec2020missing, wu2023jointly}, its theoretical properties have remained elusive, leaving open fundamental questions about why, when, and how it works.

To address this theoretical gap, we undertake a comprehensive analysis of MCR through the lens of the \textit{neural net distance} \textbf{(NND)} \cite{pmlr-v70-arora17a}, due to its generality and theoretical properties. NND represents a broad class of discrepancy measures that can serve as effective consistency regularizers. Building on this foundation, we summarize our main contributions as follows:
\begin{itemize}
    \item We provide a unified view on imposing MCR between imputed data and fully observed data by incorporating NND as a penalty function in the learning objective. This formulation generalizes several existing approaches \cite{yoon2018gain,taherkhani2021self} that demonstrated strong empirical effectiveness.
    \item  {\color{black}
We establish an estimation-error bound for NND-based MCR in Theorem~\ref{the:main}. Under explicit interpolation and compatibility conditions, MCR's estimation error bound could be tighter than that of imputation without MCR, although its asymptotic order remains $\mathcal{O}(n^{-1/2})$.
    \item We extend this estimation-error analysis to the non-ideal scenario in Theorem \ref{the:imperfect}, where the training loss converges to a small but non-zero value, and discrepancies exist between fully and partially observed data. Our results reveal that the upper-bound advantage of MCR training is not always guaranteed in practice. Based on this analysis, we derive an exact duality-gap-based sufficient condition for an advantageous MCR upper bound and instantiate it as a practical plug-in training diagnostic.
    \item We conduct extensive simulations to evaluate multiple aspects of our theoretical analysis and the proposed training protocol. We further compare variants of MCR with state-of-the-art imputation algorithms \cite{yoon2018gain, muzellec2020missing} across five real-data tasks, and the empirical results support the versatility of the MCR framework.}
\end{itemize}

\subsection*{Preliminaries and Notations}
We denote the true joint probability measure of the fully observed dataset by $\pib(\xb,\zb)$, or simply $\pib$. The joint measure induced by a prediction function $f$ is denoted as $\pib(\xb,f(\xb))$, abbreviated as $\pib^f$ to emphasize the dependence on $f$. The empirical joint measure of the fully observed dataset is denoted by $\pib_l$, while the empirical joint measure of the partially observed data (imputed by $f$) is denoted by $\pib_u^f$. We further denote by $\pib_{ul}^f$ the empirical measure of the mixture dataset when it is imputed by a function $f$. Specifically, we have
\begin{equation*}
\begin{aligned}
    \pib_l &:= \frac{1}{n} \sum_{i=1}^{n} \delta_{(\xb_i, \zb_i)}\\
    \pib_u^f &:= \frac{1}{m}\sum_{i=n+1}^{n+m} \delta_{(\xb_i, f(\xb_i))}\\
    \pib_{ul}^f &:= \frac{1}{n+m}\sum_{i=1}^{n+m} \delta_{(\xb_i, f(\xb_i))}.
\end{aligned}
\end{equation*}
 For any probability measure $\pib$ on $(\Xc,\Zc)$ and any integrable function $g$, we write $\pib g := \E_{\pib}[g]$, where $\E[\cdot]$ denotes the expectation operator. {\color{black} Table~\ref{tab:notation} presents a notation summary for this work.}

\begin{table}[t]
\centering
\caption{Table of notations.}
\label{tab:notation}
\footnotesize
\setlength{\tabcolsep}{3pt}
\renewcommand{\arraystretch}{1.08}
\begin{tabular}{@{}p{0.32\columnwidth}p{0.62\columnwidth}@{}}
\toprule
\textbf{Symbol} & \textbf{Description} \\
\midrule
$\Db_l,\Db_u;\ n,m$
& Fully and partially observed datasets and their respective sizes. \\

$\pib,\pib^f$
& True joint measure and the joint measure induced by predictor $f$. \\

$\pib_l,\pib_u^f,\pib_{ul}^f$
& Empirical measures for fully observed, imputed partially observed, and imputed mixture datasets. \\

$\Fc,\Gc_{nn},\Hc$
& Function classes for the predictor, neural net distance, and composed functions. \\

$d_{\Gc_{nn}}(\cdot,\cdot)$
& Neural net distance induced by $\Gc_{nn}$. \\

$R(f)$
& Population NND, i.e., $d_{\Gc_{nn}}(\pib,\pib^f)$. \\

$\Lc,\epsilon_{\Lc}$
& Supervised loss and its empirical training value. \\

$\hat{\mathfrak{R}}_N(\cdot)$
& Empirical Rademacher complexity on $N$ samples. \\

$\epsilon_d$
& Optimality gap of the NND penalty. \\

$\xi,\hat{\xi}$
& Distribution-compatibility discrepancy and its empirical estimate. \\

$\Delta,\hat{\Delta}$
& Concentration constant and its empirical estimate. \\

$DG(f,g)$
& Duality gap used to upper-bound $\epsilon_d$. \\
\bottomrule
\end{tabular}
\end{table}

\begin{figure*}
    \centering
    \includegraphics[width=0.9\linewidth]{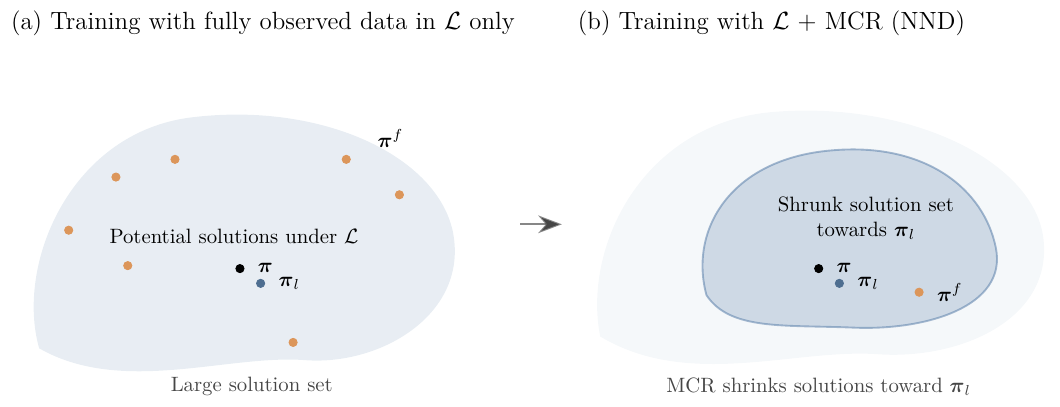}
    \caption{An intuitive illustration of the effect of measure consistency regularization (MCR). 
    When MCR is absent, minimizing the supervised loss based on the empirical measure $\pib_l$ of fully observed data can admit a large set of solutions, making the imputed measure induced by the learned function more likely to deviate from the true measure $\pib$. By contrast, enforcing MCR encourages consistency between the imputed measure and $\pib_l$, narrowing the solution set and, in turn, increasing the likelihood that the imputed measure remains close to the true measure.
}
    \label{fig:overview}
\end{figure*}

\section{Measure Consistency Regularization with Neural Net Distance}\label{sec:pre}

\subsection{Imputation without Regularization}

Imputation without regularization is essentially a supervised learning task to learn a prediction function $f^*: \Xc \rightarrow \Zc$ such that
\begin{equation*}
    f^* \in \underset{f \in \Fc}{\arginf} \left\{ \pib \Lc^f\right\},
\end{equation*}
where $\Lc^f$ denotes the loss $\Lc$ evaluated on the predictions of $f$. Because the true distribution $\pib$ is unavailable in practice, the common approach is empirical risk minimization (ERM), which yields the proxy solution
\begin{equation}\label{eq:objective}
    f^*_l \in \underset{f \in \Fc}{\arginf}\left\{ \pib_l \Lc^f \right\},
\end{equation}
based on the empirical distribution $\pib_l$ of fully observed data.  

The distinction between imputation and standard supervised learning lies in the availability of a partially observed dataset $\Db_u$. In much of the imputation literature, $\Db_u$ has traditionally been treated like a test set, i.e., it is not directly used to train $f$, and methodological advances have primarily focused on tailoring the function class $\Fc$ to specific structures while adhering to the ERM framework in Eq.~\eqref{eq:objective} \cite{tran2017missing,smieja2018processing, tu2022cross,sun2024redcore}. For example, convolutional filters are often incorporated for image data, and attention-based Transformer blocks are often used for language data.  

{\color{black}
The setting is also closely related to domain adaptation and semi-supervised learning (SSL). Unlike domain adaptation, we assume that the $\xb$ components of $\Db_l$ and $\Db_u$ follow the same distribution. On the other hand, in SSL, $\zb$ commonly represents an unobserved categorical label, although structured-output extensions also exist \cite{van2020survey}. Popular entropy and consistency regularizers often constrain the prediction confidence or the pointwise stability on $\Db_u$ \cite{berthelot2019mixmatch, sohn2020fixmatch,huang2024flatmatch}; they do not generally align $\pib_l$ and $\pib_u^f$. Instead, MCR constrains these empirical measures to be close, and therefore, it applies directly to continuous, vector-valued, and multimodal targets. Thus, MCR belongs to the broader SSL paradigm but represents a distinct measure-level regularization principle.
}

\subsection{Measure Consistency Regularization}
Intuitively, if a prediction function $f$ is well learned, the distribution of the imputed version of the partially observed dataset $\Db_u$ should resemble that of the fully observed dataset $\Db_l$. To promote this similarity, we can leverage a regularization mechanism that enforces consistency between the empirical measures of $\Db_l$ and the imputed version of $\Db_u$. We refer to this approach as \textit{measure consistency regularization} ({\bf MCR}). Formally, MCR augments the ERM objective in Eq.~\eqref{eq:objective} with an additional regularization term as
\begin{equation}\label{eq:MCR}
    f^*_u \in \underset{f \in \Fc}{\arginf} \left\{ \pib_l \Lc^f + \lambda_d r(\pib_l,\pib_u^f)\right\},
\end{equation}
where $r(\cdot,\cdot)$ denotes a discrepancy measure between two probability distributions, and $\lambda_d$ is a tuning parameter controlling the strength of regularization. Note that the subscript in $f^*_u$ emphasizes the use of the partially observed data $\Db_u$ in training. Under specific loss and discrepancy measure designs, the general form \eqref{eq:MCR} recovers several empirically successful training schemes for imputation \cite{yoon2018gain,muzellec2020missing} and semi-supervised learning \cite{taherkhani2021self,wang2019semi}.

Discrepancy measures between probability distributions are most commonly drawn from two broad families: integral probability metrics (IPMs) and \(f\)-divergences. Notable IPMs include the Wasserstein distance, maximum mean discrepancy (MMD), and total variation distance, while prominent \(f\)-divergences include the Kullback--Leibler divergence, Jensen--Shannon divergence, and squared Hellinger distance. 

Fig. \ref{fig:overview} provides a high-level overview of the intuition behind MCR. Without MCR, minimizing the supervised loss based on the  empirical measure $\pib_l$ of fully observed data admits a large set of solutions, making the imputed measure induced by the learned function likely to deviate from the true measure $\pib$. By enforcing MCR, the learning process encourages the imputed measure to align with $\pib_l$, thereby shrinking the solution set and increasing the likelihood that the imputed measure remains close to the true measure.

\subsection{Neural Net Distance for MCR}
In this work, we focus on the IPM family for measure consistency regularization. The IPM defines a discrepancy between two probability measures $\mu$ and $\nu$ with respect to a function class $\Gc$ as
\begin{equation*}
    \texttt{IPM}_{\Gc}(\mu,\nu) := \underset{g \in \Gc}{\sup} \; \big|\mu g - \nu g\big|.
\end{equation*}
Several well-known probability metrics are recovered as special cases of IPM: kernel maximum mean discrepancy (MMD) when $\Gc$ is a reproducing kernel Hilbert space (RKHS) \cite{ASENS_1953_3_70_3_267_0}, the $1-$Wasserstein distance when $\Gc$ is the set of $1-$Lipschitz functions \cite{kantorovich1958space}, and the total variation distance when $\Gc$ consists of indicator functions \cite[Chapter~3]{rachev2013methods}.  

Compared to the other widely used class of \(f\)-divergences, IPMs possess several desirable properties. They are symmetric \cite{villani2008optimal} and robust under support mismatch \cite{arjovsky2017wasserstein}. Moreover, many IPMs admit estimators with linear computational complexity in the number of samples. This contrasts with certain optimal transport metrics closely related to IPMs, such as the quadratic-cost Wasserstein distance \cite{zhao2023transformed}, Sinkhorn distance \cite{muzellec2020missing}, and masked optimal transport (MOT) \cite{wu2023jointly}, whose computational costs typically scale quadratically with sample size.  

Despite these advantages, direct computation of IPMs can be difficult because the function class $\Gc$ rarely yields analytical solutions. Neural net distance (NND) provides a practical surrogate by restricting $\Gc$ to neural network architectures \cite{pmlr-v70-arora17a}. Specifically, the discrepancy between measures $\mu$ and $\nu$ is defined as
\begin{equation}\label{dgnn}
    d_{\Gc_{nn}}(\mu,\nu) := \underset{g \in \Gc_{nn}}{\sup}\; \big|\mu g - \nu g\big|,
\end{equation}
where $\Gc_{nn}$ denotes the class of neural networks. One can show that the above definition automatically satisfies the properties of metrics, such as non-negativity, symmetry, and triangle inequality. Moreover, it is discriminative, i.e., $d_{\Gc_{nn}}(\mu,\nu)=0$ if and only if $\mu=\nu$ for sufficiently rich $\Gc_{nn}$ (e.g., RKHS unit ball with characteristic kernels, and Lipschitz functions) \cite{zhang2017discrimination}, therefore making it a proper distance metric as well. The maximization problem in Eq.~\eqref{dgnn} is typically solved with gradient-based optimization. In the context of Generative Adversarial Networks (GANs), the function $g$ is often interpreted as the discriminator \cite{arjovsky2017wasserstein, gulrajani2017improved}.  

The neural net distance thus defines an optimization-friendly metric that integrates seamlessly into the MCR framework in Eq.~\eqref{eq:MCR}. The resulting formulation is
\begin{equation}\label{eq:ndp}
    f^*_u \in \underset{f \in \Fc}{\arginf} \left\{ \pib_l \Lc^f + \lambda_d d_{\Gc_{nn}}(\pib_l, \pib^f_u)\right\},
\end{equation}
where $d_{\Gc_{nn}}$ is defined in Eq.~\eqref{dgnn} with $g \in \Gc_{nn}: \Xc \times \Zc \rightarrow \R$, and $\lambda_d$ controls the strength of the regularization.  

In the remainder of this work, we develop our theoretical analysis specifically for MCR with neural net distance, owing to its generality, flexibility, and practical tractability.

\section{Literature Review} \label{sec:literature}

\textbf{I) Measure Consistency Regularization:} There exist several strands of literature that employ the idea of enforcing measure consistency between the imputed data and observed data. For example, \cite{yoon2018gain} proposes to use discriminators to differentiate the observed and imputed portion of each data point. \cite{yoon2020gamin} further introduces observed data supervision to the work of \cite{yoon2018gain}. \cite{muzellec2020missing,wu2023jointly, zhao2023transformed} use second-order optimal transport to enforce the measure consistency. 

{\color{black} MCR is also related to SSL regularization. Many prominent SSL methods use entropy minimization, pseudo-labeling, or augmentation consistency for categorical outputs \cite{berthelot2019mixmatch,sohn2020fixmatch, huang2024flatmatch}. These objectives constrain individual predictions but do not generally compare the labeled and imputed joint measures. Distribution alignment methods based on optimal transport or IPMs \cite{taherkhani2020transporting,taherkhani2021self}, in contrast, can be viewed as classification-oriented instances of the MCR framework. Related distributionally robust SSL formulations use Wasserstein ambiguity sets to analyze learning from unlabeled data under distributional perturbations \cite{najafi2019robustness}; unlike MCR, their objective is to achieve worst-case robustness rather than direct agreement between the observed and imputed empirical measures.

Our objective is not to establish MCR as universally preferable to other SSL frameworks. These objectives encode different structural assumptions, and a unified comparison would require separate population, finite-sample, and optimization analyses. We leave such a comparison as an interesting direction for future work. Most importantly, despite substantial theoretical work on related SSL and distributionally robust formulations, the finite-sample role of the specific observed-versus-imputed measure-consistency objective considered here remains less understood.}

\textbf{II) Neural Net Distance:} Neural net distance, introduced in \cite{pmlr-v70-arora17a}, is an approximation to integral probability metrics (IPMs), where one uses a class of neural networks as discriminators to measure discrepancy between two distributions. Since then, it has been used extensively in generative adversarial frameworks \cite{arjovsky2017wasserstein, gulrajani2017improved}. Theoretical analyses have examined (i) the convergence of empirical neural-net distances to the true population IPM under capacity constraints \cite{pmlr-v70-arora17a, ji2018minimax}, and (ii) generalization in GANs---i.e.\ how a small neural‐net distance between generated and true distributions implies closeness in some other statistical sense \cite{ji2021understanding, liang2017well, zhang2017discrimination}.

Beyond generation, neural-net distance (and more generally, IPMs) are natural regularizers for imputation/predictive tasks\cite{long2018conditional, li2020maximum}: when imputing missing features, enforcing closeness between the imputed and fully observed data distributions under a neural-net distance can help promote plausible imputations. While many works use adversarial losses (GANs) or discriminators in this spirit (e.g., GAIN \cite{yoon2018gain}), few have explicitly used neural-net or IPM \textit{theory} to understand when such regularization helps in imputation. 

In light of these, our contribution leverages neural-net distance / IPM‐style regularization as part of a \textit{unifying, domain-agnostic} theoretical framework for measure consistency regularization. We aim to show under what conditions such regularizers yield a tighter finite-sample estimation-error upper bound, rather than focusing solely on empirical evidence, as many prior works do, and how this applies across different modalities (tabular, image, multi‐modal, etc.).

\textbf{III) Imputation in Modern Landscape:} Our target problem formulation is in line with imputation tasks under structured missingness \cite{mitra2023learning,cheng2024collaboratively} from a mathematical standpoint. It is more commonly referred to as cross-modal or multimodal imputation \cite{tran2017missing, ma2021smil,wu2023jointly, kang2023cm, sun2024redcore} in recent literature, where the missingness happens to a specific data modality. Single-cell data analysis is one of the most active communities studying imputation problems, mainly due to profiling technology limitations. Recent advances in single-cell data imputation have focused on the modification of the latent space in variational autoencoders \cite{tu2022cross,cohen2023joint,heumos2023best,gayoso2021joint,gong2021cobolt, ashuach2023multivi}. However, all of these techniques only use information from the fully observed dataset in learning the prediction function, and they do not take advantage of the partially observed dataset. 

We note that there exist a number of studies that successfully incorporate the partially observed data with measure consistency regularization. \cite{yoon2018gain,yoon2020gamin} use discriminators to force the generators to impute believable data for partially observed data. \cite{muzellec2020missing, wu2023jointly, zhao2023transformed} apply optimal transport to enforce measure consistency between observed data and imputed data, the complexity of which scales quadratically with respect to the number of fully observed and partially observed data.

\section{Finite-Sample Estimation-Error Analysis of Measure Consistency Regularization}\label{sec:EstTheory}

We analyze the estimation error strictly in terms of NND, a common choice for literature in domain adaptation and generative modeling \cite{ji2021understanding, pmlr-v70-arora17a, zhang2017discrimination, liang2017well, ji2018minimax, zhang2012generalization}. Specifically, we define the generalization error $R(f)$ of a prediction function $f: \Xc \rightarrow \Zc$ with respect to the NND function class $g \in \Gc_{nn}$ as
\begin{equation}\label{eq:risk}
    R(f) := d_{\Gc_{nn}}(\pib,\pib^f) = \underset{g \in \Gc_{nn}}{\sup}\; \big|\pib g - \pib^f g\big|.
\end{equation}

{\color{black}
In the well-specified deterministic setting, this population criterion is identifying. Specifically, if $\zb=f_0(\xb)$ almost surely and $\Gc_{nn}$ is discriminative, then $R(f)=0\Longleftrightarrow\pib=\pib^f \Longleftrightarrow f(\xb)=f_0(\xb)$ almost surely. Thus, the joint-measure agreement constitutes a meaningful population target for imputation. A detailed explanation on this property is provided in the supplementary material.
}

 Since MCR does not alter the architecture of the prediction model (i.e., the function class $\Fc$), the best achievable generalization error is always ${\inf}_{f \in \Fc} d_{\Gc_{nn}}(\pib,\pib^f)$ \cite{ji2021understanding}. Consequently, the key quantity of interest is the {\it estimation error}, defined for a learned function $\hat{f}$ as
\begin{equation}\label{eq:est}
    R(\hat{f}) - \underset{f \in \Fc}{\inf}\;R(f) 
    = d_{\Gc_{nn}}(\pib,\pib^{\hat{f}}) - \underset{f \in \Fc}{\inf} \; d_{\Gc_{nn}}(\pib,\pib^f),
\end{equation}
which quantifies the gap MCR seeks to reduce.

\subsection{Ideal Regime}

We first derive the estimation error bound under ideal regime, where the learned functions exactly minimize the corresponding loss functions in Eq.~\eqref{eq:objective} and Eq.~\eqref{eq:ndp}, {\color{black} and the partially observed data is compatible with the fully observed data in the sense of Assumption~\ref{assump:4}}. We impose the following assumptions.  

\begin{assumption}\label{assump:1}
The diameters of feature space $\Xc$, target space $\Zc$, and their joint space, are bounded with $B_x$, $B_z$ and $B_{xz}$, respectively, with respect to $L_2$-norm. Accordingly, prediction functions $f:\Xc \to \Zc$ are bounded by $B_z$. In addition, all functions $g \in \Gc_{nn}$ are assumed to be Lipschitz continuous, with a constant at most $B_g$, within a symmetric class $\Gc_{nn}$ ($g \in \Gc_{nn} \Leftrightarrow -g \in \Gc_{nn}$). 
\end{assumption}

 We note that the above assumption applies to almost any neural network architecture (with bounded weights after training \cite{fazlyab2019efficient}) and finite datasets \cite{akidau2015dataflow}. We next introduce the
interpolation and compatibility conditions used in the
ideal regime analysis.

\begin{assumption}\label{assump:2}
Let $\Fc_1 := \{f: f\in\arginf_{f \in \Fc} \pib_l \Lc^f\}$ denote the minimizer set of the prediction loss on the fully observed data, and let  $\Fc_2 := \{f: f\in\arginf_{f \in \Fc} d_{\Gc_{nn}}(\pib_l, \pib^f_l)\}$ be the minimizer set of the NND loss on the fully observed data. We assume perfect fit exists on $\Db_l$, such that $\Fc_1 = \Fc_2 = \Fc^*$, where $\Fc^* := \{f: \zb = f(\xb), \; \forall (\xb,\zb) \in \Db_l\}$.
\end{assumption}

{\color{black}
\begin{assumption}\label{assump:3}
Let $\Fc_3
:= \{f: f\in\arginf_{f \in \Fc} d_{\Gc_{nn}}\!\left(\pib_l,\pib_u^f\right)\}$ 
denote the minimizer set of the NND between the 
empirical measure of fully observed data and that of the imputed partially observed data. We assume that an interpolating minimizer exists, i.e.,  
$\Fc^*\cap\Fc_3\neq\emptyset.$
\end{assumption}

\begin{assumption}\label{assump:4}
Recall the empirical measure of mixture dataset 
\[
\pib_{ul}^f
=
\frac{n}{n+m}\pib_l^f
+
\frac{m}{n+m}\pib_u^f,
\]
and let
$\Fc_4
:= \{f: f\in \arginf_{f\in\Fc}
d_{\Gc_{nn}}\!(\pib_l,\pib_{ul}^f)\}.$ 
We assume that an interpolating mixture minimizer exists: $\Fc^*\cap\Fc_4\neq\emptyset.$
\end{assumption}

Assumption~\ref{assump:2} holds when $\Lc$ is point-identifying, $\Gc_{nn}$ is discriminative, and $\Fc$ can interpolate $\Db_l$. Concretely, it is shown in \cite{yun2019small} that a two-hidden-layer ReLU network can interpolate $N$
arbitrary input-output pairs with $d_z$-dimensional outputs when its widths $w_1,w_2$ satisfy 
\[
4\Big\lfloor\frac{w_1}{4}\Big\rfloor
\Big\lfloor\frac{w_2}{4d_z}\Big\rfloor\geq N.
\]
Taking $N=n$ supports Assumption~\ref{assump:2}, while $N=n+m$ allows the network to fit $\Db_l$ while preserving prescribed predictions on $\Db_u$, as required by
Assumption~\ref{assump:3}. Moreover, on a compact domain, the symmetric class of single-neuron ReLU discriminators is already discriminative \cite{zhang2017discrimination}. These conditions may fail with insufficient predictor capacity or conflicting repeated inputs. Assumption~\ref{assump:4} is an ideal mixture-compatibility condition that need not hold exactly in practice and will be relaxed in the subsequent {\color{black} non-ideal} analysis (Section \ref{sec:nonideal}).

\begin{theorem}\label{the:main}
Suppose Assumptions~\ref{assump:1}--\ref{assump:4} hold. 
Let $\hat{\mathfrak{R}}_n(\Gc_{nn})$ denote the empirical 
Rademacher complexity of $\Gc_{nn}$ on $n$ samples, and let
$\Hc := \{h(\xb)=g(\xb,f(\xb)): g\in\Gc_{nn}, f\in\Fc\}$. Then, 
with probability at least $1-6\delta$, 
the following bounds hold:
\begin{enumerate}
\item For a function $f_l^*\in\Fc$ learned with $n$ fully observed
samples without MCR through Eq.~\eqref{eq:objective}, we have
\begin{equation}
\label{eq:ideal_loss}
\begin{aligned}
R(f_l^*)-\inf_{f\in\Fc}R(f)
\leq{}&
4\hat{\mathfrak{R}}_n(\Gc_{nn})
+
2\hat{\mathfrak{R}}_n(\Hc)
+
\frac{3\Delta}{2\sqrt n},
\end{aligned}
\end{equation}
where
$\Delta:=3B_gB_{xz}\sqrt{2\ln\delta^{-1}}$.

\item For a function $f_u^*\in\Fc$ learned with $n$ fully observed
and $m$ partially observed samples with MCR through
Eq.~\eqref{eq:ndp}, we have
\begin{equation}
\label{eq:ideal_ndp}
\begin{aligned}
&R(f_u^*)-\inf_{f\in\Fc}R(f)
\leq{}
4\hat{\mathfrak{R}}_n(\Gc_{nn})
+
\frac{\Delta}{\sqrt n}\\
&+
\min\Bigg\{
2\hat{\mathfrak{R}}_n(\Hc)
+
\frac{\Delta}{2\sqrt n},
4\hat{\mathfrak{R}}_{m+n}(\Hc)
+
\frac{\Delta}{\sqrt{m+n}}
\Bigg\}.
\end{aligned}
\end{equation}

\end{enumerate}
\end{theorem}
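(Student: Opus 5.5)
The plan is a standard triangle-inequality decomposition of the estimation error in NND, combined with uniform concentration via Rademacher complexity and McDiarmid's inequality. The idealized assumptions are used to make the optimization terms vanish. Let $\bar f\in\Fc$ be any competitor; at the end I take the infimum over $\bar f$. For any learned $\hat f\in\Fc^*$ (both $f_l^*$ and $f_u^*$ will be shown to lie in $\Fc^*$), I write
\begin{equation*}
R(\hat f)\le d_{\Gc_{nn}}(\pib,\pib_l)+d_{\Gc_{nn}}(\pib_l,\pib_l^{\hat f})+d_{\Gc_{nn}}(\pib_l^{\hat f},\pib^{\hat f}).
\end{equation*}
The middle term vanishes because $\hat f$ interpolates $\Db_l$, so $\pib_l^{\hat f}=\pib_l$. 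Symmetrically, $R(\bar f)\ge d_{\Gc_{nn}}(\pib_l,\pib_l^{\bar f})-d_{\Gc_{nn}}(\pib,\pib_l)-d_{\Gc_{nn}}(\pib_l^{\bar f},\pib^{\bar f})$. Moreover, $d_{\Gc_{nn}}(\pib_l,\pib_l^{\bar f})\ge0=\min$ by Assumption~\ref{assump:2}.

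Subtracting gives $R(\hat f)-R(\bar f)\le 2\,d_{\Gc_{nn}}(\pib,\pib_l)+2\sup_{f}d_{\Gc_{nn}}(\pib^f,\pib_l^f)$. The first supremum is over $\Gc_{nn}$ and is bounded by $2\hat{\mathfrak R}_n(\Gc_{nn})+\Delta/(2\sqrt n)$-type terms via symmetrization and McDiarmid; the bounded differences come from Assumption~\ref{assump:1}, with $|g(a)-g(b)|\le B_gB_{xz}$. The second supremum is over $\Hc$, since $\pib^f g-\pib_l^f g=\E h-\frac1n\sum h(\xb_i)$, and is bounded by $2\hat{\mathfrak R}_n(\Hc)+\cdots$. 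Converting expected Rademacher complexity to empirical Rademacher complexity costs another McDiarmid step, which accounts for the $3$ inside $\Delta$ and the union bound giving $1-6\delta$. Tracking constants should produce $4\hat{\mathfrak R}_n(\Gc_{nn})+2\hat{\mathfrak R}_n(\Hc)+3\Delta/(2\sqrt n)$; the $\Gc_{nn}$ term appears twice and the $\Hc$ term once, since the competitor's sample term can be handled by a one-sided bound for fixed $\bar f$, or by the sup if needed. This yields part 1, with $f_l^*\in\Fc_1=\Fc^*$.

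For part 2, first show $f_u^*\in\Fc^*\cap\Fc_3$. By Assumptions~\ref{assump:2} and~\ref{assump:3}, some $f$ simultaneously minimizes $\pib_l\Lc^f$ and $d_{\Gc_{nn}}(\pib_l,\pib_u^f)$, so any minimizer of the sum must minimize both terms. The bound then becomes a minimum of two bounds. The first is the part-1 bound, with the same argument since $f_u^*\in\Fc^*$. The second uses the mixture: because $f_u^*$ interpolates $\Db_l$ and minimizes $d(\pib_l,\pib_u^f)$, the measure $\pib_{ul}^{f_u^*}=\frac{n}{n+m}\pib_l+\frac{m}{n+m}\pib_u^{f_u^*}$ satisfies $d(\pib_l,\pib_{ul}^{f_u^*})=\frac{m}{n+m}d(\pib_l,\pib_u^{f_u^*})$. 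This is minimal over $\Fc^*$, and via Assumption~\ref{assump:4} it is minimal over $\Fc$, i.e., $f_u^*\in\Fc_4$.

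I then redo the decomposition with $\pib_{ul}^f$ in place of $\pib_l^f$:
\begin{equation*}
R(f_u^*)-R(\bar f)\le 2\,d(\pib,\pib_l)+d(\pib_l,\pib_{ul}^{f_u^*})-d(\pib_l,\pib_{ul}^{\bar f})+2\sup_f d(\pib^f,\pib_{ul}^f).
\end{equation*}
The middle difference is $\le0$ by $\Fc_4$-optimality. The last supremum concentrates over the $n+m$ i.i.d.\ $\xb$'s, since the $\xb$-marginals agree, giving $4\hat{\mathfrak R}_{m+n}(\Hc)+\Delta/\sqrt{m+n}$.

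The main obstacle is the bookkeeping of the probability events and constants so that both branches of the minimum and part 1 hold simultaneously with total failure probability $6\delta$. I will allocate one McDiarmid event each to $\Gc_{nn}$ on $n$ samples, $\Hc$ on $n$ samples, and $\Hc$ on $n+m$ samples, plus one Rademacher-conversion event each, giving six events. I will then verify that the stated coefficients ($3\Delta/2$ versus $\Delta+\Delta/2$) match. A secondary subtlety is justifying that minimizers of the sum lie in the intersection, which needs the intersections to be nonempty, exactly as provided by the assumptions.
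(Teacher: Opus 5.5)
\textbf{Gap in Part 1.} Your architecture matches the paper's. You decompose around $\pib_l$, concentrate $d_{\Gc_{nn}}(\pib,\pib_l)$ and the $\Hc$-indexed deviations, show $f_u^*\in\Fc^*\cap\Fc_3$, and then obtain $f_u^*\in\Fc_4$ via the $\frac{m}{m+n}$ reduction. Your mixture branch is essentially the paper's argument. The Part 1 bookkeeping, however, has a real gap.

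Your competitor lower bound $R(\bar f)\ge d_{\Gc_{nn}}(\pib_l,\pib_l^{\bar f})-d_{\Gc_{nn}}(\pib,\pib_l)-d_{\Gc_{nn}}(\pib_l^{\bar f},\pib^{\bar f})$ becomes a problem once you discard $d_{\Gc_{nn}}(\pib_l,\pib_l^{\bar f})\ge 0$. You are left with $2d_{\Gc_{nn}}(\pib,\pib_l)+2\sup_f d_{\Gc_{nn}}(\pib_l^f,\pib^f)$. That yields $4\hat{\mathfrak{R}}_n(\Gc_{nn})+4\hat{\mathfrak{R}}_n(\Hc)+2\Delta/\sqrt n$, not the stated $4\hat{\mathfrak{R}}_n(\Gc_{nn})+2\hat{\mathfrak{R}}_n(\Hc)+3\Delta/(2\sqrt n)$.

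Your proposed patch does not work. For fixed $\bar f$, the quantity $d_{\Gc_{nn}}(\pib_l^{\bar f},\pib^{\bar f})$ is still a supremum over $\Gc_{nn}$. It therefore still costs a Rademacher term plus another $\Delta/(2\sqrt n)$, and either an extra probability event or a reuse of the $\Hc$ event that doubles its contribution. Symmetry of $\Gc_{nn}$ makes the one-sided and two-sided suprema coincide, so a ``one-sided bound'' gains nothing. Note also that this lower bound, after the discard, is weaker than the trivial $R(\bar f)\ge 0$.

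\textbf{The fix.} Never pass through the empirical image $\pib_l^{\bar f}$ of the competitor. Compare $\pib_l$ directly with the population measure $\pib^{\bar f}$:
\[
R(\bar f)\ge d_{\Gc_{nn}}(\pib_l,\pib^{\bar f})-d_{\Gc_{nn}}(\pib,\pib_l)\ge -d_{\Gc_{nn}}(\pib,\pib_l).
\]
This is exactly the paper's Term \textcircled{3}, combined with dropping $\inf_f d_{\Gc_{nn}}(\pib_l,\pib^f)\ge 0$ in Term \textcircled{2}. It gives
\[
R(\hat f)-R(\bar f)\le 2\,d_{\Gc_{nn}}(\pib,\pib_l)+d_{\Gc_{nn}}(\pib_l^{\hat f},\pib^{\hat f}),
\]
which is precisely $4\hat{\mathfrak{R}}_n(\Gc_{nn})+\Delta/\sqrt n$ plus a single copy of $2\hat{\mathfrak{R}}_n(\Hc)+\Delta/(2\sqrt n)$, within your six-event budget. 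Using $R(\bar f)\ge 0$ outright gives an even smaller bound that implies the stated one.

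The same repair is needed for the first branch of the minimum in Part 2, which you import from Part 1. In the mixture branch, by contrast, your two-sided comparison through $\pib_{ul}^{\bar f}$ is genuinely necessary. There $d_{\Gc_{nn}}(\pib_l,\pib_{ul}^{f_u^*})$ need not vanish and must be cancelled by $\Fc_4$-optimality, so the doubled term $4\hat{\mathfrak{R}}_{m+n}(\Hc)+\Delta/\sqrt{m+n}$ is correct and matches the paper. Your explicit argument that any minimizer of the sum lies in $\Fc^*\cap\Fc_3$ (for $\lambda_d>0$) is a useful detail the paper only asserts.
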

}

{\color{black} Theorem~\ref{the:main} compares the high-probability estimation-error upper bounds of training with and without MCR under the ideal conditions in  Assumptions~\ref{assump:1}--\ref{assump:4}. The two bounds share the terms $4\hat{\mathfrak{R}}_n(\Gc_{nn})+\Delta/\sqrt n$.
Because an ideal MCR solution also interpolates the fully observed sample, the same $n$-sample term appearing in the vanilla bound always holds. Assumption~\ref{assump:4} provides the alternative $(m+n)$-sample bound. Consequently, the MCR upper bound is the shared term plus the minimum of these two alternatives and is therefore never larger than the vanilla upper bound under the stated ideal conditions.

The inequality is strict whenever
\[
4\hat{\mathfrak{R}}_{m+n}(\Hc)
+\frac{\Delta}{\sqrt{m+n}}
<
2\hat{\mathfrak{R}}_n(\Hc)
+\frac{\Delta}{2\sqrt n}.
\]
Therefore, increasing $m$ can tighten the finite-sample upper bound, although it neither guarantees a smaller realized error nor changes the asymptotic order in $n$. Under the usual scaling of the complexity as $\hat{\mathfrak{R}}_n(\Hc)=C_{\Hc}n^{-1/2}$, the MCR upper bound becomes smaller when $m>3n$. The reductions in both the complexity term and the associated $\Delta$ term are general statistical benefits shared with semi-supervised learning, consistent with Rademacher-complexity analyses showing that unlabeled samples can tighten estimation error bounds, including for vector-valued outputs \cite{oneto2015local,li2023semi}. For neural networks with bounded spectral norms, explicit architecture-dependent bounds follow from standard Rademacher-complexity estimates \cite{bartlett2017spectrally}.
}

\subsection{{\color{black} Non-ideal} Regime}\label{sec:nonideal}

{\color{black} The previous subsection considers an ideal regime in which the training objectives are optimized exactly, and the interpolation and compatibility conditions in Assumptions~\ref{assump:2}--\ref{assump:4} hold. We now consider the non-ideal regime, allowing non-zero supervised training error and an optimization gap in the MCR objective. Importantly, the analysis in this subsection no longer requires Assumptions~\ref{assump:2}--\ref{assump:4}. Our goal is to extend the previous theoretical results to this practical scenario to later provide algorithmic insights on handling the optimality gap (Section \ref{sec:handle}) and demonstrate the potential practical benefit of training with MCR.}

To this end, we first introduce a notion of \emph{loss function dominance} as follows.

\begin{definition}
Given the function class $\Gc$, a loss function $\Lc$ defined on the dataset $\Db={\{(\xb_i, \zb_i)\}}_{i=1}^n$ (with empirical distribution $\pib_l$), and a prediction function $f$, we say that $d_{\Gc}$ is $(C,\alpha)$-dominated by $\Lc$ if
\begin{equation*}
    d_{\Gc}(\pib_l, \pib_l^f) \leq C (\pib_l\Lc^f)^{\alpha},
\end{equation*}
for some constants $C,\alpha \in \R^+$, $\forall \Db \in (\Xc, \Zc)^n$ and $\forall f \in \Fc$.
\end{definition}

This definition is akin to popular bounding relationships between probability metrics \cite{gibbs2002choosing} and H\"older equivalence in metric studies \cite{prandi2014holder, mitchell1985carnot}, extended towards general loss functions. When $\Lc$ is a distance metric in the Euclidean space, this definition resembles classical smoothness conditions such as Lipschitz or Hölder continuity. As a prominent example, the distance generated by $\Gc_{nn}=\{\,g:\, g \text{ is $1$-Lipschitz }\}$ is $(1,1)$-dominated by the mean-absolute error and $(1, 0.5)$-dominated by the mean-squared error. 

This motivates considering the following condition under the non-ideal regime.   

\begin{assumption}\label{assump:5}
We assume that the primary loss function $\Lc$ in Eqs.~\eqref{eq:objective} and \eqref{eq:ndp} is non-negative with minimum value of $0$ for simplicity. The neural net class $\Gc_{nn}$ is chosen such that the induced NND is $(C,\alpha)$-dominated by $\Lc$, i.e., for any empirical distribution $\pib_l$ and prediction function $f$, we have
\begin{equation*}
    d_{\Gc_{nn}}(\pib_l, \pib_l^f) \leq C (\pib_l \Lc^f)^{\alpha},
\end{equation*}
for some constants $C,\alpha \in \R^+$. Furthermore, in the non-ideal regime, a learned function $f_l \in \Fc$ admits $\pib_l\Lc^{f_l} = \epsilon_{\Lc} \geq 0$.
\end{assumption}

In addition, the MCR framework introduces a potential optimization gap for the regularization term, motivating the following assumption.  

\begin{assumption}\label{assump:6}
For a predictor $f_u$ learned with MCR and the associated neural distance function 
\begin{equation*}
    g_u \in \underset{g \in \Gc_{nn}}{\arg\sup} \big| \pib_l g - \pib_u^{f_u} g \big|,
\end{equation*}
 we assume that $f_u$ achieves the same optimality gap on $\Lc$ as the baseline function $f_l$, i.e., 
\begin{equation*}
    \pib_l\Lc^{f_u} = \epsilon_{\Lc} \geq 0,
\end{equation*}
and that the penalty term admits an additional  gap defined as
\begin{equation}\label{eq:gap}
    \epsilon_d := d_{\Gc_{nn}}(\pib_l, \pib_u^{f_u}) - \underset{f \in \Fc}{\inf} \; d_{\Gc_{nn}}(\pib_l,\pib_u^f) \geq 0.
\end{equation}
\end{assumption}
In the above, $\epsilon_d$ can be thought of as the optimality gap in solving the minimization problem $d_{\Gc_{nn}}(\pib_l,\pib_u^f)$ over $f \in \Fc$  via a numerical solver. We can now establish the estimation error bound in the {\color{black} non-ideal} scenario.  

{\color{black} \begin{theorem}\label{the:imperfect}
Suppose Assumptions~\ref{assump:1}, \ref{assump:5},
and~\ref{assump:6} hold. Using the same notation as in
Theorem~\ref{the:main}, each of the following bounds holds
with probability at least $1-4\delta$: \begin{enumerate}
\item For $f_l\in\Fc$ trained on $n$ fully observed samples
without MCR,
\begin{equation}\label{eq:imperfect_loss}
\begin{aligned}
R(f_l)-\inf_{f\in\Fc}R(f)
\leq{}&
4\hat{\mathfrak{R}}_n(\Gc_{nn})
+
2\hat{\mathfrak{R}}_n(\Hc)
\\
&+
\frac{3\Delta}{2\sqrt n}
+
C(\epsilon_{\Lc})^\alpha ,
\end{aligned}
\end{equation}
where
$\Delta:=3B_gB_{xz}\sqrt{2\ln\delta^{-1}}$.

\item For $f_u\in\Fc$ trained on $n$ fully observed and $m$
partially observed samples with MCR,
\begin{equation}\label{eq:imperfect_ndp}
\begin{aligned}
R(f_u)-\inf_{f\in\Fc}R(f)
\leq &
4\hat{\mathfrak{R}}_n(\Gc_{nn})
+
2\hat{\mathfrak{R}}_{m+n}(\Hc)
\\
+&
\Delta\left(
\frac{1}{\sqrt n}
+
\frac{1}{2\sqrt{m+n}}
\right)
\\
+&
\frac{n}{m+n}C(\epsilon_{\Lc})^\alpha
+
\frac{m}{m+n}(\epsilon_d+\xi),
\end{aligned}
\end{equation}
where
\[
\xi:=
\inf_{f\in\Fc}
d_{\Gc_{nn}}(\pib_l,\pib_u^f).
\]

\end{enumerate}
\end{theorem}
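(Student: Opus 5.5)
Both parts rest on the standard uniform-deviation decomposition behind Theorem~\ref{the:main}; the only new work is to replace the exact-minimizer arguments by the quantitative slack terms allowed in Assumptions~\ref{assump:5} and~\ref{assump:6}. Fix any $f\in\Fc$ and use the triangle inequality for $d_{\Gc_{nn}}$:
\begin{equation*}
R(\hat f)-R(f)\le d_{\Gc_{nn}}(\pib,\pib_l)+d_{\Gc_{nn}}(\pib_l,\hat\pib^{\hat f})+d_{\Gc_{nn}}(\hat\pib^{\hat f},\pib^{\hat f})-R(f),
\end{equation*}
where $\hat\pib^{\hat f}$ is the relevant empirical imputed measure. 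This is $\pib_l^{\hat f}$ in the vanilla case and $\pib_{ul}^{\hat f}$ in the MCR case. Then bound $R(f)\ge d_{\Gc_{nn}}(\pib_l,\hat\pib^f)-d_{\Gc_{nn}}(\pib,\pib_l)-d_{\Gc_{nn}}(\hat\pib^f,\pib^f)$ and take the infimum over $f$.

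The deviation terms are controlled by symmetrization and McDiarmid, using Assumption~\ref{assump:1}. The function $g$ is $B_g$-Lipschitz on a set of diameter $B_{xz}$, so its bounded differences are of order $B_gB_{xz}/n$. This gives $d_{\Gc_{nn}}(\pib,\pib_l)\le 2\hat{\mathfrak{R}}_n(\Gc_{nn})+\Delta/(2\sqrt n)$. It is counted twice, which yields the shared $4\hat{\mathfrak{R}}_n(\Gc_{nn})+\Delta/\sqrt n$. The supremum over $\Hc$ of $|\hat\pib^{f}h-\pib^{f}h|$ is bounded by $\hat{\mathfrak{R}}_N(\Hc)+\Delta/(2\sqrt N)$ per one-sided event, with $N=n$ or $N=m+n$. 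The constants are then matched to the statement by splitting the confidence budget into four events.

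\textbf{Part 1.} The middle term is $d_{\Gc_{nn}}(\pib_l,\pib_l^{f_l})\le C\epsilon_{\Lc}^\alpha$ by the $(C,\alpha)$-dominance in Assumption~\ref{assump:5}. The comparison term $\inf_f d_{\Gc_{nn}}(\pib_l,\pib_l^f)$ is nonnegative and is simply dropped, since no interpolation is assumed. Everything else is exactly the proof of \eqref{eq:ideal_loss}, which gives \eqref{eq:imperfect_loss}.

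\textbf{Part 2.} Here the mixture measure $\pib_{ul}^{f_u}$ is used, so the $\Hc$-deviation is at the $m+n$ sample level. That deviation is applied once, with only $f_u$ data-dependent through the uniform bound, which gives $2\hat{\mathfrak{R}}_{m+n}(\Hc)+\Delta/(2\sqrt{m+n})$. The key new step is bounding the empirical middle term. By convexity of the supremum of a mixture,
\begin{equation*}
d_{\Gc_{nn}}(\pib_l,\pib_{ul}^{f_u})\le \tfrac{n}{m+n}d_{\Gc_{nn}}(\pib_l,\pib_l^{f_u})+\tfrac{m}{m+n}d_{\Gc_{nn}}(\pib_l,\pib_u^{f_u}).
\end{equation*}
The first piece is at most $C\epsilon_{\Lc}^\alpha$ by dominance and Assumption~\ref{assump:6}. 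The second equals $\epsilon_d+\xi$ by the definition \eqref{eq:gap} of $\epsilon_d$ and of $\xi$.

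The main obstacle is the comparator side. To avoid a second $(m+n)$-level deviation, I would lower bound $R(f)$ using only the $n$-sample triangle inequality through $\pib_l$, namely $R(f)\ge 0$ after the $d(\pib,\pib_l)$ terms are accounted for. I would check carefully that the $\Delta$ and Rademacher coefficients, $1/\sqrt n$ for $\Gc_{nn}$ and a single $1/(2\sqrt{m+n})$ for $\Hc$, add up with the stated $1-4\delta$ union bound. If a term does not close, the fallback is to absorb it by rebalancing how $\delta$ is split among the events.
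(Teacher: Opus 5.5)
Your route is essentially the paper's. You bound the estimation error by copies of $d_{\Gc_{nn}}(\pib,\pib_l)$ plus $d_{\Gc_{nn}}(\pib_l,\hat{\pib}^{\hat f})+d_{\Gc_{nn}}(\hat{\pib}^{\hat f},\pib^{\hat f})$. You handle the middle term by dominance in the vanilla case, and by mixture convexity giving $\frac{n}{m+n}C(\epsilon_{\Lc})^\alpha+\frac{m}{m+n}(\epsilon_d+\xi)$ in the MCR case. You control the last term by a uniform $\Hc$-deviation at sample size $n$ or $m+n$. The paper handles the comparator exactly as your ``main obstacle'' paragraph suggests. Its Term \textcircled{3}, namely $\inf_f d_{\Gc_{nn}}(\pib_l,\pib^f)-\inf_f d_{\Gc_{nn}}(\pib,\pib^f)\le\sup_g[\pib g-\pib_l g]$, uses the \emph{population} imputed measure $\pib^f$. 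The nonnegative $\inf_f d_{\Gc_{nn}}(\pib_l,\pib^f)$ is then dropped from Term \textcircled{2}, so no deviation term is ever charged to the comparator.

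As written, however, your bookkeeping does not produce the stated constants, for two reasons.

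First, your opening comparator bound $R(f)\ge d_{\Gc_{nn}}(\pib_l,\hat{\pib}^f)-d_{\Gc_{nn}}(\pib,\pib_l)-d_{\Gc_{nn}}(\hat{\pib}^f,\pib^f)$ passes through the empirical imputed measure. After taking the infimum over $f$ you must therefore pay $\sup_{f\in\Fc}d_{\Gc_{nn}}(\hat{\pib}^f,\pib^f)$, which is a second copy of the uniform $\Hc$-deviation. In Part 1 you drop only $\inf_f d_{\Gc_{nn}}(\pib_l,\pib_l^f)$ and say everything else is unchanged. That leaves the second copy and gives $4\hat{\mathfrak{R}}_n(\Hc)+\Delta/\sqrt n$, where the stated bound has $2\hat{\mathfrak{R}}_n(\Hc)+\Delta/(2\sqrt n)$. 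The fix you reserve for Part 2 must be used in both parts: $R(f)\ge d_{\Gc_{nn}}(\pib_l,\pib^f)-d_{\Gc_{nn}}(\pib,\pib_l)\ge -d_{\Gc_{nn}}(\pib,\pib_l)$. Alternatively, $R(f)\ge 0$ works and even removes one $\Gc_{nn}$ copy.

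Second, the $\Hc$-deviation bound is $2\hat{\mathfrak{R}}_N(\Hc)+\Delta/(2\sqrt N)$ with probability $1-2\delta$. The symmetrization factor $2$ is needed, so your claim of ``$\hat{\mathfrak{R}}_N(\Hc)$ per one-sided event'' is wrong. Because $\Hc$ inherits symmetry from $\Gc_{nn}$, the absolute value costs no extra event. The budget is then one $(1-2\delta)$ event for $\Gc_{nn}$ at size $n$, shared by both copies of $d_{\Gc_{nn}}(\pib,\pib_l)$, plus one for $\Hc$ at size $N$, giving $1-4\delta$ in total. Rebalancing $\delta$ only rescales the $\sqrt{\ln\delta^{-1}}$ factors and cannot absorb a doubled complexity term, so it is not a valid fallback.
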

}

{\color{black}
Relative to the ideal case, the non-zero training error contributes a loss-dependent term based on $\epsilon_{\Lc}$ to both bounds, while MCR additionally incurs a term dependent on $(\epsilon_d+\xi)$. Similar to Theorem~\ref{the:main}, the MCR upper bound in Theorem \ref{the:imperfect} could be presented as the minimum of two terms, suggesting that MCR training provides an upper bound that is no worse than the vanilla upper bound under practical conditions. However, the favorable upper bound does not always translate to an estimation error benefit as demonstrated later in Fig.~\ref{fig:dg_test}.The main decisive factor is the data incompatibility cost captured via $\xi$. This conditional behavior is consistent with empirical SSL studies showing that unlabeled data can degrade performance under distribution mismatch or unsuitable unlabeled samples \cite{oliver2018realistic,guo2020safe}. Nevertheless, the MCR bound motivates a  duality-gap-based diagnostic and a $\xi$-based prior viability check in Section~\ref{sec:handle}.}

\section{Handling the Optimality Gap}\label{sec:handle}

{\color{black} In this section, we develop a practical framework to manage the optimality gap $\epsilon_d$. From an upper-bound perspective, MCR offers an advantage over vanilla training only when $\epsilon_d$ is sufficiently small, such that the right-hand side (RHS) of Eq.~\eqref{eq:imperfect_loss} is larger than that of Eq.~\eqref{eq:imperfect_ndp}. In other words, a favorable upper-bound of MCR is achieved with high probability when
\begin{equation}\label{eq:robustness_condition}
\begin{aligned}
    \frac{m}{m+n} \big(\epsilon_d + \xi - C(\epsilon_{\Lc})^\alpha\big)
    \leq & 2\big(\hat{\mathfrak{R}}_{n}(\Hc) - \hat{\mathfrak{R}}_{m+n}(\Hc)\big) \\
    &+ \frac{\Delta}{2} \Big(\frac{1}{\sqrt{n}} - \frac{1}{\sqrt{m+n}}\Big).
\end{aligned}
\end{equation}

This raises a practical question: under what conditions can we maintain the upper-bound advantage? To answer this, we first characterize $\epsilon_d$ itself. 

Recall that $\epsilon_d$ is defined as the optimality gap of the penalty term in Eq.~\eqref{eq:gap}. To study it, we leverage the notion of \emph{duality gap} in min-max optimization. For a min-max problem on a generic function $\Lc$,
\begin{equation*}
    \underset{f \in \Fc}{\inf} \; \underset{g \in \Gc}{\sup} \; \Lc(f,g),
\end{equation*}
the \textit{duality gap} for a pair $(\hat{f},\hat{g})$ is defined as
\begin{equation*}
    DG(\hat{f},\hat{g}) := \underset{g \in \Gc}{\sup} \; \Lc(\hat{f},g) - \underset{f \in \Fc}{\inf} \; \Lc(f,\hat{g}).
\end{equation*}
Duality gap has been widely adopted as a practical convergence indicator for min-max problems, especially in generative adversarial training~\cite{grnarova2019domain, sidheekh2021duality, sidheekh2021characterizing}. Here, we employ it as a proxy to monitor and bound the optimality gap $\epsilon_d$.

\begin{theorem}\label{the:duality}
For $f_u \in \Fc$ and $g_u \in \Gc_{nn}$ as defined in Assumption \ref{assump:6}, the duality gap
\begin{equation*}
    DG(f_u,g_u) := \underset{g \in \Gc_{nn}}{\sup} \; |\pib_l g - \pib_u^{f_u} g| - \underset{f \in \Fc}{\inf} \; |\pib_l g_u - \pib_u^{f} g_u|,
\end{equation*}
upper bounds the optimality gap $\epsilon_d$ defined in Eq.~\eqref{eq:gap} as follows
\begin{equation*}
    \epsilon_d \leq DG(f_u,g_u).
\end{equation*}
\end{theorem}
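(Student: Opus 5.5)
The argument compares the two terms of the duality gap directly with the two terms of $\epsilon_d$. By the definition of $g_u$ in Assumption~\ref{assump:6}, the first term of $DG(f_u,g_u)$ is
\[
\sup_{g\in\Gc_{nn}}|\pib_l g-\pib_u^{f_u}g| = d_{\Gc_{nn}}(\pib_l,\pib_u^{f_u}) = |\pib_l g_u - \pib_u^{f_u} g_u|.
\]
This is exactly the first term in the definition \eqref{eq:gap} of $\epsilon_d$. The claim $\epsilon_d\le DG(f_u,g_u)$ therefore reduces to comparing the subtracted terms. We need
\[
\inf_{f\in\Fc}|\pib_l g_u-\pib_u^{f}g_u| \;\le\; \inf_{f\in\Fc} d_{\Gc_{nn}}(\pib_l,\pib_u^f).
\]

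This is the standard weak-duality (min-sup versus fixed-$g$) inequality. For each fixed $f\in\Fc$, the discriminator $g_u$ is one member of $\Gc_{nn}$, so
\[
|\pib_l g_u-\pib_u^{f}g_u| \le \sup_{g\in\Gc_{nn}}|\pib_l g-\pib_u^f g| = d_{\Gc_{nn}}(\pib_l,\pib_u^f).
\]
Taking the infimum over $f$ on both sides preserves the inequality, which gives the display above.

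Subtracting gives
\[
\epsilon_d = d_{\Gc_{nn}}(\pib_l,\pib_u^{f_u}) - \inf_f d_{\Gc_{nn}}(\pib_l,\pib_u^f) \le d_{\Gc_{nn}}(\pib_l,\pib_u^{f_u}) - \inf_f |\pib_l g_u-\pib_u^f g_u| = DG(f_u,g_u).
\]

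There is no real obstacle here; the only point needing care is well-posedness. If the supremum defining $g_u$ is not attained, the argument still goes through, because it only uses the first term of $DG$ written as a supremum, together with the pointwise inequality for a single fixed $g_u\in\Gc_{nn}$. Boundedness under Assumption~\ref{assump:1} keeps all quantities finite, so the subtractions are legitimate. As a remark, $DG\ge 0$ follows from the same comparison.
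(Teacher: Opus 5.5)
Your proposal is correct and follows essentially the same route as the paper, which likewise identifies $d_{\Gc_{nn}}(\pib_l,\pib_u^{f_u})$ with the first term of $DG(f_u,g_u)$ via the definition of $g_u$ and then compares the subtracted terms through the weak-duality inequality $\inf_{f\in\Fc}|\pib_l g_u-\pib_u^f g_u|\le\inf_{f\in\Fc}\sup_{g\in\Gc_{nn}}|\pib_l g-\pib_u^f g|$. Your side remark is also correct: attainment of the supremum is not actually needed, so the bound holds with any fixed $g\in\Gc_{nn}$ in place of $g_u$.
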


\begin{proof}
Let $d\omega(f,g) = |\pib_l g - \pib_u^f g|$. We aim to show that
\begin{align*}
    d\omega(f_u,g_u) &- \underset{f \in \Fc}{\inf} \underset{g \in \Gc_{nn}}{\sup} \; d\omega(f,g) \\
    &\leq \underset{g \in \Gc_{nn}}{\sup} \; d\omega(f_u,g) - \underset{f \in \Fc}{\inf} \; d\omega(f,g_u).
\end{align*}
Observe that
\begin{equation*}
\begin{aligned}
    d\omega(f_u,g_u) &= \underset{g \in \Gc_{nn}}{\sup} \; d\omega(f_u,g), \\
    \underset{f \in \Fc}{\inf} \underset{g \in \Gc_{nn}}{\sup} \; d\omega(f,g) &\geq \underset{f \in \Fc}{\inf} \; d\omega(f,g_u),
\end{aligned}
\end{equation*}
where the first line follows from the definition of $g_u$ in Assumption \ref{assump:6}. Combining the two inequalities completes the proof.
\end{proof}

Theorem~\ref{the:duality} allows the optimization residual $\epsilon_d$ to be controlled through the duality gap, which can be monitored during training. Combining Theorem~\ref{the:duality} with Eq.~\eqref{eq:robustness_condition} directly gives the following sufficient condition.

\begin{corollary}\label{cor:condition}
For predictors $f_u$ and $f_l$ trained with and without MCR to the
same supervised loss precision $\epsilon_{\Lc}$, the MCR estimation-error
upper bound is no larger than the vanilla-training upper bound
whenever
\begin{equation}\label{eq:dg_bound_exact}
\begin{aligned}
DG(f_u,g_u)
\leq {}&
\frac{2(m+n)}{m}
\big(
\hat{\mathfrak{R}}_{n}(\Hc)
-
\hat{\mathfrak{R}}_{m+n}(\Hc)
\big)
\\
&+
\Delta
\frac{m+n-\sqrt{(m+n)n}}{2m\sqrt n}
+
C(\epsilon_{\Lc})^\alpha
-
\xi.
\end{aligned}
\end{equation}
\end{corollary}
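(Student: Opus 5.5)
The corollary follows by combining the comparison condition \eqref{eq:robustness_condition} with Theorem~\ref{the:duality}; no new probabilistic argument is needed.

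First, I will state exactly what ``no larger'' means. By Theorem~\ref{the:imperfect}, the RHS of Eq.~\eqref{eq:imperfect_ndp} is at most the RHS of Eq.~\eqref{eq:imperfect_loss} iff the difference of the two right-hand sides is nonnegative. I will cancel the shared term $4\hat{\mathfrak{R}}_n(\Gc_{nn})$ and the shared $\Delta/\sqrt n$. Writing $\frac{3\Delta}{2\sqrt n}-\frac{\Delta}{\sqrt n}=\frac{\Delta}{2\sqrt n}$ and $C(\epsilon_{\Lc})^\alpha-\frac{n}{m+n}C(\epsilon_{\Lc})^\alpha=\frac{m}{m+n}C(\epsilon_{\Lc})^\alpha$, the condition becomes exactly Eq.~\eqref{eq:robustness_condition}. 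Both bounds use the same $\epsilon_{\Lc}$ because Assumption~\ref{assump:6} fixes $\pib_l\Lc^{f_u}=\epsilon_{\Lc}=\pib_l\Lc^{f_l}$, which is the hypothesis ``same supervised loss precision'' in the statement.

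Second, I will solve Eq.~\eqref{eq:robustness_condition} for $\epsilon_d$ by multiplying through by $\frac{m+n}{m}>0$ (taking $m\ge 1$). This gives
\[
\epsilon_d \le \frac{2(m+n)}{m}\big(\hat{\mathfrak{R}}_{n}(\Hc)-\hat{\mathfrak{R}}_{m+n}(\Hc)\big)+\frac{(m+n)\Delta}{2m}\Big(\frac{1}{\sqrt n}-\frac{1}{\sqrt{m+n}}\Big)+C(\epsilon_{\Lc})^\alpha-\xi .
\]
The only real bookkeeping step is simplifying the $\Delta$ coefficient. Putting the bracket over the common denominator $\sqrt{n}\sqrt{m+n}$ gives $\frac{(m+n)(\sqrt{m+n}-\sqrt n)}{2m\sqrt n\sqrt{m+n}}$. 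Cancelling $\sqrt{m+n}$ leaves $\frac{\sqrt{m+n}(\sqrt{m+n}-\sqrt n)}{2m\sqrt n}=\frac{m+n-\sqrt{(m+n)n}}{2m\sqrt n}$, which matches the coefficient in Eq.~\eqref{eq:dg_bound_exact}.

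Third, I will invoke Theorem~\ref{the:duality}, which gives $\epsilon_d\le DG(f_u,g_u)$. If $DG(f_u,g_u)$ satisfies Eq.~\eqref{eq:dg_bound_exact}, then by transitivity $\epsilon_d$ satisfies the displayed inequality above. That inequality is equivalent to Eq.~\eqref{eq:robustness_condition}, so the MCR upper bound is no larger than the vanilla upper bound. The result is only a sufficient condition, since the duality gap may overestimate $\epsilon_d$.

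I expect no substantive obstacle. The only care points are getting the algebraic simplification of the $\Delta$ coefficient right and noting that the comparison concerns the two upper bounds themselves, each holding with its own probability, rather than the realized errors.
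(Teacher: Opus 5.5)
Your proposal is correct and follows the paper's route. The paper obtains the corollary in the same way: it combines the upper-bound comparison in Eq.~\eqref{eq:robustness_condition} (the RHS of Theorem~\ref{the:imperfect} for MCR versus vanilla) with $\epsilon_d \le DG(f_u,g_u)$ from Theorem~\ref{the:duality}, and your rearrangement, including the simplification of the $\Delta$ coefficient to $\frac{m+n-\sqrt{(m+n)n}}{2m\sqrt n}$, checks out.
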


Corollary~\ref{cor:condition} gives the exact sufficient condition implied by the upper-bound comparison. In particular, if the RHS of Eq.~\eqref{eq:dg_bound_exact} is negative, the sufficient condition cannot be satisfied because the duality gap is non-negative. However, several quantities in Eq.~\eqref{eq:dg_bound_exact} are difficult to estimate in practice. We therefore next consider practical simplifications and empirical estimates of these quantities.

The Rademacher complexity is generally difficult to estimate in practice. However, since $\hat{\mathfrak{R}}_{n}(\Hc)$ is usually larger than $\hat{\mathfrak{R}}_{m+n}(\Hc)$ when $m$ is large, 
a stronger sufficient condition can be obtained by dropping the Rademacher complexity terms from Eq.~\eqref{eq:robustness_condition}, which gives
\begin{equation}\label{eq:condition}
     \frac{m}{m+n} \big(\epsilon_d +  \xi - C(\epsilon_{\Lc})^\alpha\big) \leq \frac{\Delta}{2} \Big(\frac{1}{\sqrt{n}} - \frac{1}{\sqrt{m+n}}\Big).
\end{equation}
Such an ordering is generally expected as the sample size increases, particularly when $m$ is large. Under this simplification, satisfying Eq.~\eqref{eq:condition} is sufficient for satisfying the exact condition in Eq.~\eqref{eq:robustness_condition}.

On the other hand, the constant $\Delta$ is relatively easy to estimate: $B_{xz}$ can be approximated by the maximum pairwise distance in the data, while $B_g$ depends on the neural distance class $\Gc_{nn}$. This often yields a conservative estimate $\hat{\Delta}$, which further preserves the upper-bound advantage.

Exact evaluation of $\xi$ requires minimizing the joint
discrepancy over $\Fc$, which can be unstable for flexible predictors
and high-dimensional IPMs. Let us denote by $\pib_l(\xb)$ (respectively, $\pib_u(\xb)$) the marginal empirical measure of feature $\xb$ in the fully observed (respectively, partially observed) dataset. Instead of the joint
discrepancy, we use the marginal
discrepancy
\[
d_{\Gc_{nn}}\!\left(
\pib_l(\xb),\pib_u(\xb)
\right),
\]
which is independent of $\Fc$, as a tractable empirical proxy. Since
\[
d_{\Gc_{nn}}\!\left(
\pib_l(\xb),\pib_u(\xb)
\right)
\leq \xi,
\]
this proxy is not a formal upper bound on $\xi$ and is used only for practical diagnosis. We calibrate it against the intrinsic finite-sample IPM variability as follows:
\begin{enumerate}
    \item Sample multiple random subsets from $\Db_l$ with sizes consistent with MCR training batch sizes.
    \item Compute the average IPM estimate between the $\xb$ components of these subsets, denoted by $d_l$.
    \item Define the calibrated discrepancy as
    \begin{equation}\label{eq:xi}
        \hat{\xi}
        :=
        \max\Big\{
        d_{\Gc_{nn}}\big(\pib_l(\xb), \pib_u(\xb)\big)
        - d_l,\; 0
        \Big\}.
    \end{equation}
\end{enumerate}

The quantity $d_l$ estimates the IPM variability observed under random partitions of the fully observed data. Subtracting this baseline makes $\hat{\xi}$ a calibrated measure of excess discrepancy beyond ordinary finite-sample variation. It is a practical diagnostic rather than a guaranteed upper bound, unbiased estimator, or consistent estimator of $\xi$. The theoretical sufficient conditions above use the exact $\xi$, whereas the training protocol uses $\hat{\xi}$ as a plug-in approximation.

Let us now provide a simplified practical condition by putting everything together. Combining Theorem~\ref{the:duality} with the simplified condition in Eq.~\eqref{eq:condition} gives
\begin{equation}\label{eq:dg_bound_theory}
DG(f_u,g_u)
\leq
\Delta
\frac{m+n-\sqrt{(m+n)n}}{2m\sqrt n}
+
C(\epsilon_{\Lc})^\alpha
-
\xi.
\end{equation}
Eq.~\eqref{eq:dg_bound_theory} provides a simpler sufficient condition that does not require estimating the Rademacher complexity.

Using the empirical estimates $\hat{\Delta}$ and $\hat{\xi}$, we can first perform a prior check before training. In the well-trained regime where $\epsilon_{\Lc}\rightarrow 0$, the plug-in condition cannot be satisfied when
\begin{equation}\label{eq:viability}
\hat{\Delta}
\frac{m+n-\sqrt{(m+n)n}}{2m\sqrt n}
<
\hat{\xi},
\end{equation}
because the duality gap is non-negative. Eq.~\eqref{eq:viability} therefore provides a simple prior check on whether MCR can reach the favorable upper-bound regime under the plug-in approximation.

If the prior check is satisfied, we can further monitor the duality gap during training. Using the empirical estimates $\hat{\Delta}$ and $\hat{\xi}$, we have the following practical stopping diagnostic
\begin{equation}\label{eq:dg_bound}
DG(f_u,g_u)
\leq
\hat{\Delta}
\frac{m+n-\sqrt{(m+n)n}}{2m\sqrt n}
+
C(\epsilon_{\Lc})^\alpha
-
\hat{\xi}.
\end{equation}
The resulting condition is a bound-motivated training diagnostic rather than a formal certificate based on the unknown population quantities.

The accuracy of $\hat{\xi}$ controls the conservativeness of the plug-in diagnostic. In particular, if the supplied discrepancy is $\widetilde{\xi}=\xi+e_{\xi}$, the RHS of Eq.~\eqref{eq:dg_bound_theory} differs from its exact counterpart by $-e_{\xi}$. Underestimating $\xi$ therefore increases the RHS of the diagnostic equation, making it more permissive and potentially causing premature stopping, whereas overestimating $\xi$ decreases the RHS, making the diagnostic more conservative and potentially delaying or preventing stopping. The same directional effect applies to the prior check condition in Eq.~\eqref{eq:viability}. We formalize the MCR training procedure with the plug-in diagnostics in Algorithm~\ref{algo:wasserstein}.}

\begin{algorithm}
\caption{Batch MCR Training with Duality Gap Stopping Condition}

{\bf Require:} 
$f_{\theta}$, $g_{\gamma}$, $\Db_l$, $\Db_u$, an estimated $\hat{\Delta}$ and $\hat{\xi}$, loss $\Lc: \Xc \times \Zc \rightarrow \R$, Optimizer $O$, MCR update step $n_g$, Duality Gap update step $n_d$, MCR weight $\lambda_{d}$, $\theta$ learning rate $\alpha_{\theta}$ and convergence criterion, $\gamma$ learning rate $\alpha_{\gamma}$, $\tau = \text{True}$.

{\bf While:}
$\theta$ does not converge or $\tau$:
\begin{algorithmic}[1] 
\Procedure{MCR Training}{$\theta$}\label{algo:wasserstein}
    \State Divide $\Db_l$ into $k_l$ batches
    \For {$i \gets 1$ to $k_l$ }
        \For{$t \gets 1$ to $n_g$}
            \State Divide $\Db_u$ into $k_u$ batches
            \For{$j \gets 1$ to $k_u$}
                \State $d_j = \pib_u^{f_{\theta},j} g_{\gamma} - \pib_l^i g_{\gamma}$
            \EndFor
            \State $\gamma \gets O \big(\frac{1}{k_u}\sum d_j; \alpha_{\gamma} \big)$
        \EndFor
        \State $\theta \gets O \bigg(\pib_l^i \Lc - \lambda_d  \pib_u^{f_{\theta},k_u} g_{\gamma} ; \alpha_{\theta} \bigg)$ 
    \EndFor
    \If{$\theta$ converges}
        \State $\gamma' = \gamma$, $\theta' = \theta$, record $\epsilon_{\Lc}$
        \For{$q \gets 1$ to $n_d$}
            \State Get a random batch $\pib_l$ and $\pib_u$
            \State $d_q = \pib_u^{f_{\theta}} g_{\gamma'} - \pib_l g_{\gamma'}$
            \State $d'_q = \pib_u^{f_{\theta'}} g_{\gamma} - \pib_l g_{\gamma}$
            \State $\gamma' \gets O \big(d_q; \alpha_{\gamma} \big)$
            \State $\theta' \gets O \big(-d'_q; \alpha_{\theta} \big)$
        \EndFor
        \State Sample a large batch $\pib_l$ and $\pib_u$
        \State $d = \pib_u^{f_{\theta}} g_{\gamma'} - \pib_l g_{\gamma'}$
        \State $d' = \pib_u^{f_{\theta'}} g_{\gamma} - \pib_l g_{\gamma}$ 
        \If{$d'-d \leq$ RHS of Eq. \eqref{eq:dg_bound}}
            \State $\tau = \text{False}$
        \Else
            \State $\tau = \text{True}$
        \EndIf
    \EndIf
\EndProcedure
\end{algorithmic}
\end{algorithm}

\begin{remark}
We provide several clarifications regarding Algorithm \ref{algo:wasserstein}:
\begin{enumerate}
    \item Only in Algorithm \ref{algo:wasserstein}, we use $\pib_u^{f_{\theta},j} g_{\gamma}$ to denote expectation of $g_{\gamma}$ over the empirical distribution of the $j$-th batch of  partially observed dataset imputed with prediction function $f_{\theta}$. Note that during the $\theta$ update, the regularization is enforced with the $k_u$-th batch of the partially observed data, a common practice in adversarial training \cite{gulrajani2017improved}.
    \item In practice, extending the training time typically ensures the stopping condition is satisfied. {\color{black} Thus, the algorithm provides a bound-motivated convergence diagnostic for determining whether further optimization of the MCR objective is required under limited computational budgets.}
    \item In specific cases, additional constraints may need to be incorporated. For example, when employing the $1$-Wasserstein distance, it is necessary to enforce the $1$-Lipschitz constraint. This can be achieved by augmenting the MCR training process with a gradient penalty term \cite{gulrajani2017improved}.
\end{enumerate}
\end{remark}

\section{Simulations}
\label{sec:app}
\begin{table*}
    \centering
    \caption{Reconstruction Error Reduction Percentage (\%) with MCR. On columns title, we have the number of fully observed data $n$. On rows title, we have the ratio $m/n$. Statistically significant reductions (compared to zero as benchmark) are denoted in {\bf bold}.}
    \begin{tabular}{|c||c|c|c|c|c|}
    \hline
         \backslashbox{$m/n$}{$n$} & $10$ & $25$ & $50$ & $100$ & $200$ \\
         \hline\hline
        $2$ & $0.79 \pm 0.94$ & $-0.37 \pm 0.49$ & $\mathbf{2.25 \pm 0.61}$ & $\mathbf{2.97 \pm 0.61}$ & $\mathbf{2.72 \pm 0.4}$ \\
        \hline
        $10$ & $0.46 \pm 0.47$ & $0.24 \pm 0.49$ & $\mathbf{3.3 \pm 0.41}$ & $\mathbf{2.89 \pm 0.42}$ & $\mathbf{2.88 \pm 0.34}$ \\
        \hline
        $20$ & $\mathbf{0.76 \pm 0.69}$ & $\mathbf{2.1 \pm 0.44}$ & $\mathbf{3.6 \pm 0.38}$ & $\mathbf{4.26 \pm 0.5}$ & $\mathbf{3.71 \pm 0.36}$ \\
        \hline
    \end{tabular}
    \vspace{6pt}
    \label{tab:m&n}
\end{table*}
\begin{figure*}
    \centering
    \includegraphics[width=0.4\textwidth]{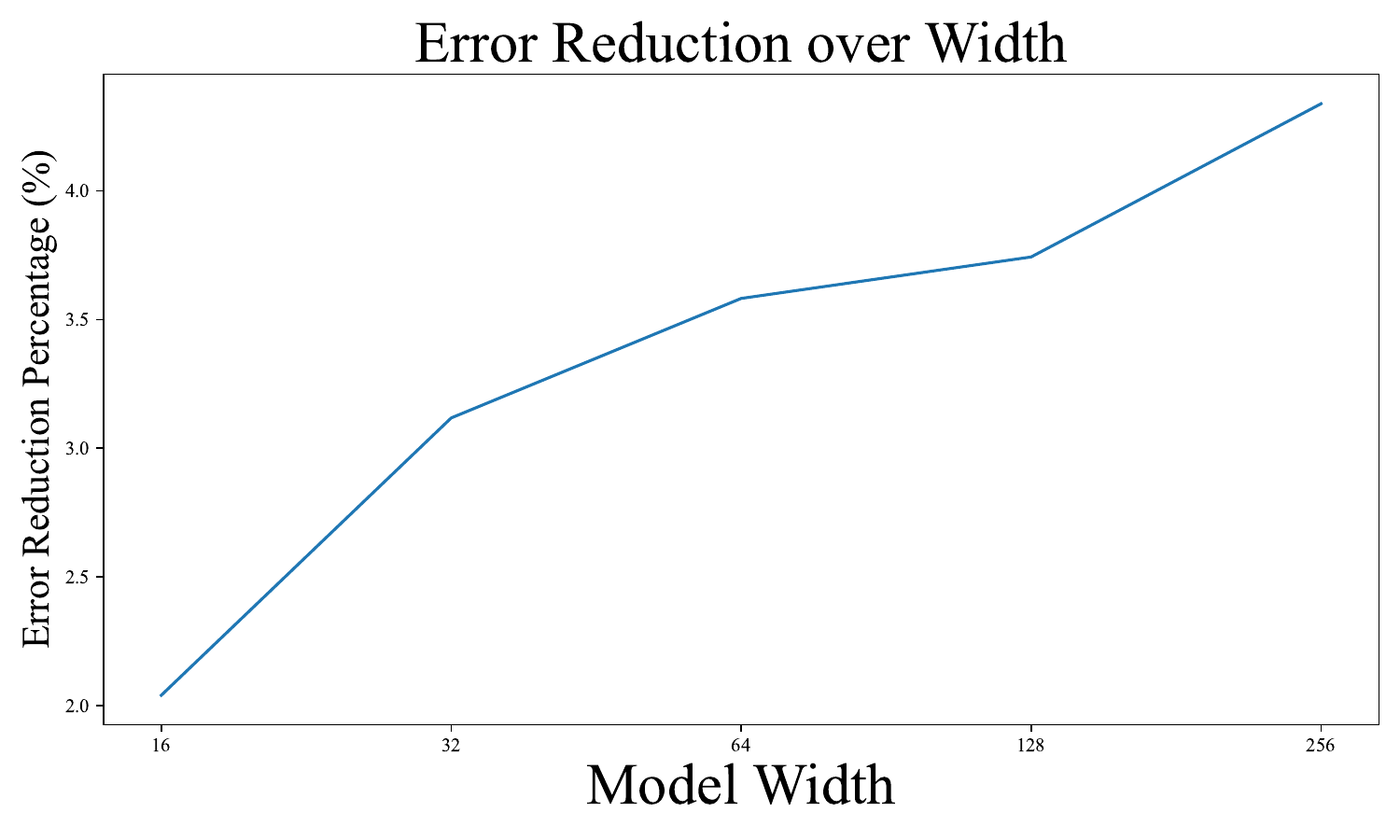}
    \includegraphics[width=0.4\textwidth]{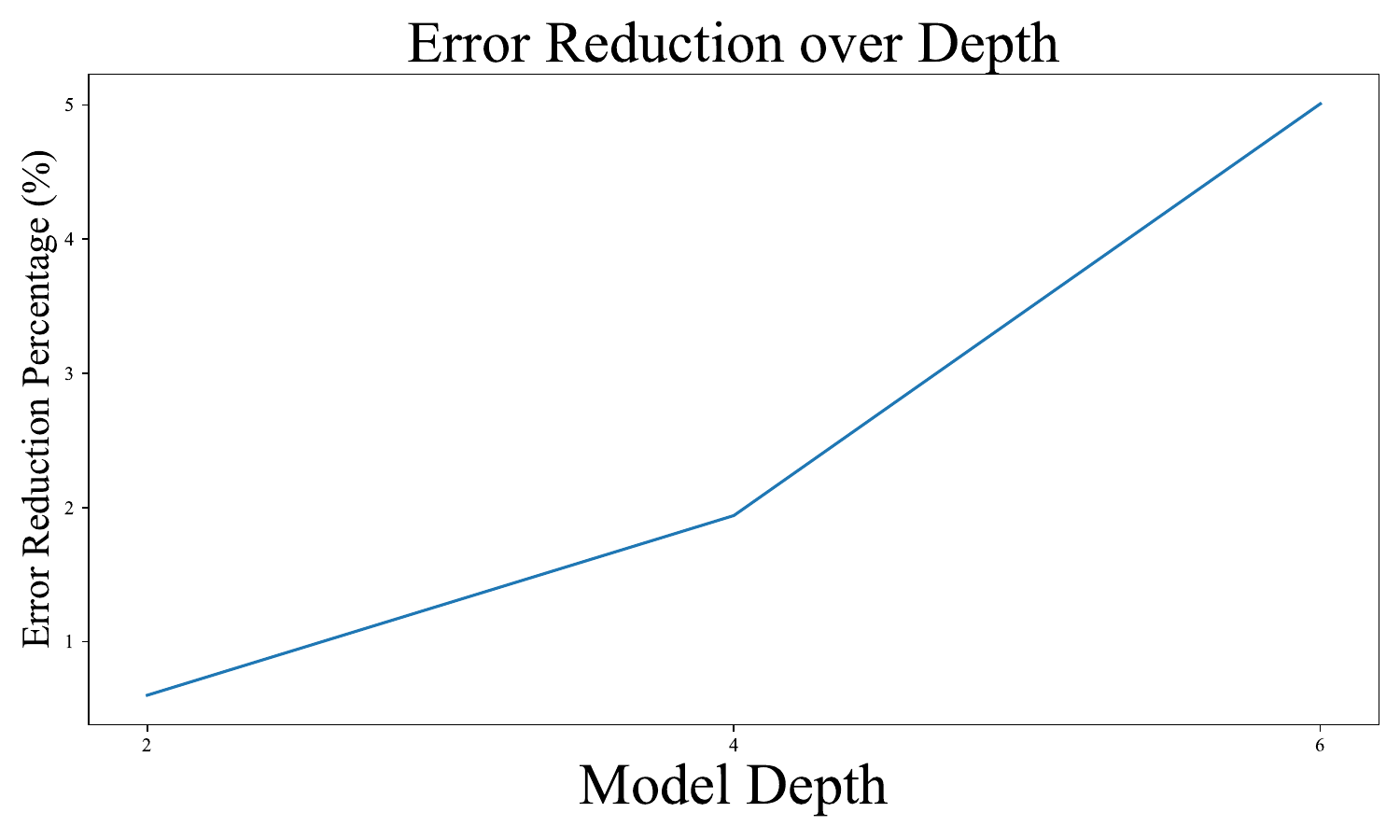}
    \caption{Reconstruction error reduction in percentage with MCR. {\bf Left:} The error reduction percentage over varying model width. {\bf Right:} The error reduction percentage over varying model depth.}
    \label{fig:capacity_error_syn}
\end{figure*}
We now present simulation results. For MCR training, we focus on the first-order Wasserstein distance as the regularization metric. This choice is motivated by its popularity among IPMs, its stability in gradient-based training \cite{arjovsky2017wasserstein}, and its optimization-friendly implementation with gradient penalties \cite{gulrajani2017improved}. In this case, the NND is defined with $\Gc$ as the class of $1-$Lipschitz functions, and we adopt the gradient penalty strategy of \cite{gulrajani2017improved}.

All implementation details---including model architecture, hyperparameter settings, data pre-processing, and optimizers—are provided in the supplementary materials for reproducibility. Each simulation is repeated for 20 Monte Carlo runs to compute error bars, unless otherwise stated.

\subsection{Verifying the Theoretical Claims}\label{subsec:theory}

We begin with ablation studies on a synthetic dataset to examine the theoretical claims. The target $\zb$ is generated by transforming $\xb$ through a randomly weighted multilayer perceptron (MLP), with $\xb \sim \mathcal{N}(0, I_{30})$ and $\zb \in \R^{20}$. We evaluate the reconstruction error reduction percentage:
\begin{equation*}
    \frac{\texttt{RMSE}(\zb,f^*_l(\xb)) - \texttt{RMSE}(\zb,f^*_u(\xb))}{\texttt{RMSE}(\zb,f^*_l(\xb))} \times 100\%,
\end{equation*}
where $f^*_l$ is learned by pure supervision (Eq.~\eqref{eq:objective}) and $f^*_u$ by MCR (Eq.~\eqref{eq:ndp}). {\color{black} In addition to being a popular loss metric, the root mean-squared error \texttt{RMSE} is linked to the $1$-Wasserstein distance used in our experiments. On the bounded domain considered here, the Lipschitz predictor $f$ makes $\Lc_f(\xb,\zb)=\|\zb-f(\xb)\|_2^2$ $K$-Lipschitz. Kantorovich--Rubinstein duality and the identity coupling
therefore give $W_1(\pib,\pib^f)\leq\texttt{RMSE}(\zb, f)
\leq\sqrt{K W_1(\pib,\pib^f)}$. Thus, \texttt{RMSE} and the Wasserstein discrepancy are equivalent up to the stated bounds.} Models in this subsection are selected using validation error.

\subsubsection*{Interplay between fully and partially observed data ($n$ and $m$)}
One favorable component in our theoretical bounds (Theorems~\ref{the:main} and \ref{the:imperfect}) is the difference
\[
\Delta\!\left(\frac{1}{\sqrt{n}} - \frac{1}{\sqrt{m+n}}\right),
\]
which captures how additional partially observed samples reduce the estimation error relative to pure supervision. This expression suggests two effects: (i) for fixed $n$, increasing $m$ should amplify the benefit of MCR; and (ii) for fixed $m/n$, larger $n$ should make this advantage more pronounced, as other error components diminish.

To test this prediction, we fix the data-generating distribution and use a two-hidden-layer MLP as the predictor class $\Fc$, while varying $n \in \{10,\ldots,200\}$ and the ratio $m/n \in \{2,10,20\}$. The resulting reconstruction error reductions are summarized in Table~\ref{tab:m&n}. When $n$ is small, a large $m$ is required to obtain statistically significant improvements, reflecting the dominance of the $1/\sqrt{n}$ estimation error term and the resulting high variance among empirically optimal predictors. As $n$ increases, MCR yields more stable and consistent gains, with improvements scaling predictably with $m/n$, qualitatively consistent with Theorem~\ref{the:main}.

\subsubsection*{Role of model capacity and approximation error} Note that our definition of estimation error can be written as the generalization error subtracted by the approximation error, with the latter given by $\inf_{f\in\Fc} R(f)$. While the approximation error is not directly observable, it is monotone with respect to model class inclusion: if $\Fc_1 \subseteq \Fc_2$, then $\inf_{f\in\Fc_1} R(f) \ge \inf_{f\in\Fc_2} R(f)$. Moreover, our theory suggests MCR's error bound advantage comes from estimation error, therefore, as approximation error decreases, MCR's benefit should become more apprant.

We examine this effect by systematically increasing model capacity, varying both the width (from $16$ to $256$ hidden units) and depth (from $2$ to $6$ layers) of the MLP predictor, while keeping $n$ and $m$ fixed. Fig.~\ref{fig:capacity_error_syn} shows that the reconstruction error reduction percentage increases monotonically with both width and depth. This suggests that MCR could be more beneficial for richer function classes.

\begin{figure}
    \centering
    \includegraphics[width=\linewidth]{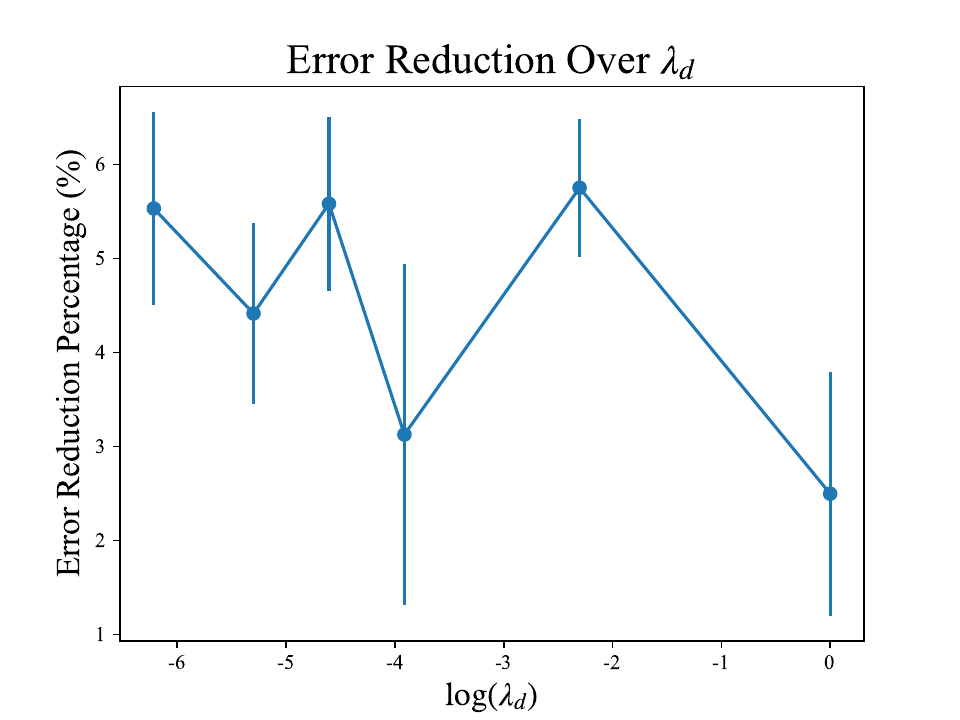}
    \caption{Error reduction vs $\log \lambda_d$. We do not observe a statistically significant trend across different $\lambda_d$.}
    \label{fig:lambdad}
\end{figure}

\subsubsection*{Effect of the MCR weight $\lambda_d$}
Our theoretical guarantees are conditioned on achieving a non-zero but finite training loss tolerance $\epsilon_{\Lc}$, and do not depend explicitly on the choice of $\lambda_d$. Consequently, we expect comparable performance gains across a broad range of $\lambda_d$ values, provided that training reaches the prescribed loss threshold.

To verify this, we fix $(n,m)$ at the best-performing configuration from Table~\ref{tab:m&n} and use a two-layer MLP with width $64$ as predictor. We vary $\lambda_d \in \{0.002, 0.005, 0.01, 0.02, 0.1, 1\}$, training each model until $\epsilon_{\Lc} \le 0.003$ and the duality gap stopping condition is satisfied. As shown in Fig.~\ref{fig:lambdad}, there is no statistically significant trend across different $\lambda_d$ values, confirming that $\lambda_d$ primarily affects optimization dynamics rather than the theoretical benefit of MCR. In practice, we therefore recommend choosing $\lambda_d$ small enough to avoid impeding convergence of the primary loss $\Lc$. 

\begin{table*}
    \centering
    \caption{Reconstruction Error Reduction Percentage (\%) with MCR. On rows title, we have the $m/n$ ratio. On columns title, we have the model widths. In each cell, we show MMD performance on the \textbf{left}, and $1-$Wasserstein distance performance on the \textbf{right}. The higher value in each cell is marked in \textbf{bold}.}
    \begin{tabular}{|c||c|c|c|}
    \hline
         \backslashbox{$m/n$}{Model Width} & $64$ & $128$ & $256$ \\
         \hline\hline
        $2$ & $\mathbf{2.05 \pm 0.54}$/$1.86 \pm 0.61$ & $2.66 \pm 0.54$/$\mathbf{2.98 \pm 0.36}$ & $2.84 \pm 0.60$/$\mathbf{3.08 \pm 0.54}$   \\
        \hline
        $10$ & $0.24 \pm 0.41$/$\mathbf{0.76 \pm 0.44}$ & $3.40 \pm 0.50$/$\mathbf{3.69 \pm 1.11}$ &  $3.14 \pm 0.37$/$\mathbf{3.42 \pm 0.46}$ \\
        \hline
        $20$ & $0.79 \pm 0.77$/$\mathbf{1.71 \pm 0.98}$ & $2.99 \pm 0.44$/$\mathbf{3.81 \pm 0.18}$ & $4.15 \pm 0.35$/$\mathbf{4.98 \pm 0.82}$  \\
        \hline
    \end{tabular}
    \vspace{6pt}
    \label{tab:mmdvw}
\end{table*}

\subsubsection*{Choice of neural net distance $\Gc_{nn}$}
While our {\color{black} non-ideal} regime analysis relies on loss dominance conditions, the underlying intuition of MCR, constraining the imputed distribution to remain close to the fully observed one, should hold across a range of discrepancy measures. To assess this robustness, we compare MCR using Gaussian-kernel MMD (bandwidth $1$) and $1$-Wasserstein distance.

We vary the ratio $m/n \in \{2,10,20\}$ and the model width in $\{64,128,256\}$, while fixing the depth at $4$ and using $n=100$. The results, reported in Table~\ref{tab:mmdvw}, show that Wasserstein-based MCR typically achieves slightly larger error reductions, though MMD-based MCR also yields consistent and statistically significant improvements in most settings. This aligns with the fact that MMD can be $(C,\alpha)$-dominated by \texttt{RMSE} under suitable RKHS smoothness assumptions.

\subsubsection*{Distribution discrepancy and prior-check sensitivity}

{\color{black} We next examine how distribution discrepancy affects the empirical
benefit of MCR and evaluate the sensitivity of the prior viability
check in Eq.~\eqref{eq:viability}. We inject mean shifts of magnitude
\[
\{0,\;0.25,\;0.5,\;0.75,\;1,\;1.5,\;2,\;3,\;4\},
\]
into the partially observed sample while fixing the sample sizes and
model architecture. Each shift level is evaluated over five Monte
Carlo runs, giving 45 cases in total.

Because the exact $\xi$ is intractable, the practical check uses the
calibrated estimate $\hat{\xi}$ in Eq.~\eqref{eq:xi}. As shown in
Fig.~\ref{fig:xi}, the \texttt{RMSE} reduction generally decreases as
$\hat{\xi}$ increases, and the nominal plug-in boundary lies near the
transition between the beneficial and the non-beneficial MCR trainings.} 

\begin{figure}
    \centering
    \includegraphics[width=\linewidth]{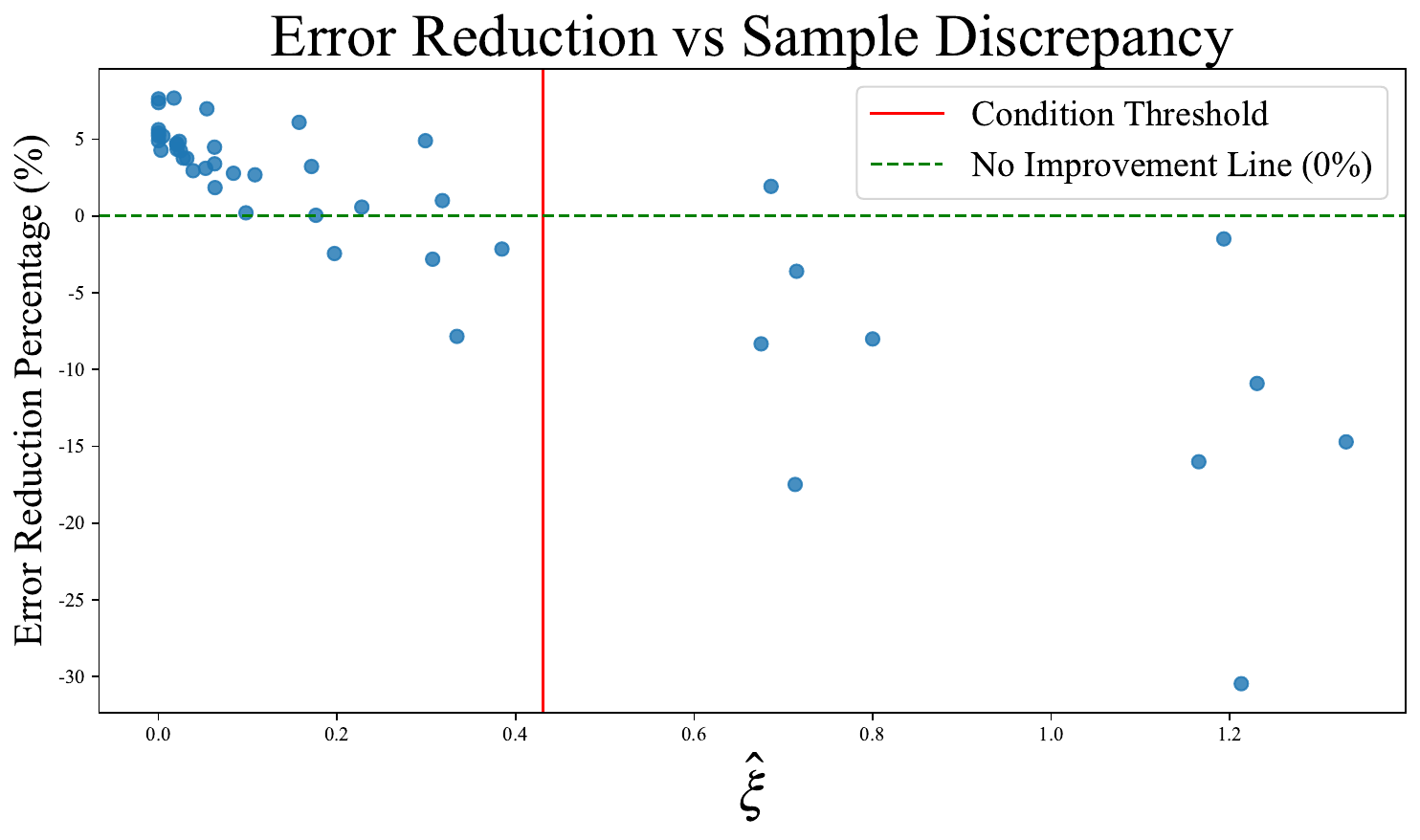}
    \caption{Error reduction vs $\hat{\xi}$. The condition threshold in LHS of Eq.~\eqref{eq:viability} based on our $\hat{\xi}$ estimate in Eq.~\eqref{eq:xi} lies near the empirical transition between beneficial and non-beneficial MCR.}
    \label{fig:xi}
\end{figure}

{\color{black} We further perturb the calibrated discrepancy as
$\hat{\xi}_{\rho}=\rho\hat{\xi}$, where $\rho<1$ represents
underestimation and $\rho>1$ represents overestimation. A case is
empirically positive when MCR achieves lower test \texttt{RMSE} than vanilla
training, and predictively positive when the prior check permits MCR.
Table~\ref{tab:xi_sensitivity} reports the resulting decision
sensitivity.

\begin{table}[h]
    \centering
    \caption{Sensitivity of the prior MCR viability check (Eq.~\eqref{eq:viability}) to multiplicative perturbations of $\hat{\xi}$. FPR and FNR denote false-positive and false-negative rates, respectively.}
    \label{tab:xi_sensitivity}
    \footnotesize
    \setlength{\tabcolsep}{3.2pt}
    \begin{tabular}{c|cc|ccc}
        \toprule
        $\rho$ & FP & FN & FPR (\%) & FNR (\%) & Accuracy (\%) \\
        \midrule
        $0.20$ & $13$ & $0$ & $100.0$ & $0.0$  & $71.1$ \\
        $0.50$ & $8$  & $0$ & $61.5$  & $0.0$  & $82.2$ \\
        $0.75$ & $4$  & $1$ & $30.8$  & $3.1$  & $88.9$ \\
        $1.00$ & $4$  & $1$ & $30.8$  & $3.1$  & $88.9$ \\
        $1.25$ & $3$  & $1$ & $23.1$  & $3.1$  & $91.1$ \\
        $2.00$ & $1$  & $4$ & $7.7$   & $12.5$ & $88.9$ \\
        $5.00$ & $0$  & $9$ & $0.0$   & $28.1$ & $80.0$ \\
        \bottomrule
    \end{tabular}
\end{table}

At the nominal calibration $\rho=1$, the prior check correctly classifies $40$ of the $45$ cases, including $31$ true positives, $9$ true negatives, $4$ false positives and $1$ false negative. Best accuracies are achieved for $\rho\in[0.75,2]$. As expected,
underestimating $\hat{\xi}$ makes the check more permissive and primarily increases false positives, whereas overestimating $\hat{\xi}$ makes it more conservative and primarily increases false
negatives.}

\subsection{Evaluating the Duality Gap Stopping Condition}
\begin{figure*}
    \centering
    \includegraphics[width=0.4\textwidth]{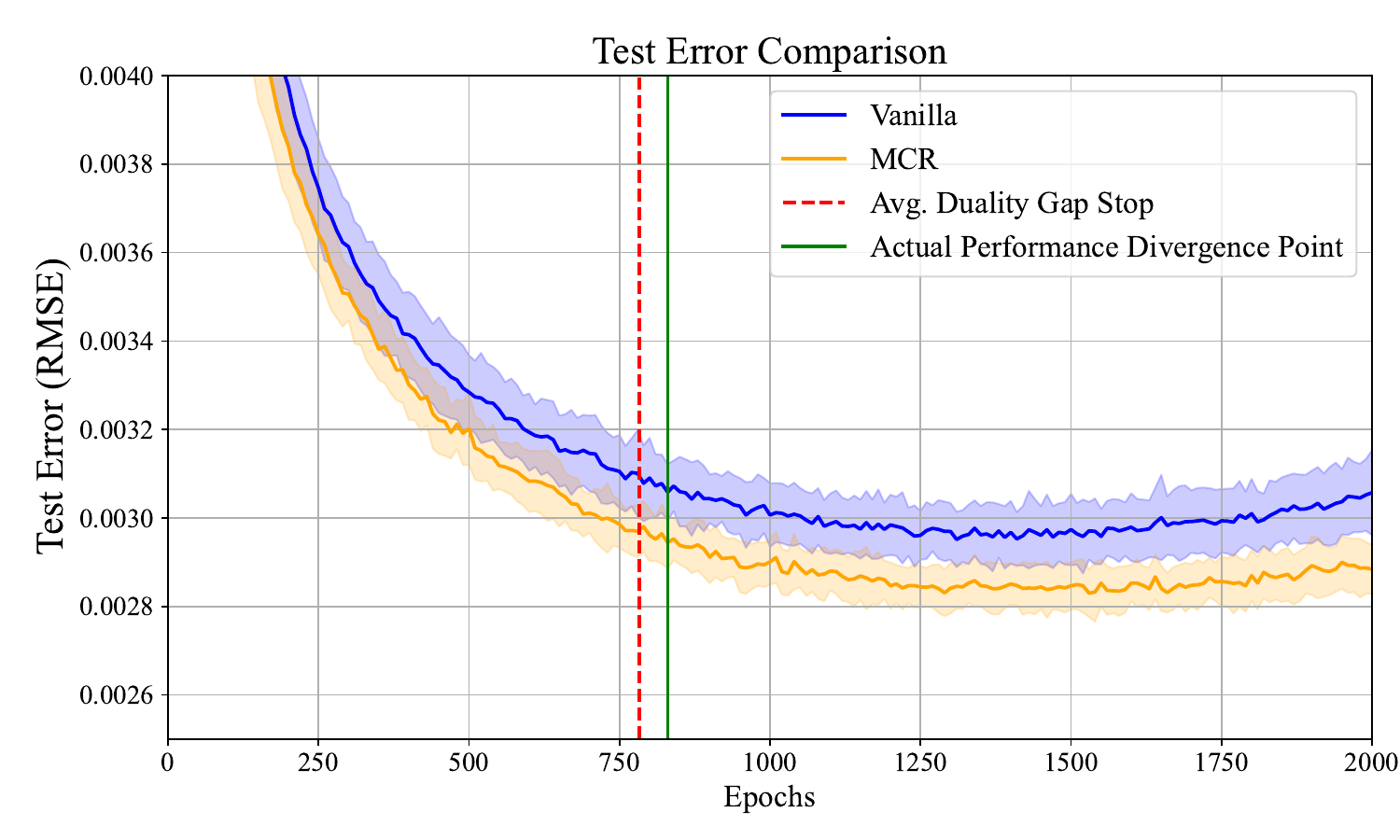}
    \includegraphics[width=0.4\textwidth]{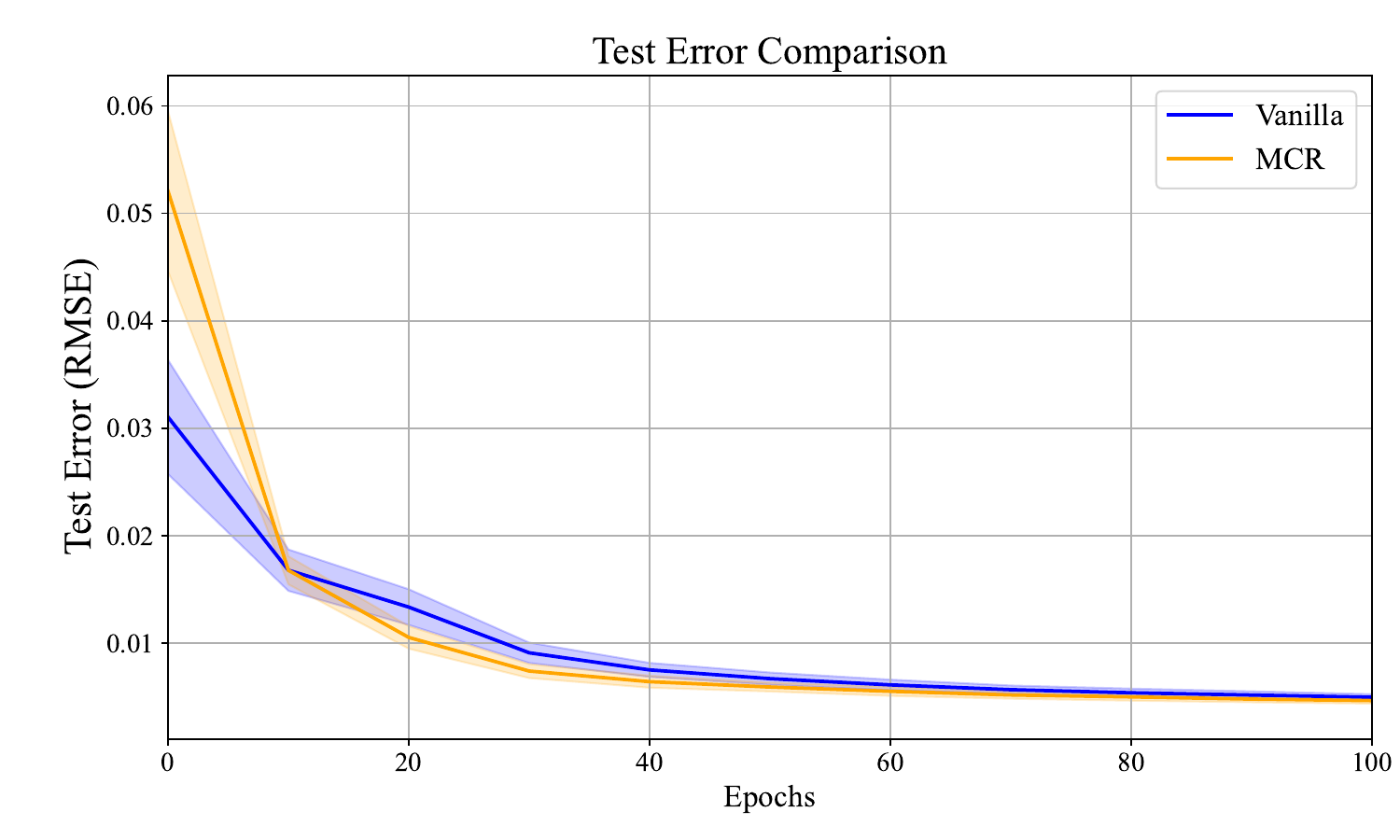}
    \caption{The test error curve comparison between vanilla training and MCR training. \textbf{Left:} Test error curve over $2000$ epochs. \textbf{Right:} Test error curve comparison in the first $100$ epochs.}
    \label{fig:dg_test}
\end{figure*}

We now examine the effectiveness of the duality gap stopping condition. Importantly, this criterion does not alter the MCR training process itself; rather, it determines the appropriate stopping point in Algorithm~\ref{algo:wasserstein}. Experimental settings are detailed in the supplementary materials. Using the same synthetic data and model configuration as in Section~\ref{subsec:theory}, we train both vanilla and MCR models for $2000$ epochs. To ensure comparability, we apply the same optimizer and learning rate across both settings. Convergence of $\theta$ in Algorithm~\ref{algo:wasserstein} is defined by the training loss $\Lc$ (not the neural net distance) stabilizing within $0.00005$ over five consecutive epochs.

Fig.~\ref{fig:dg_test} reports the test data \texttt{RMSE} comparisons alongside the stopping points selected by the duality gap condition. On the left, we show the full $2000$-epoch curves, overlaying two key markers: (i) the average duality gap stopping epoch, and (ii) the performance divergence epoch, defined as the first point where the test error confidence intervals of vanilla and MCR training no longer overlap. The right panel zooms in on the first 100 epochs. Several observations emerge:
\begin{enumerate}
    \item MCR training achieves consistently lower test error compared with vanilla training, {\color{black} consistent with the favorable-bound regime.}
    \item The average duality gap stopping epoch is close to the performance divergence epoch, {\color{black} which illustrates the practical timing of the diagnostic in this setting.}
    \item In the initial training phase, MCR temporarily yields higher test error compared with vanilla training. This phenomenon reflects the theoretical insight from Theorem~\ref{the:imperfect}, that performance gains are not always true at all stages, and it empirically confirms the existence of “bad” MCR training phases.
\end{enumerate}

\subsection{Versatility of MCR}

{\color{black} We evaluate MCR on five structured-missingness tasks spanning three data modalities. For MNIST and CIFAR-10, the upper half of each image is used to reconstruct its lower half with a convolutional encoder--decoder. For the Activity and Crop datasets, variational autoencoders impute ECG signals from TEB and EDA measurements \cite{mohino2019activity}, and optical measurements from radar data \cite{khosravi2018msmd,khosravi2019random}, respectively. Finally, we use MultiVI \cite{ashuach2023multivi} to impute ATAC-seq measurements
from RNA counts in the scMultiSim dataset \cite{li2022scmultisim}. Performance is evaluated by \texttt{RMSE}.

To compare representative measure-matching objectives under a common experimental protocol, we instantiate Eq.~\eqref{eq:MCR} using a $W_1$ neural critic, a logistic GAN discrepancy, a Gaussian-kernel MMD, a Sinkhorn divergence, and an exact quadratic Wasserstein distance. We term the 1-Wasserstein based MCR implementation as \textbf{MCR-W} to highlight its penalty choice. The GAN objective is the structured-missingness counterpart of GAIN \cite{yoon2018gain}: because the entire $\zb$ block is absent in $\Db_u$, the entry-wise mask and hint mechanisms are unnecessary. The Gaussian-kernel MMD variant is then equivalent to GAIN with an MMD GAN. We term these two variants \textbf{GAIN} and \textbf{GAIN-MMD}, respectively. The Sinkhorn and $W_2$ variants similarly specialize the OT formulation of \cite{muzellec2020missing} to the available complete reference measure $\pib_l$. We term these two \textbf{OT-Sinkhorn} and \textbf{OT-$\mathbf{W_2}$}. All methods use the same task-specific prediction architecture and data splits; only the distributional penalty and its associated optimization parameters differ. Complete settings are reported in the supplementary material.

\begin{table*}[h]
    \centering
    \caption{Comparison of imputation error across datasets. Results are reported as
    mean $\pm$ standard error. Lower error is better. The last row reports the
    average percentage improvement relative to Vanilla across all datasets.}
    \label{tab:benchmark_results}
    \small
    \setlength{\tabcolsep}{3.5pt}
    \begin{tabular}{lcccccc}
        \toprule
        Dataset
        & Vanilla
        & MCR-W
        & GAIN
        & GAIN-MMD
        & OT-Sinkhorn
        & OT-$W_2$ \\
        \midrule
        MNIST
        & $0.0447 \pm 0.0002$
        & $0.0446 \pm 0.0002$
        & $0.0446 \pm 0.0002$
        & $0.0445 \pm 0.0002$
        & $0.0440 \pm 0.0002$
        & $0.0440 \pm 0.0002$ \\

        CIFAR
        & $0.0487 \pm 0.0002$
        & $0.0461 \pm 0.0002$
        & $0.0534 \pm 0.0005$
        & $0.0461 \pm 0.0002$
        & $0.0461 \pm 0.0003$
        & $0.0461 \pm 0.0003$ \\

        Activity
        & $0.7863 \pm 0.0391$
        & $0.7523 \pm 0.0268$
        & $0.7501 \pm 0.0311$
        & $0.7584 \pm 0.0251$
        & $0.7155 \pm 0.0287$
        & $0.7134 \pm 0.0278$ \\

        Crop
        & $0.5545 \pm 0.0087$
        & $0.5538 \pm 0.0099$
        & $0.5633 \pm 0.0094$
        & $0.5500 \pm 0.0092$
        & $0.5443 \pm 0.0091$
        & $0.5438 \pm 0.0091$ \\

        scMultiSim
        & $0.3742 \pm 0.0056$
        & $0.3582 \pm 0.0056$
        & $0.4483 \pm 0.0073$
        & $0.3596 \pm 0.0068$
        & $0.3542 \pm 0.0069$
        & $0.3503 \pm 0.0064$ \\
        \midrule
        Avg. $\Delta$ vs. Vanilla (\%) $\uparrow$
        & --
        & $+2.87$
        & $-5.20$
        & $+2.79$
        & $+4.62$
        & $+4.91$ \\
        \bottomrule
    \end{tabular}
\end{table*}

Table~\ref{tab:benchmark_results} shows that MCR-W, GAIN-MMD, OT-Sinkhorn, and OT-$W_2$ achieve lower mean \texttt{RMSE} than Vanilla on all five tasks, with average reductions of $2.87\%$, $2.79\%$, $4.62\%$, and $4.91\%$, respectively. In contrast, GAIN exhibits mixed performance, improving on MNIST and Activity but degrading on the remaining tasks. These results demonstrate that the benefit is not tied to a single discrepancy while also showing that the choice
of measure-matching penalty matters in practice.

\begin{table*}[h]
    \centering
    \caption{Accumulated penalty computation time (seconds) across datasets. Results are reported as mean $\pm$ standard error over 10 runs. Lower values represent higher computational efficiency.}
    \label{tab:penalty_runtime}
    \small
    \setlength{\tabcolsep}{4.0pt}
    \begin{tabular}{lccccc}
        \toprule
        Dataset
        & MCR-W
        & GAIN
        & GAIN-MMD
        & OT-Sinkhorn
        & OT-$W_2$ \\
        \midrule
        MNIST
        & $3.73 \pm 0.15$
        & $8.36 \pm 0.36$
        & $2.51 \pm 0.05$
        & $37.34 \pm 4.29$
        & $35.76 \pm 0.24$ \\

        CIFAR
        & $4.49 \pm 0.02$
        & $11.13 \pm 0.06$
        & $2.94 \pm 0.01$
        & $48.37 \pm 0.12$
        & $75.46 \pm 0.04$ \\

        Activity
        & $9.14 \pm 0.05$
        & $15.76 \pm 0.07$
        & $7.61 \pm 0.13$
        & $97.08 \pm 0.52$
        & $27.79 \pm 0.24$ \\

        Crop
        & $5.53 \pm 0.04$
        & $9.76 \pm 0.30$
        & $4.29 \pm 0.05$
        & $70.78 \pm 0.19$
        & $21.11 \pm 0.34$ \\

        scMultiSim
        & $3.37 \pm 0.11$
        & $6.19 \pm 0.13$
        & $2.48 \pm 0.05$
        & $30.22 \pm 0.43$
        & $60.48 \pm 0.18$ \\
        \bottomrule
    \end{tabular}
\end{table*}

The runtime results in Table~\ref{tab:penalty_runtime} reflect the expected computational tradeoff. Suppressing dimension and network evaluation constants, neural-critic and GAN penalties scale linearly with batch size per adversarial update, empirical MMD requires $\mathcal{O}(B^2)$ kernel evaluations, $T$ Sinkhorn iterations require
$\mathcal{O}(TB^2)$ operations, and exact equal-weight $W_2$ matching requires $\mathcal{O}(B^3)$ assignment time. At the batch sizes used here, direct MMD is the fastest, while $W_1$-MCR requires $3.37$--$9.14$ seconds of accumulated penalty computation and is substantially less expensive than both OT variants. A duality-gap
evaluation additionally requires $n_d$ predictor/critic update steps and therefore has cost $\mathcal{O}\!\left(n_d B(\mathcal{C}_f+\mathcal{C}_g)\right)$, where $\mathcal{C}_f$ and $\mathcal{C}_g$ denote the per-sample
evaluation costs of the predictor and critic.
}

\section{Conclusion}
In this work, we presented a unified framework for measure consistency regularization (MCR) in imputation tasks involving partially observed datasets. Our analysis focused on the theoretical underpinnings of MCR through the lens of neural net distance, leading to estimation error bounds that address both the ideal and {\color{black} non-ideal} regimes. This analysis addressed three critical questions concerning the benefits of MCR. (i) Under {\color{black} ideal optimality and compatibility conditions}, we showed that the primary advantage stems from a favorable estimation-error upper bound. {\color{black} (ii) In the non-ideal regime, we showed that the potential advantage depends on the optimization error and the compatibility between the fully and partially observed data. (iii) We further derived an exact duality-gap-based sufficient condition for the high-probability upper-bound comparison and introduced a calibrated plug-in diagnostic for practical training.}

We validated these theoretical findings through controlled simulations of the predicted behavior of the bounds and the proposed training protocol. {\color{black} On five structured-imputation tasks, we further compared state-of-the-art imputation methods under a common MCR framework. The results showed that several measure-matching objectives
improve upon vanilla training, while their accuracy and computational costs differ substantially across penalties and tasks.} 

Finally, we believe that our work has broader implications beyond imputation. The imputation framework considered here is closely related to post-training methods widely adopted in large language models (LLMs). As such, our theoretical analysis could be extended to two-stage training frameworks, offering new insights into the mechanisms and effectiveness of post-training in modern machine learning systems.

\ifCLASSOPTIONcompsoc
  \section*{Acknowledgments}
\else
  \section*{Acknowledgment}
\fi

This work was initiated when Yinsong Wang was at Northeastern University and his work was supported by the NSF Award \#2038625 as part of the NSF/DHS/DOT/NIH/USDA-NIFA Cyber-Physical Systems Program.

{\color{black} We would like to thank the anonymous reviewers and the editor for their insightful comments and suggestions to help improve the quality of this work.}

\ifCLASSOPTIONcaptionsoff
  \newpage
\fi

\bibliographystyle{IEEEtran}
\bibliography {bibliography.bib}

@article{ji2021understanding,
  title={Understanding estimation and generalization error of generative adversarial networks},
  author={Ji, Kaiyi and Zhou, Yi and Liang, Yingbin},
  journal={IEEE Transactions on Information Theory},
  volume={67},
  number={5},
  pages={3114--3129},
  year={2021},
  publisher={IEEE}
}

@InProceedings{pmlr-v70-arora17a,
  title = 	 {Generalization and Equilibrium in Generative Adversarial Nets ({GAN}s)},
  author =       {Sanjeev Arora and Rong Ge and Yingyu Liang and Tengyu Ma and Yi Zhang},
  booktitle = 	 {Proceedings of the 34th International Conference on Machine Learning},
  pages = 	 {224--232},
  year = 	 {2017},
  editor = 	 {Precup, Doina and Teh, Yee Whye},
  volume = 	 {70},
  series = 	 {Proceedings of Machine Learning Research},
  month = 	 {06--11 Aug},
  publisher =    {PMLR},
  url = 	 {https://proceedings.mlr.press/v70/arora17a.html}
}

@inproceedings{joy2021learning,
  title={Learning Multimodal VAEs through Mutual Supervision},
  author={Joy, Tom and Shi, Yuge and Torr, Philip and Rainforth, Tom and Schmon, Sebastian M and Siddharth, N},
  booktitle={International Conference on Learning Representations},
  year={2021}
}

@inproceedings{muzellec2020missing,
  title={Missing data imputation using optimal transport},
  author={Muzellec, Boris and Josse, Julie and Boyer, Claire and Cuturi, Marco},
  booktitle={International Conference on Machine Learning},
  pages={7130--7140},
  year={2020},
  organization={PMLR}
}

@inproceedings{arjovsky2017wasserstein,
  title={Wasserstein generative adversarial networks},
  author={Arjovsky, Martin and Chintala, Soumith and Bottou, L{\'e}on},
  booktitle={International conference on machine learning},
  pages={214--223},
  year={2017},
  organization={PMLR}
}

@article{gulrajani2017improved,
  title={Improved training of wasserstein gans},
  author={Gulrajani, Ishaan and Ahmed, Faruk and Arjovsky, Martin and Dumoulin, Vincent and Courville, Aaron C},
  journal={Advances in neural information processing systems},
  volume={30},
  year={2017}
}

@article{ji2018minimax,
  title={Minimax estimation of neural net distance},
  author={Ji, Kaiyi and Liang, Yingbin},
  journal={Advances in Neural Information Processing Systems},
  volume={31},
  year={2018}
}

@article{liang2017well,
  title={How well can generative adversarial networks learn densities: A nonparametric view},
  author={Liang, Tengyuan},
  journal={arXiv preprint arXiv:1712.08244},
  year={2017}
}

@article{zhang2017discrimination,
  title={On the discrimination-generalization tradeoff in GANs},
  author={Zhang, Pengchuan and Liu, Qiang and Zhou, Dengyong and Xu, Tao and He, Xiaodong},
  journal={arXiv preprint arXiv:1711.02771},
  year={2017}
}

@inproceedings{yoon2018gain,
  title={Gain: Missing data imputation using generative adversarial nets},
  author={Yoon, Jinsung and Jordon, James and Schaar, Mihaela},
  booktitle={International conference on machine learning},
  pages={5689--5698},
  year={2018},
  organization={PMLR}
}

@article{long2018conditional,
  title={Conditional adversarial domain adaptation},
  author={Long, Mingsheng and Cao, Zhangjie and Wang, Jianmin and Jordan, Michael I},
  journal={Advances in neural information processing systems},
  volume={31},
  year={2018}
}

@article{li2020maximum,
  title={Maximum density divergence for domain adaptation},
  author={Li, Jingjing and Chen, Erpeng and Ding, Zhengming and Zhu, Lei and Lu, Ke and Shen, Heng Tao},
  journal={IEEE transactions on pattern analysis and machine intelligence},
  volume={43},
  number={11},
  pages={3918--3930},
  year={2020},
  publisher={IEEE}
}

@book{rachev2013methods,
  title={The methods of distances in the theory of probability and statistics},
  author={Rachev, Svetlozar T and Klebanov, Lev B and Stoyanov, Stoyan V and Fabozzi, Frank},
  volume={10},
  year={2013},
  publisher={Springer}
}

@article{kantorovich1958space,
  title={On a space of totally additive functions},
  author={Kantorovich, Leonid Vasilevich and Rubinshtein, SG},
  journal={Vestnik of the St. Petersburg University: Mathematics},
  volume={13},
  number={7},
  pages={52--59},
  year={1958},
  publisher={Allerton Press, Inc.}
}

@article{ASENS_1953_3_70_3_267_0,
     author = {Fortet, R. and Mourier, E.},
     title = {Convergence de la r\'epartition empirique vers la r\'epartition th\'eorique},
     journal = {Annales scientifiques de l'\'Ecole Normale Sup\'erieure},
     pages = {267--285},
     publisher = {Elsevier},
     volume = {3e s{\'e}rie, 70},
     number = {3},
     year = {1953},
     doi = {10.24033/asens.1013},
     mrnumber = {15,808d},
     zbl = {0053.09601},
     language = {fr},
     url = {http://www.numdam.org/articles/10.24033/asens.1013/}
}

@inproceedings{yoon2020gamin,
  title={GAMIN: Generative adversarial multiple imputation network for highly missing data},
  author={Yoon, Seongwook and Sull, Sanghoon},
  booktitle={Proceedings of the IEEE/CVF conference on computer vision and pattern recognition},
  pages={8456--8464},
  year={2020}
}

@inproceedings{wu2023jointly,
  title={Jointly imputing multi-view data with optimal transport},
  author={Wu, Yangyang and Miao, Xiaoye and Huang, Xinyu and Yin, Jianwei},
  booktitle={Proceedings of the AAAI Conference on Artificial Intelligence},
  volume={37},
  number={4},
  pages={4747--4755},
  year={2023}
}

@inproceedings{tran2017missing,
  title={Missing modalities imputation via cascaded residual autoencoder},
  author={Tran, Luan and Liu, Xiaoming and Zhou, Jiayu and Jin, Rong},
  booktitle={Proceedings of the IEEE conference on computer vision and pattern recognition},
  pages={1405--1414},
  year={2017}
}

@inproceedings{ma2021smil,
  title={Smil: Multimodal learning with severely missing modality},
  author={Ma, Mengmeng and Ren, Jian and Zhao, Long and Tulyakov, Sergey and Wu, Cathy and Peng, Xi},
  booktitle={Proceedings of the AAAI Conference on Artificial Intelligence},
  volume={35},
  number={3},
  pages={2302--2310},
  year={2021}
}

@inproceedings{zhao2023transformed,
  title={Transformed distribution matching for missing value imputation},
  author={Zhao, He and Sun, Ke and Dezfouli, Amir and Bonilla, Edwin V},
  booktitle={International Conference on Machine Learning},
  pages={42159--42186},
  year={2023},
  organization={PMLR}
}

@inproceedings{sun2024redcore,
  title={RedCore: Relative Advantage Aware Cross-modal Representation Learning for Missing Modalities with Imbalanced Missing Rates},
  author={Sun, Jun and Zhang, Xinxin and Han, Shoukang and Ruan, Yu-Ping and Li, Taihao},
  booktitle={Proceedings of the AAAI Conference on Artificial Intelligence},
  volume={38},
  number={13},
  pages={15173--15182},
  year={2024}
}

@article{cheng2024collaboratively,
  title={Collaboratively Learning Linear Models with Structured Missing Data},
  author={Cheng, Chen and Cheng, Gary and Duchi, John C},
  journal={Advances in Neural Information Processing Systems},
  volume={36},
  year={2024}
}

@article{kang2023cm,
  title={CM-GAN: A cross-modal generative adversarial network for imputing completely missing data in digital industry},
  author={Kang, Mingyu and Zhu, Ran and Chen, Duxin and Liu, Xiaolu and Yu, Wenwu},
  journal={IEEE Transactions on Neural Networks and Learning Systems},
  year={2023},
  publisher={IEEE}
}

@article{ashuach2023multivi,
  title={MultiVI: deep generative model for the integration of multimodal data},
  author={Ashuach, Tal and Gabitto, Mariano I and Koodli, Rohan V and Saldi, Giuseppe-Antonio and Jordan, Michael I and Yosef, Nir},
  journal={Nature Methods},
  volume={20},
  number={8},
  pages={1222--1231},
  year={2023},
  publisher={Nature Publishing Group US New York}
}

@article{tu2022cross,
  title={Cross-linked unified embedding for cross-modality representation learning},
  author={Tu, Xinming and Cao, Zhi-Jie and Mostafavi, Sara and Gao, Ge and others},
  journal={Advances in Neural Information Processing Systems},
  volume={35},
  pages={15942--15955},
  year={2022}
}

@article{heumos2023best,
  title={Best practices for single-cell analysis across modalities},
  author={Heumos, Lukas and Schaar, Anna C and Lance, Christopher and Litinetskaya, Anastasia and Drost, Felix and Zappia, Luke and L{\"u}cken, Malte D and Strobl, Daniel C and Henao, Juan and Curion, Fabiola and others},
  journal={Nature Reviews Genetics},
  volume={24},
  number={8},
  pages={550--572},
  year={2023},
  publisher={Nature Publishing Group UK London}
}

@article{gong2021cobolt,
  title={Cobolt: integrative analysis of multimodal single-cell sequencing data},
  author={Gong, Boying and Zhou, Yun and Purdom, Elizabeth},
  journal={Genome biology},
  volume={22},
  pages={1--21},
  year={2021},
  publisher={Springer}
}

@article{gayoso2021joint,
  title={Joint probabilistic modeling of single-cell multi-omic data with totalVI},
  author={Gayoso, Adam and Steier, Zo{\"e} and Lopez, Romain and Regier, Jeffrey and Nazor, Kristopher L and Streets, Aaron and Yosef, Nir},
  journal={Nature methods},
  volume={18},
  number={3},
  pages={272--282},
  year={2021},
  publisher={Nature Publishing Group US New York}
}

@article{li2022scmultisim,
  title={scMultiSim: simulation of multi-modality single cell data guided by cell-cell interactions and gene regulatory networks},
  author={Li, Hechen and Zhang, Ziqi and Squires, Michael and Chen, Xi and Zhang, Xiuwei},
  journal={Research Square},
  year={2022},
  publisher={American Journal Experts}
}

@article{cohen2023joint,
  title={Joint variational autoencoders for multimodal imputation and embedding},
  author={Cohen Kalafut, Noah and Huang, Xiang and Wang, Daifeng},
  journal={Nature Machine Intelligence},
  volume={5},
  number={6},
  pages={631--642},
  year={2023},
  publisher={Nature Publishing Group UK London}
}

@article{mitra2023learning,
  title={Learning from data with structured missingness},
  author={Mitra, Robin and McGough, Sarah F and Chakraborti, Tapabrata and Holmes, Chris and Copping, Ryan and Hagenbuch, Niels and Biedermann, Stefanie and Noonan, Jack and Lehmann, Brieuc and Shenvi, Aditi and others},
  journal={Nature Machine Intelligence},
  volume={5},
  number={1},
  pages={13--23},
  year={2023},
  publisher={Nature Publishing Group UK London}
}

@article{bahador2021reconstruction,
  title={Reconstruction of missing channel in electroencephalogram using spatiotemporal correlation-based averaging},
  author={Bahador, Nooshin and Jokelainen, Jarno and Mustola, Seppo and Kortelainen, Jukka},
  journal={Journal of Neural Engineering},
  volume={18},
  number={5},
  pages={056045},
  year={2021},
  publisher={IOP Publishing}
}

@article{hou2020systematic,
  title={A systematic evaluation of single-cell RNA-sequencing imputation methods},
  author={Hou, Wenpin and Ji, Zhicheng and Ji, Hongkai and Hicks, Stephanie C},
  journal={Genome biology},
  volume={21},
  pages={1--30},
  year={2020},
  publisher={Springer}
}

@article{sohn2020fixmatch,
  title={Fixmatch: Simplifying semi-supervised learning with consistency and confidence},
  author={Sohn, Kihyuk and Berthelot, David and Carlini, Nicholas and Zhang, Zizhao and Zhang, Han and Raffel, Colin A and Cubuk, Ekin Dogus and Kurakin, Alexey and Li, Chun-Liang},
  journal={Advances in neural information processing systems},
  volume={33},
  pages={596--608},
  year={2020}
}

@article{berthelot2019mixmatch,
  title={Mixmatch: A holistic approach to semi-supervised learning},
  author={Berthelot, David and Carlini, Nicholas and Goodfellow, Ian and Papernot, Nicolas and Oliver, Avital and Raffel, Colin A},
  journal={Advances in neural information processing systems},
  volume={32},
  year={2019}
}

@article{huang2024flatmatch,
  title={FlatMatch: Bridging Labeled Data and Unlabeled Data with Cross-Sharpness for Semi-Supervised Learning},
  author={Huang, Zhuo and Shen, Li and Yu, Jun and Han, Bo and Liu, Tongliang},
  journal={Advances in Neural Information Processing Systems},
  volume={36},
  year={2024}
}

@inproceedings{taherkhani2020transporting,
  title={Transporting labels via hierarchical optimal transport for semi-supervised learning},
  author={Taherkhani, Fariborz and Dabouei, Ali and Soleymani, Sobhan and Dawson, Jeremy and Nasrabadi, Nasser M},
  booktitle={Computer Vision--ECCV 2020: 16th European Conference, Glasgow, UK, August 23--28, 2020, Proceedings, Part IV 16},
  pages={509--526},
  year={2020},
  organization={Springer}
}

@inproceedings{taherkhani2021self,
  title={Self-supervised wasserstein pseudo-labeling for semi-supervised image classification},
  author={Taherkhani, Fariborz and Dabouei, Ali and Soleymani, Sobhan and Dawson, Jeremy and Nasrabadi, Nasser M},
  booktitle={Proceedings of the IEEE/CVF conference on computer vision and pattern recognition},
  pages={12267--12277},
  year={2021}
}

@article{mohino2019activity,
  title={Activity recognition using wearable physiological measurements: Selection of features from a comprehensive literature study},
  author={Mohino-Herranz, Inma and Gil-Pita, Roberto and Rosa-Zurera, Manuel and Seoane, Fernando},
  journal={Sensors},
  volume={19},
  number={24},
  year={2019},
  publisher={MDPI}
}

@article{khosravi2019random,
  title={A random forest-based framework for crop mapping using temporal, spectral, textural and polarimetric observations},
  author={Khosravi, Iman and Alavipanah, Seyed Kazem},
  journal={International Journal of Remote Sensing},
  volume={40},
  number={18},
  pages={7221--7251},
  year={2019},
  publisher={Taylor \& Francis}
}

@article{khosravi2018msmd,
  title={MSMD: maximum separability and minimum dependency feature selection for cropland classification from optical and radar data},
  author={Khosravi, Iman and Safari, Abdolreza and Homayouni, Saeid},
  journal={International Journal of Remote Sensing},
  volume={39},
  number={8},
  pages={2159--2176},
  year={2018},
  publisher={Taylor \& Francis}
}

@article{akidau2015dataflow,
  title={The dataflow model: a practical approach to balancing correctness, latency, and cost in massive-scale, unbounded, out-of-order data processing},
  author={Akidau, Tyler and Bradshaw, Robert and Chambers, Craig and Chernyak, Slava and Fern{\'a}ndez-Moctezuma, Rafael J and Lax, Reuven and McVeety, Sam and Mills, Daniel and Perry, Frances and Schmidt, Eric and others},
  journal={Proceedings of the VLDB Endowment},
  volume={8},
  number={12},
  pages={1792--1803},
  year={2015},
  publisher={VLDB Endowment}
}

@article{fazlyab2019efficient,
  title={Efficient and accurate estimation of lipschitz constants for deep neural networks},
  author={Fazlyab, Mahyar and Robey, Alexander and Hassani, Hamed and Morari, Manfred and Pappas, George},
  journal={Advances in neural information processing systems},
  volume={32},
  year={2019}
}

@article{grnarova2019domain,
  title={A domain agnostic measure for monitoring and evaluating GANs},
  author={Grnarova, Paulina and Levy, Kfir Y and Lucchi, Aurelien and Perraudin, Nathanael and Goodfellow, Ian and Hofmann, Thomas and Krause, Andreas},
  journal={Advances in neural information processing systems},
  volume={32},
  year={2019}
}

@inproceedings{sidheekh2021duality,
  title={On duality gap as a measure for monitoring gan training},
  author={Sidheekh, Sahil and Aimen, Aroof and Madan, Vineet and Krishnan, Narayanan C},
  booktitle={2021 international joint conference on neural networks (IJCNN)},
  pages={1--8},
  year={2021},
  organization={IEEE}
}

@inproceedings{sidheekh2021characterizing,
  title={On characterizing GAN convergence through proximal duality gap},
  author={Sidheekh, Sahil and Aimen, Aroof and Krishnan, Narayanan C},
  booktitle={International Conference on Machine Learning},
  pages={9660--9670},
  year={2021},
  organization={PMLR}
}

@phdthesis{choudhury2020missing,
  title={Missing data imputation using machine learning and natural language processing for clinical diagnostic codes},
  author={Choudhury, Arkopal},
  year={2020},
  school={The University of North Carolina at Chapel Hill}
}

@article{swanson2021simultaneous,
  title={Simultaneous trimodal single-cell measurement of transcripts, epitopes, and chromatin accessibility using {TEA}-seq},
  author={Swanson, Elliott and Lord, Cara and Reading, Julian and Heubeck, Alexander T and Genge, Palak C and Thomson, Zachary and Weiss, Morgan DA and Li, Xiao-jun and Savage, Adam K and Green, Richard R and Torgerson, Troy R and Bumol, Thomas F and Graybuck, Lucas T and Skene, Peter J},
  journal={Elife},
  volume={10},
  pages={e63632},
  year={2021},
  publisher={eLife Sciences Publications, Ltd}
}

@article{mimitou2021scalable,
  title={Scalable, multimodal profiling of chromatin accessibility, gene expression and protein levels in single cells},
  author={Mimitou, Eleni P and Lareau, Caleb A and Chen, Kelvin Y and Zorzetto-Fernandes, Andre L and Hao, Yuhan and Takeshima, Yusuke and Luo, Wendy and Huang, Tse-Shun and Yeung, Bertrand Z and Papalexi, Efthymia and Thakore, Pratiksha I and Kibayashi, Tatsuya and Wing, James B and Hata, Mayu and Satija, Rahul and Nazor, Kristopher I and Sakaguchi, Shimon and Ludwig, Leif S and Sankaran, Vijay G and Regev, Aviv and Smibert, Peter},
  journal={Nature Biotechnology},
  volume={39},
  number={10},
  pages={1246--1258},
  year={2021},
  publisher={Nature Publishing Group US New York}
}

@article{garcia2009k,
  title={K nearest neighbours with mutual information for simultaneous classification and missing data imputation},
  author={Garc{\'\i}a-Laencina, Pedro J and Sancho-G{\'o}mez, Jos{\'e}-Luis and Figueiras-Vidal, An{\'\i}bal R and Verleysen, Michel},
  journal={Neurocomputing},
  volume={72},
  number={7-9},
  pages={1483--1493},
  year={2009},
  publisher={Elsevier}
}

@article{yu2020regression,
  title={Regression multiple imputation for missing data analysis},
  author={Yu, Lili and Liu, Liang and Peace, Karl E},
  journal={Statistical Methods in Medical Research},
  volume={29},
  number={9},
  pages={2647--2664},
  year={2020},
  publisher={SAGE Publications Sage UK: London, England}
}

@article{zhao2016multiple,
  title={Multiple imputation in the presence of high-dimensional data},
  author={Zhao, Yize and Long, Qi},
  journal={Statistical Methods in Medical Research},
  volume={25},
  number={5},
  pages={2021--2035},
  year={2016},
  publisher={SAGE Publications Sage UK: London, England}
}

@article{elharrouss2020image,
  title={Image inpainting: {A} review},
  author={Elharrouss, Omar and Almaadeed, Noor and Al-Maadeed, Somaya and Akbari, Younes},
  journal={Neural Processing Letters},
  volume={51},
  number={2},
  pages={2007--2028},
  year={2020},
  publisher={Springer}
}

@article{smieja2018processing,
  title={Processing of missing data by neural networks},
  author={{\'S}mieja, Marek and Struski, {\L}ukasz and Tabor, Jacek and Zieli{\'n}ski, Bartosz and Spurek, Przemys{\l}aw},
  journal={The 2018 Advances in Neural Information Processing Systems},
  volume={31},
  month={Dec},
  year={2018},
  location={Montréal, Canada}
}

@article{van2020survey,
  title={A survey on semi-supervised learning},
  author={Van Engelen, Jesper E and Hoos, Holger H},
  journal={Machine Learning},
  volume={109},
  number={2},
  pages={373--440},
  year={2020},
  publisher={Springer}
}

@book{villani2008optimal,
  title={Optimal Transport: Old and New},
  author={Villani, C{\'e}dric and others},
  volume={338},
  year={2008},
  publisher={Springer}
}

@article{gibbs2002choosing,
  title={On choosing and bounding probability metrics},
  author={Gibbs, Alison L and Su, Francis Edward},
  journal={International Statistical Review},
  volume={70},
  number={3},
  pages={419--435},
  year={2002},
  publisher={Wiley Online Library}
}

@article{mitchell1985carnot,
  title={On carnot-carath{\'e}odory metrics},
  author={Mitchell, John},
  journal={Journal of Differential Geometry},
  volume={21},
  number={1},
  pages={35--45},
  year={1985},
  publisher={Lehigh University}
}

@article{prandi2014holder,
  title={H{\"o}lder equivalence of the value function for control-affine systems},
  author={Prandi, Dario},
  journal={ESAIM: Control, Optimisation and Calculus of Variations},
  volume={20},
  number={4},
  pages={1224--1248},
  year={2014}
}

@inproceedings{wang2019semi,
  title={Semi-supervised learning by augmented distribution alignment},
  author={Wang, Qin and Li, Wen and Gool, Luc Van},
  booktitle={Proceedings of the IEEE/CVF international conference on computer vision},
  pages={1466--1475},
  year={2019}
}

@article{bartlett2017spectrally,
  title={Spectrally-normalized margin bounds for neural networks},
  author={Bartlett, Peter L and Foster, Dylan J and Telgarsky, Matus J},
  journal={Advances in neural information processing systems},
  volume={30},
  year={2017}
}

@article{zhang2012generalization,
  title={Generalization bounds for domain adaptation},
  author={Zhang, Chao and Zhang, Lei and Ye, Jieping},
  journal={Advances in neural information processing systems},
  volume={25},
  year={2012}
}

@article{yun2019small,
  title={Small relu networks are powerful memorizers: a tight analysis of memorization capacity},
  author={Yun, Chulhee and Sra, Suvrit and Jadbabaie, Ali},
  journal={Advances in neural information processing systems},
  volume={32},
  year={2019}
}

@article{oliver2018realistic,
  title={Realistic evaluation of deep semi-supervised learning algorithms},
  author={Oliver, Avital and Odena, Augustus and Raffel, Colin and Cubuk, Ekin Dogus and Goodfellow, Ian},
  journal={Advances in neural information processing systems},
  volume={31},
  year={2018}
}

@inproceedings{guo2020safe,
  title={Safe deep semi-supervised learning for unseen-class unlabeled data},
  author={Guo, Lan-Zhe and Zhang, Zhen-Yu and Jiang, Yuan and Li, Yu-Feng and Zhou, Zhi-Hua},
  booktitle={International conference on machine learning},
  pages={3897--3906},
  year={2020},
  organization={PMLR}
}

@article{oneto2015local,
  title={Local rademacher complexity: Sharper risk bounds with and without unlabeled samples},
  author={Oneto, Luca and Ghio, Alessandro and Ridella, Sandro and Anguita, Davide},
  journal={Neural Networks},
  volume={65},
  pages={115--125},
  year={2015},
  publisher={Elsevier}
}

@article{najafi2019robustness,
  title={Robustness to adversarial perturbations in learning from incomplete data},
  author={Najafi, Amir and Maeda, Shin-ichi and Koyama, Masanori and Miyato, Takeru},
  journal={Advances in Neural Information Processing Systems},
  volume={32},
  year={2019}
}

@article{li2023semi,
  title={Semi-supervised vector-valued learning: Improved bounds and algorithms},
  author={Li, Jian and Liu, Yong and Wang, Weiping},
  journal={Pattern Recognition},
  volume={138},
  pages={109356},
  year={2023},
  publisher={Elsevier}
}

\begin{IEEEbiography}
	[{\includegraphics[width=1in,height=1.25in,clip,keepaspectratio]{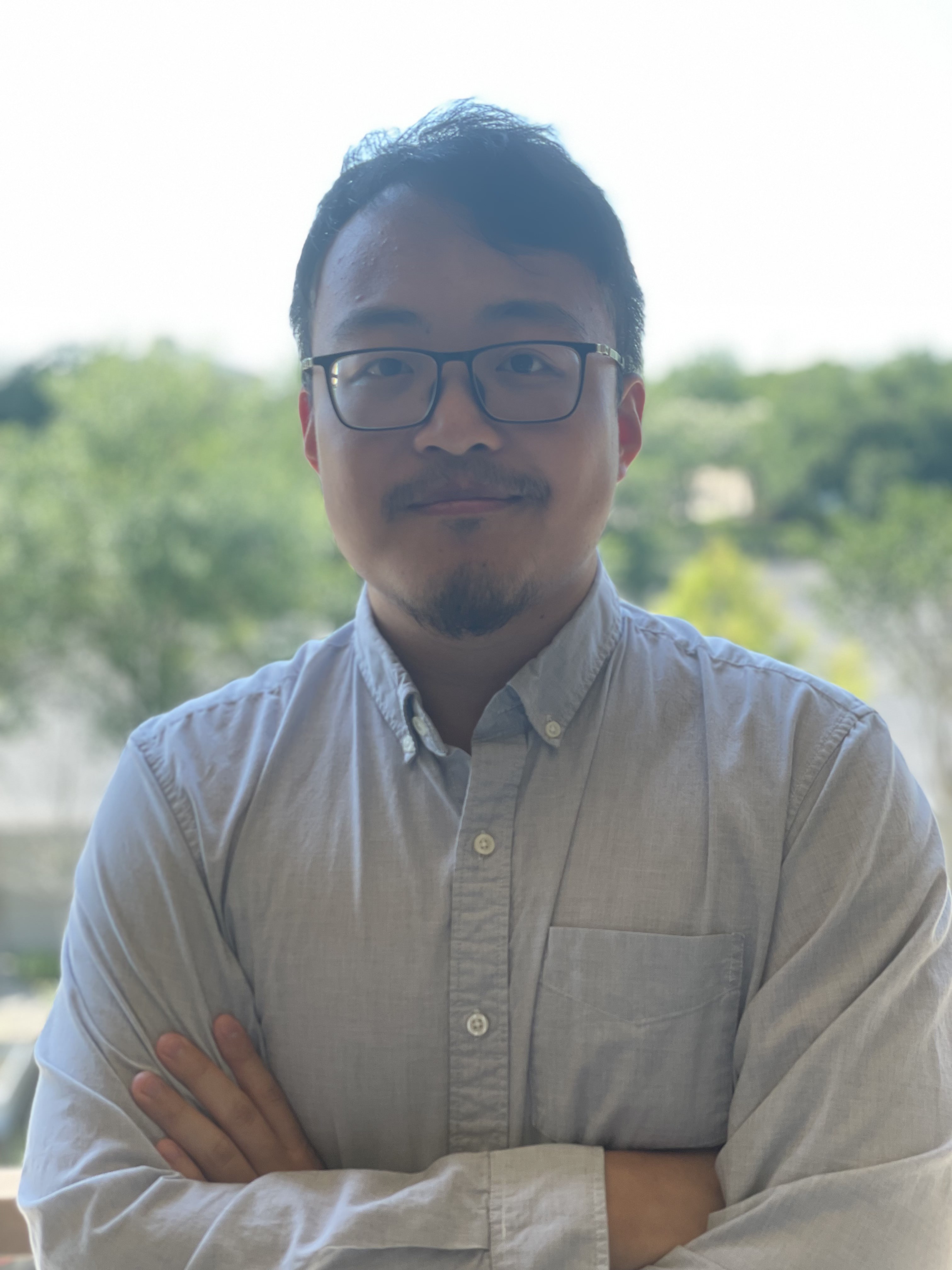}}] {Yinsong Wang} received his B.S. degree in Mechanical Engineering from Shandong University, China, in 2017, his M.S. degree in Manufacturing System Engineering and Management from The Hong Kong Polytechnic University, Hong Kong, in 2018. and his Ph.D. degree in Industrial Engineering at Northeastern University, USA, in 2024. He is currently an Assistant Professor of Industrial Engineering at the University of Nevada Reno. His research interests include machine learning, data science, and kernel methods.
	\end{IEEEbiography}

\begin{IEEEbiography}
	[{\includegraphics[width=1in,height=1.25in,clip,keepaspectratio]{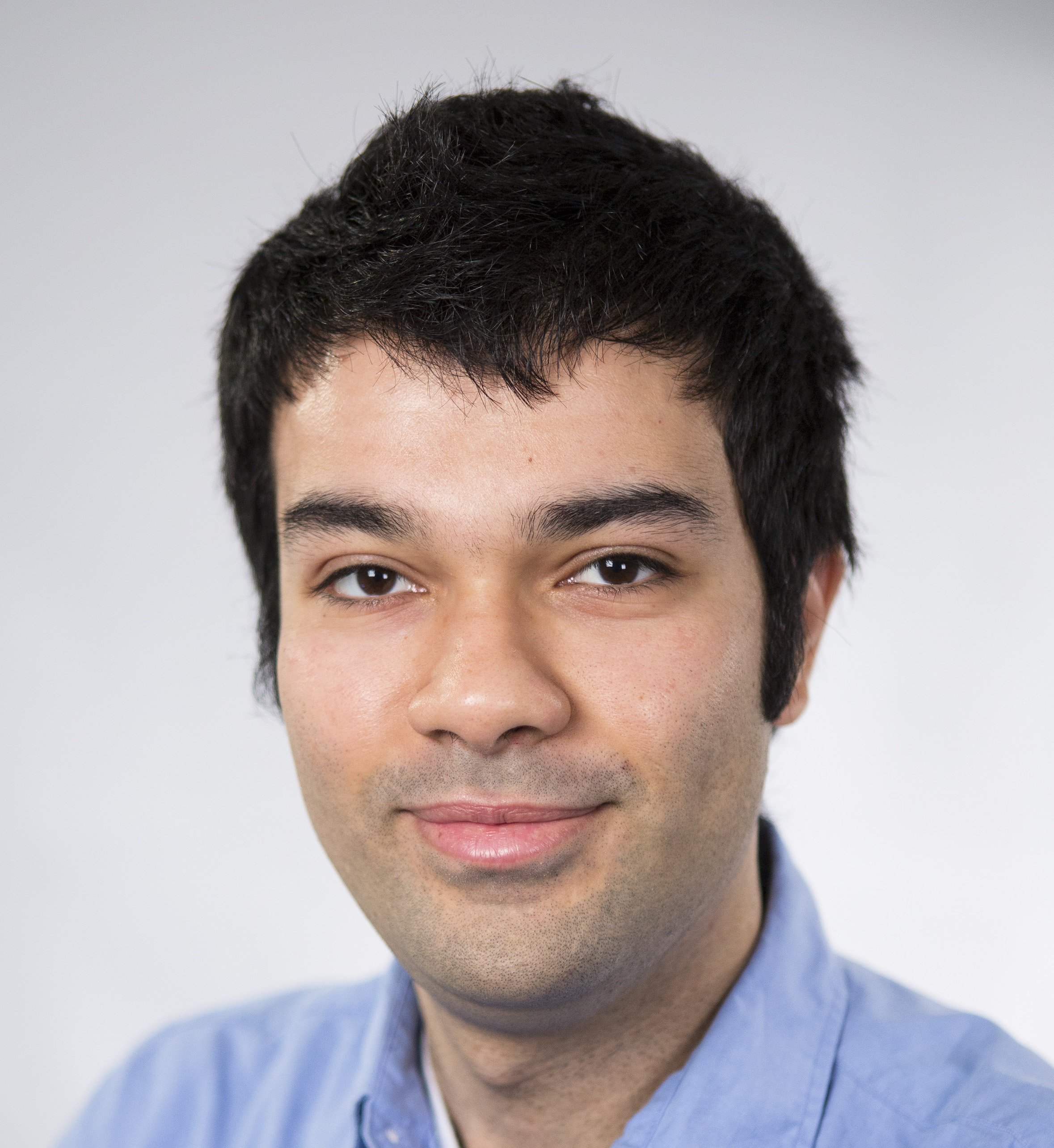}}] {Shahin Shahrampour} received the Ph.D. degree in Electrical and Systems Engineering, the M.A. degree in Statistics (The Wharton School), and the M.S.E. degree in Electrical Engineering, all from the University of Pennsylvania, in 2015, 2014, and 2012, respectively. He is currently an Associate Professor in the Department of Mechanical and Industrial Engineering at Northeastern University. His research interests include machine learning, optimization, sequential decision-making, and distributed learning, with a focus on developing computationally efficient methods for data analytics. He is a Senior Member of the IEEE.
	\end{IEEEbiography}

\clearpage

\section*{Appendix}

{\color{black}
\subsubsection*{Identification by Joint-Measure Matching}

Suppose the imputation problem is well specified, such that
$\zb=f_0(\xb)$ almost surely, and $\Gc_{nn}$ is discriminative. If
$R(f)=0$, then $d_{\Gc_{nn}}(\pib,\pib^f)=0$ implies
$\pib=\pib^f$. Equality of these joint measures implies equality of
their conditional measures given $\xb$, and hence
$\delta_{f_0(\xb)}=\delta_{f(\xb)}$,  $\pib(\xb)$-almost surely. Therefore,
\[
R(f)=0
\quad\Longleftrightarrow\quad
f(\xb)=f_0(\xb)
\quad \pib(\xb)\text{-almost surely}.
\]

\subsubsection*{Comparison with Generic Unlabeled-data Penalties}
Let $\Xc=\Zc=\{0,1\}$,
$\Db_l=\{(0,0)\}$, and $\Db_u=\{1\}$, with the true target satisfying
$f_0(1)=0$. Consider two deterministic predictors
\[
f_0(0)=f_1(0)=0,
\qquad
f_0(1)=0,
\qquad
f_1(1)=1.
\]
Both predictors interpolate $\Db_l$ and, when interpreted as
degenerate class probabilities, have zero prediction entropy. They can
also have identical zero pointwise consistency loss when their
predictions remain invariant under the prescribed perturbations.
Nevertheless, under the Euclidean ground metric on $(\xb,\zb)$,
\[
W_1(\pib_l,\pib_u^{f_0})=1,
\qquad
W_1(\pib_l,\pib_u^{f_1})=\sqrt{2}.
\]
Thus, joint-measure matching distinguishes the correct from the incorrect imputation in this example, whereas entropy and pointwise consistency alone do not. This example does not establish universal superiority of
MCR; it only illustrates that these penalties encode different
structural assumptions and do not automatically identify the joint
distribution targeted by MCR.
}

\subsection*{Omitted Proofs}
\subsubsection*{Proof of Theorem 1}

We decompose the estimation error for a prediction function $\hat{f}$ into the following three parts
    {
\setlength{\jot}{8pt}
\begin{equation}\label{eq:error_decomp}
    \begin{aligned}
        &d_{\Gc_{nn}}(\pib, \pib^{\hat{f}}) - \underset{f \in \Fc}{\inf} \;d_{\Gc_{nn}}(\pib, \pib^f) \\
        = & \underbrace{d_{\Gc_{nn}}(\pib,\pib^{\hat{f}}) - d_{\Gc_{nn}}(\pib_l,\pib^{\hat{f}})}_\text{\textcircled{1}}\\
        & + \underbrace{d_{\Gc_{nn}}(\pib_l,\pib^{\hat{f}}) - \underset{f \in \Fc}{\inf}\; d_{\Gc_{nn}}(\pib_l,\pib^f)}_\text{\textcircled{2}}\\
        & + \underbrace{\underset{f \in \Fc}{\inf}\; d_{\Gc_{nn}}(\pib_l,\pib^f) - \underset{f \in \Fc}{\inf} \;d_{\Gc_{nn}}(\pib, \pib^f)}_\text{\textcircled{3}}.
    \end{aligned}
\end{equation}}
We now need Lemma \ref{lemma:rademacher} to proceed with our bound.

\begin{lemma}\label{lemma:rademacher}
    For functions $h \in \Hc$ with maximum output size $B_h$, denoting the true measure of data as $\pib$ and the empirical measure of $n$ data points as $\pib_l$, we can bound the following quantity with probability at least $1-2\delta$
    \begin{equation*}
        \underset{h \in \Hc}{\sup} \; \big[ \pib h - \pib_l h \big] \leq 2\hat{\mathfrak{R}}_n(\Hc) + \frac{3B_h\sqrt{\ln \delta^{-1}}}{\sqrt{2n}},
    \end{equation*}
    where $\hat{\mathfrak{R}}_n(\Hc)$ denotes the Rademacher complexity of the function class $\Hc$ over $n$ data points, defined as 
\begin{equation}\label{eq:rademacher}
        \hat{\mathfrak{R}}_n(\Hc) := {\E}_\sigma\left[ \underset{h \in \Hc}{\sup} \frac{1}{n} \sum_{i=1}^n \sigma_i h(\xb_i) \right],
    \end{equation}
    where $\{\sigma_i\}_{i=1}^n$ are i.i.d. random variables uniformly sampled from $\{\pm 1\}$.
\end{lemma}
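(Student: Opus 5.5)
The statement to prove is Lemma~\ref{lemma:rademacher}. It is the standard one-sided uniform deviation bound via symmetrization plus McDiarmid's inequality, applied twice. The two-sided constant $3/\sqrt{2}$ and the $1-2\delta$ probability come from one McDiarmid step for the supremum and one for passing from the expected to the empirical Rademacher complexity.

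First, define
\[
\Phi(S):=\sup_{h\in\Hc}\big[\pib h-\pib_l h\big],
\]
where $S$ is the sample of $n$ i.i.d.\ points. Assume each $h\in\Hc$ takes values in an interval of length at most $B_h$; this is how we read ``maximum output size''. Then replacing one sample point changes $\Phi$ by at most $B_h/n$. McDiarmid's inequality gives, with probability at least $1-\delta$,
\[
\Phi(S)\le \E_S[\Phi(S)] + B_h\sqrt{\ln\delta^{-1}/(2n)}.
\]

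Second, bound $\E_S[\Phi(S)]$ by symmetrization. Introduce a ghost sample $S'$ and write $\pib h=\E_{S'}[\pib_l' h]$. Pull the supremum inside the expectation via Jensen's inequality. Insert i.i.d.\ Rademacher signs, which is legitimate because $h(\xb_i')-h(\xb_i)$ is symmetric in distribution. Splitting the supremum over the two samples then yields
\[
\E_S[\Phi(S)]\le 2\,\mathfrak{R}_n(\Hc),
\]
where $\mathfrak{R}_n(\Hc)=\E_S[\hat{\mathfrak{R}}_n(\Hc)]$ is the expected Rademacher complexity.

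Third, convert to the empirical quantity in Eq.~\eqref{eq:rademacher}. As a function of $S$, $\hat{\mathfrak{R}}_n(\Hc)$ also has bounded differences $B_h/n$, since changing one point alters one summand inside the supremum by at most $B_h/n$. A second application of McDiarmid gives, with probability at least $1-\delta$,
\[
\mathfrak{R}_n(\Hc)\le \hat{\mathfrak{R}}_n(\Hc)+B_h\sqrt{\ln\delta^{-1}/(2n)}.
\]

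Finally, take a union bound over the two events, so the combined statement holds with probability at least $1-2\delta$. The deviation terms add up as
\[
B_h\sqrt{\tfrac{\ln\delta^{-1}}{2n}}+2B_h\sqrt{\tfrac{\ln\delta^{-1}}{2n}}=\frac{3B_h\sqrt{\ln\delta^{-1}}}{\sqrt{2n}},
\]
which is exactly the claimed bound.

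The only delicate point is the bounded-difference constant, i.e.\ how ``output size $B_h$'' is interpreted. If $B_h$ bounds $|h|$ rather than the range, the differences become $2B_h/n$ and the constant doubles. In that case I would absorb the factor into the definition of $B_h$; this is consistent with the later use of $B_gB_{xz}$, where Lipschitzness times the diameter bounds the oscillation of $g$. Everything else is textbook, e.g.\ Mohri et al., Theorem~3.3.
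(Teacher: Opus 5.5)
Your argument is correct: it is the standard symmetrization-plus-two-McDiarmid proof (Mohri et al., Thm.~3.3, with $\delta$ replaced by $2\delta$ and the range rescaled to $B_h$), and the paper states this lemma without proof, implicitly relying on exactly this textbook result. Reading $B_h$ as a bound on the oscillation of $h$ is the right choice and needs no absorption of constants, because the paper later instantiates $B_h = B_gB_{xz}$, and a $B_g$-Lipschitz function on a domain of diameter $B_{xz}$ has oscillation (not sup-norm) at most $B_gB_{xz}$.
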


Term \textcircled{1} is easy to bound by rewriting the formulation for any prediction $f \in \Fc$ as follows
{
\setlength{\jot}{8pt}
\begin{equation}\label{eq:1and3}
\begin{aligned}
    &d_{\Gc_{nn}}(\pib,\pib^f) - d_{\Gc_{nn}}(\pib_l,\pib^f)\\
    = & \underset{g \in \Gc_{nn}}{\sup} \; \big[ \pib g - \pib^f g \big] - \underset{g \in \Gc_{nn}}{\sup} \; \big[ \pib_l g - \pib^f g \big]\\
    \leq & \underset{g \in \Gc_{nn}}{\sup} \; \big[ \pib g - \pib_l g \big].
\end{aligned}
\end{equation}}
Note that we rely on the symmetry of $\Gc_{nn}$ to switch between $\underset{g \in \Gc_{nn}}{\sup} \; \big[ \pib g - \pib^f g \big]$ and $\underset{g \in \Gc_{nn}}{\sup} \; \big| \pib g - \pib^f g \big|$. 

Let $f' \in \underset{f \in \Fc}{\arginf}\{ d_{\Gc_{nn}} (\pib,\pib^f)\}$. Then, for Term \textcircled{3}, we have
\begin{equation*}
    \begin{aligned}
        &\underset{f \in \Fc}{\inf}\; d_{\Gc_{nn}}(\pib_l,\pib^f) - \underset{f \in \Fc}{\inf} \;d_{\Gc_{nn}}(\pib, \pib^f) \\
        &\leq d_{\Gc_{nn}}(\pib_l,\pib^{f'}) - d_{\Gc_{nn}}(\pib,\pib^{f'})\\
        & = \underset{g \in \Gc_{nn}}{\sup} \; \big[ \pib_l g - \pib^{f'} g \big] - \underset{g \in \Gc_{nn}}{\sup} \; \big[ \pib g - \pib^{f'} g \big]\\
    & \leq \underset{g \in \Gc_{nn}}{\sup} \; \big[ \pib_l g - \pib g \big] = \underset{g \in \Gc_{nn}}{\sup} \; \big[ \pib g - \pib_l g \big],
    \end{aligned}
\end{equation*}
where the last line holds for any symmetric $\Gc_{nn}$ (i.e., $g \in \Gc_{nn} \Rightarrow -g \in \Gc_{nn}$). Notice that the above bound is independent of $f$ and is only dependent on the fully observed dataset $\Db_l$ and the function class $\Gc_{nn}$. 

We next bound the upper bound term in \eqref{eq:1and3}. With Lemma \ref{lemma:rademacher}, we have the upper bound with probability at least $1-2\delta$ such that
\begin{equation*}
    \begin{aligned}
        \underset{g \in \Gc_{nn}}{\sup} \; \big[ \pib g - \pib_l g \big] \leq 2\hat{\mathfrak{R}}_n(\Gc_{nn})+ \frac{3B_g B_{xz} \sqrt{\ln{\delta^{-1}}}}{\sqrt{2n}},
    \end{aligned}
\end{equation*}
and therefore, we conclude that
\begin{equation}\label{eq:bound1+3}
    \textcircled{1} + \textcircled{3} \leq 4\hat{\mathfrak{R}}_n(\Gc_{nn}) + \frac{3B_g B_{xz} \sqrt{2\ln{\delta^{-1}}}}{\sqrt{n}}.
\end{equation}

 The major difference between a prediction function learned with MCR versus one learned without MCR comes down to the difference in Term \textcircled{2}.

{\color{black} For the predictor $f_l^*$ learned without MCR, Assumption~\ref{assump:2}
implies that $f_l^*\in\Fc^*$, and hence
$\pib_l^{f_l^*}=\pib_l$. Therefore,
\begin{equation}
\label{eq:ideal_vanilla_term2}
\begin{aligned}
\textcircled{2}
&=
d_{\Gc_{nn}}(\pib_l,\pib^{f_l^*})
-
\inf_{f\in\Fc}d_{\Gc_{nn}}(\pib_l,\pib^f)
\\
&\leq
d_{\Gc_{nn}}(\pib_l,\pib^{f_l^*})
\\
&=
d_{\Gc_{nn}}(\pib_l^{f_l^*},\pib^{f_l^*})
\\
&\leq
\sup_{h\in\Hc}
\left|
\pib_l(\xb)h-\pib(\xb)h
\right|,
\end{aligned}
\end{equation}
where $\pib(\xb)$ and $\pib_l(\xb)$ denote the true (marginal) and empirical measure of $\xb$, respectively. Applying Lemma~\ref{lemma:rademacher} gives, with probability at least $1-2\delta$,
\begin{equation}
\label{eq:ideal_vanilla_term2_bound}
\textcircled{2}
\leq
2\hat{\mathfrak{R}}_n(\Hc)
+
\frac{\Delta}{2\sqrt n}.
\end{equation}
}

To proceed, we need to prove the following lemma.

\begin{lemma}\label{lemma}
    Recall the sets $\Fc_2$, $\Fc_3$, and $\Fc_4$ defined in Assumptions \ref{assump:2}, \ref{assump:3}, and \ref{assump:4}. Then, {\color{black} under Assumptions~\ref{assump:1}-\ref{assump:4}}, if $f^*_u \in \Fc_2 \cap \Fc_3$, we have $f^*_u \in \Fc_4$.
\end{lemma}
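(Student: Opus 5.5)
The plan is to identify the minimal values of the three discrepancies involved, and to show that any $f\in\Fc_2\cap\Fc_3$ attains the minimum of the mixture discrepancy.

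The key structural fact is linearity of expectation in the measure. For every $g\in\Gc_{nn}$ and $f\in\Fc$, the mixture decomposition in Assumption~\ref{assump:4} gives
\begin{equation*}
\pib_l g-\pib_{ul}^f g=\frac{n}{n+m}\big(\pib_l g-\pib_l^f g\big)+\frac{m}{n+m}\big(\pib_l g-\pib_u^f g\big).
\end{equation*}
Taking absolute values and the supremum over $g$ yields, for every $f\in\Fc$, the convexity-type bound
\begin{equation*}
d_{\Gc_{nn}}(\pib_l,\pib_{ul}^f)\le \frac{n}{n+m}d_{\Gc_{nn}}(\pib_l,\pib_l^f)+\frac{m}{n+m}d_{\Gc_{nn}}(\pib_l,\pib_u^f).
\end{equation*}

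Next, take $f^*_u\in\Fc_2\cap\Fc_3$. By Assumption~\ref{assump:2}, $\Fc_2=\Fc^*$, so $f^*_u$ interpolates $\Db_l$. Hence $\pib_l^{f^*_u}=\pib_l$ and the first term vanishes. Moreover, $f^*_u\in\Fc_3$ gives $d_{\Gc_{nn}}(\pib_l,\pib_u^{f^*_u})=\xi:=\inf_f d_{\Gc_{nn}}(\pib_l,\pib_u^f)$. In fact, for any interpolating $f\in\Fc^*$ the decomposition holds with equality:
\begin{equation*}
d_{\Gc_{nn}}(\pib_l,\pib_{ul}^f)=\frac{m}{n+m}d_{\Gc_{nn}}(\pib_l,\pib_u^f).
\end{equation*}
Consequently,
\begin{equation*}
d_{\Gc_{nn}}(\pib_l,\pib_{ul}^{f^*_u})=\frac{m}{n+m}\,\xi .
\end{equation*}

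It remains to show that no $f\in\Fc$ does better. This is the main obstacle: for a non-interpolating $f$, the triangle-type inequality above only bounds the mixture discrepancy from above, not from below. A non-interpolating $f$ could in principle trade a positive labeled mismatch against a smaller unlabeled mismatch, so lower-bounding $\inf_f d_{\Gc_{nn}}(\pib_l,\pib_{ul}^f)$ directly seems hard.

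I would resolve this through Assumption~\ref{assump:4} rather than a direct lower bound. That assumption supplies some $\tilde f\in\Fc^*\cap\Fc_4$, so the infimum over $\Fc$ of the mixture discrepancy is attained by an interpolating function. By the equality for $f\in\Fc^*$,
\begin{equation*}
\inf_{f\in\Fc}d_{\Gc_{nn}}(\pib_l,\pib_{ul}^f)=d_{\Gc_{nn}}(\pib_l,\pib_{ul}^{\tilde f})=\frac{m}{n+m}d_{\Gc_{nn}}(\pib_l,\pib_u^{\tilde f})\ge\frac{m}{n+m}\,\xi,
\end{equation*}
where the final inequality uses the definition of $\xi$. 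Combining this with the value $\frac{m}{n+m}\xi$ attained by $f^*_u$ shows that $f^*_u$ achieves the infimum, so $f^*_u\in\Fc_4$.

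Assumption~\ref{assump:3} is used only to guarantee that $\Fc_2\cap\Fc_3$ is nonempty, so that the lemma is not vacuous. Assumption~\ref{assump:1}, specifically the symmetry of $\Gc_{nn}$, is not essential for this argument, because the equality for interpolating $f$ holds directly with absolute values.
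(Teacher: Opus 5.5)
Your proof is correct and is essentially the paper's argument. Both use Assumption~\ref{assump:2} to get $\pib_l^{f}=\pib_l$ for $f\in\Fc_2$, derive the identity $d_{\Gc_{nn}}(\pib_l,\pib_{ul}^f)=\frac{m}{m+n}d_{\Gc_{nn}}(\pib_l,\pib_u^f)$, and use Assumption~\ref{assump:4} to supply an interpolating mixture minimizer, which is then compared against the $\Fc_3$-optimality of $f^*_u$. The differences are cosmetic: the paper argues by contradiction while you argue directly, and your remark that symmetry of $\Gc_{nn}$ is not needed (because $d_{\Gc_{nn}}$ is defined with absolute values) is accurate.
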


\noindent
{\bf Proof of Lemma \ref{lemma}:}
Suppose that there exists $f' \in \Fc_2 \cap \Fc_3$ that is not in $\Fc_2 \cap \Fc_4$. This means there exists $f^*\in \Fc_2 \cap \Fc_4$ such that the following identity holds
\begin{equation}\label{eq:contrad1}
d_{\Gc_{nn}}(\pib_l,\pib_{ul}^{f^*})
<
d_{\Gc_{nn}}(\pib_l,\pib_{ul}^{f'}).
\end{equation}
Now, since both $f' \in \Fc_2$ and $f^* \in \Fc_2$, we have $\pib_l^{f^*}=\pib_l^{f'}=\pib_l$. Therefore, for any $f\in\Fc_2$,
\begin{equation}\label{eq:ul_reduction}
    \begin{aligned}
d_{\Gc_{nn}}(\pib_l,\pib_{ul}^{f})
&=
\sup_{g\in\Gc_{nn}}
\Big[
\pib_l g
-\frac{n}{m+n}\pib_l^{f} g
-\frac{m}{m+n}\pib_u^{f} g
\Big] \nonumber\\
&=
\sup_{g\in\Gc_{nn}}
\Big[
\pib_l g
-\frac{n}{m+n}\pib_l g
-\frac{m}{m+n}\pib_u^{f} g
\Big] \nonumber\\
&=
\frac{m}{m+n}\sup_{g\in\Gc_{nn}}
\big(\pib_l g-\pib_u^{f} g\big)\\
&=
\frac{m}{m+n}\, d_{\Gc_{nn}}(\pib_l,\pib_u^{f}).
\end{aligned}
\end{equation}
Applying above to $f^*~\text{and}~f'$ and using the strict inequality \eqref{eq:contrad1} implies
\begin{equation*}
    d_{\Gc_{nn}}(\pib_l,\pib_u^{f^*})
    <
    d_{\Gc_{nn}}(\pib_l,\pib_u^{f'}).
\end{equation*}
This contradicts $f'\in\arg\inf_{f\in\Fc} d_{\Gc_{nn}}(\pib_l,\pib_u^{f})$.
Hence, the assumption $f'\notin \Fc_2 \cap \Fc_4$ is false, and therefore $f'\in \Fc_4$.

\qed

{\color{black} For the MCR solution $f_u^*$, Assumptions~\ref{assump:2} and
\ref{assump:3} imply $f_u^*\in\Fc^*\cap\Fc_3$, and therefore
$\pib_l^{f_u^*}=\pib_l$. Consequently, the same argument as in Eq.~\eqref{eq:ideal_vanilla_term2} gives
\begin{equation}
\label{eq:ideal_mcr_n_bound}
\textcircled{2}
\leq
2\hat{\mathfrak{R}}_n(\Hc)
+
\frac{\Delta}{2\sqrt n},
\end{equation}
with probability at least $1-2\delta$.

We can obtain a second bound by exploiting the mixture-compatibility condition. By Lemma~\ref{lemma}, we have $f_u^*\in\Fc_4$. Let
\[
\Tilde f
\in
\arg\inf_{f\in\Fc}
d_{\Gc_{nn}}(\pib_l,\pib^f).
\]
Since $f_u^*$ minimizes
$d_{\Gc_{nn}}(\pib_l,\pib_{ul}^{f})$, we have
\begin{align*}
\textcircled{2}
&=
d_{\Gc_{nn}}(\pib_l,\pib^{f_u^*})
-
d_{\Gc_{nn}}(\pib_l,\pib^{\Tilde f})
\\
&=
\Big[
d_{\Gc_{nn}}(\pib_l,\pib^{f_u^*})
-
d_{\Gc_{nn}}(\pib_l,\pib_{ul}^{f_u^*})
\Big]
\\
&\quad+
\Big[
d_{\Gc_{nn}}(\pib_l,\pib_{ul}^{f_u^*})
-
d_{\Gc_{nn}}(\pib_l,\pib_{ul}^{\Tilde f})
\Big]
\\
&\quad+
\Big[
d_{\Gc_{nn}}(\pib_l,\pib_{ul}^{\Tilde f})
-
d_{\Gc_{nn}}(\pib_l,\pib^{\Tilde f})
\Big]
\\
&\leq
2
\sup_{h\in\Hc}
\left|
\pib_{ul}(\xb)h-\pib(\xb)h
\right|.
\end{align*}
Hence, with probability at least $1-2\delta$,
\begin{equation}
\label{eq:ideal_mcr_mixed_bound}
\textcircled{2}
\leq
4\hat{\mathfrak{R}}_{m+n}(\Hc)
+
\frac{\Delta}{\sqrt{m+n}}.
\end{equation}

On the intersection of the two concentration events,
Eqs.~\eqref{eq:ideal_mcr_n_bound} and
\eqref{eq:ideal_mcr_mixed_bound} yield
\begin{equation}
\label{eq:ideal_mcr_min_bound}
\textcircled{2}
\leq
\min\left\{
2\hat{\mathfrak{R}}_n(\Hc)
+\frac{\Delta}{2\sqrt n},
\;
4\hat{\mathfrak{R}}_{m+n}(\Hc)
+\frac{\Delta}{\sqrt{m+n}}
\right\}.
\end{equation}

Combining Eq.~\eqref{eq:ideal_vanilla_term2_bound} and
Eq.~\eqref{eq:ideal_mcr_min_bound}, respectively, with
Eq.~\eqref{eq:bound1+3}, and applying a union bound proves Theorem~\ref{the:main}.
}
\qed

\begin{table*}[h]
    \centering
    \caption{Ratio Test HyperParameter}
    \begin{tabular}{|c|c|c|c|c|c|c|c|c|c|}
        \hline
        $\alpha_1$ & $\alpha_2$ & $\alpha_3$ & $\beta_1$ & $\beta_2$ & $\lambda_w$ & $\lambda_{\nabla}$ & $\lambda_d$ & $n_f$ & $n_g$ \\
        \hline
        $0.002$ & $0.0015$ & $0.0015$ & $0.9$ & $0.999$ & $0.01$ & $10$ & $0.01$ & $4000$ & $1$ \\
        \hline
    \end{tabular}
    \vspace{5pt}
    \label{tab:ratio_para}
\end{table*}

\subsubsection*{Proof of Theorem \ref{the:imperfect}}

\begin{table*}[h]
    \centering
    \caption{Capacity test hyperparameters, the first block are hyperparameters for varying width test, and the second block are hyperparameters for varying depth test. The left column shows the specific test setting (value for width or depth).}
    \begin{tabular}{|c||c|c|c|c|c|c|c|c|c|c|}
        \hline
         &$\alpha_1$ & $\alpha_2$ & $\alpha_3$ & $\beta_1$ & $\beta_2$ & $\lambda_w$ & $\lambda_{\nabla}$ & $\lambda_d$ & $n_f$ & $n_g$ \\
        \hline
        \hline
        $16$ & $0.005$ & $0.005$ & $0.0015$ & $0.9$ & $0.999$ & $0.01$ & $10$ & $0.01$ & $4000$ & $1$ \\
        \hline
        $32$ & $0.0025$ & $0.0025$ & $0.0015$ & $0.9$ & $0.999$ & $0.01$ & $10$ & $0.01$ & $4000$ & $1$ \\
        \hline
        $64$ & $0.002$ & $0.0015$ & $0.0015$ & $0.9$ & $0.999$ & $0.01$ & $10$ & $0.01$ & $4000$ & $1$ \\
        \hline
        $128$ & $0.0015$ & $0.0015$ & $0.0015$ & $0.9$ & $0.99$ & $0.01$ & $10$ & $0.01$ & $4000$ & $1$ \\
        \hline
        $256$ & $0.0011$ & $0.0011$ & $0.0015$ & $0.9$ & $0.95$ & $0.01$ & $10$ & $0.01$ & $4000$ & $1$ \\
        \hline
        \hline
        $2$ & $0.005$ & $0.005$ & $0.0015$ & $0.9$ & $0.999$ & $0.01$ & $10$ & $0.01$ & $4000$ & $1$ \\
        \hline
        $4$ & $0.001$ & $0.001$ & $0.0015$ & $0.9$ & $0.999$ & $0.01$ & $10$ & $0.01$ & $4000$ & $1$ \\
        \hline
        $6$ & $0.0005$ & $0.0005$ & $0.0015$ & $0.9$ & $0.999$ & $0.01$ & $10$ & $0.01$ & $4000$ & $1$ \\
        \hline
    \end{tabular}
    \vspace{5pt}
    \label{tab:capacity_para}
\end{table*}

\begin{table*}[h]
    \centering
    \caption{Duality Gap Stopping Condition Evaluation HyperParameter}
    \begin{tabular}{|c|c|c|c|c|c|c|c|c|c|}
        \hline
        $\alpha_1$ & $\alpha_2$ & $\alpha_3$ & $\beta_1$ & $\beta_2$ & $\lambda_w$ & $\lambda_{\nabla}$ & $\lambda_d$ & $n_f$ & $n_g$ \\
        \hline
        $0.005$ & $0.005$ & $0.005$ & $0.9$ & $0.999$ & $0.01$ & $10$ & $0.01$ & $2000$ & $1$ \\
        \hline
    \end{tabular}
    \vspace{5pt}
    \label{tab:dg_hype}
\end{table*}

\begin{table*}[h]
    \centering
    \caption{MCR Weight $\lambda_d$ Evaluation HyperParameter}
    \begin{tabular}{|c|c|c|c|c|c|c|c|c|c|}
        \hline
        $\alpha_1$ & $\alpha_2$ & $\alpha_3$ & $\beta_1$ & $\beta_2$ & $\lambda_w$ & $\lambda_{\nabla}$& $n_f$ & $n_g$ \\
        \hline
        $0.005$ & $0.005$ & $0.0015$ & $0.9$ & $0.99$ & $0$ & $10$ & $2000$ & $1$ \\
        \hline
    \end{tabular}
    \vspace{5pt}
    \label{tab:lambdad}
\end{table*}

\begin{table*}[h]
    \centering
    \caption{Neural Net Distance Evaluation HyperParameter}
    \begin{tabular}{|c|c|c|c|c|c|c|c|c|c|}
        \hline
        $\alpha_1$ & $\alpha_2$ & $\alpha_3$ & $\beta_1$ & $\beta_2$ & $\lambda_w$ & $\lambda_{\nabla}$ & $n_f$ & $n_g$ \\
        \hline
        $0.005$ & $0.002$ & $0.0015$ & $0.9$ & $0.99$ & $0$ & $10$ & $2000$ & $1$ \\
        \hline
    \end{tabular}
    \vspace{5pt}
    \label{tab:nnd}
\end{table*}

\begin{table*}[h]
    \centering
    \caption{Distribution Discrepancy Evaluation HyperParameter}
    \begin{tabular}{|c|c|c|c|c|c|c|c|c|c|}
        \hline
        $\alpha_1$ & $\alpha_2$ & $\alpha_3$ & $\beta_1$ & $\beta_2$ & $\lambda_w$ & $\lambda_{\nabla}$ & $\lambda_d$ & $n_f$ & $n_g$ \\
        \hline
        $0.005$ & $0.005$ & $0.0015$ & $0.9$ & $0.99$ & $0$ & $10$ & $0.01$ & $2000$ & $1$ \\
        \hline
    \end{tabular}
    \vspace{5pt}
    \label{tab:xi}
\end{table*}

     Notice that the error decomposition in Eq. \eqref{eq:error_decomp} breaks down the estimation error into Terms $\textcircled{1}$, $\textcircled{2}$, and $\textcircled{3}$, where Terms $\textcircled{1}$ and $\textcircled{3}$ can be upper bounded independently for any learned prediction function. Therefore, the same upper bound holds, and we only need to focus on Term $\textcircled{2}$ in this section. {\color{black} For vanilla training and the resulting solution $f_l$, Term~\textcircled{2} satisfies
\begin{align}
\textcircled{2}
&=
d_{\Gc_{nn}}(\pib_l,\pib^{f_l})
-
\inf_{f\in\Fc}d_{\Gc_{nn}}(\pib_l,\pib^f)
\nonumber\\
&\leq
d_{\Gc_{nn}}(\pib_l,\pib^{f_l})
\nonumber\\
&\leq
d_{\Gc_{nn}}(\pib_l,\pib_l^{f_l})
+
d_{\Gc_{nn}}(\pib_l^{f_l},\pib^{f_l})
\nonumber\\
&\leq
C(\epsilon_{\Lc})^\alpha
+
\sup_{h\in\Hc}
\left|
\pib_l(\xb)h-\pib(\xb)h
\right|,
\label{eq:imperfect_vanilla_term2}
\end{align}
where we used Assumption \ref{assump:5} in the last line. Therefore, with probability at least $1-2\delta$,
\begin{equation}
\textcircled{2}
\leq
C(\epsilon_{\Lc})^\alpha
+
2\hat{\mathfrak{R}}_n(\Hc)
+
\frac{\Delta}{2\sqrt n}.
\end{equation}
Combining above with Eq.~\eqref{eq:bound1+3} proves
Eq.~\eqref{eq:imperfect_loss}.

\begin{table*}[h]
    \centering
    \caption{Real-data imputation hyperparameters used in the final experiments.
    In Panel A, entries for $n_f$ are reported as vanilla/MCR-method epochs.
    The learning rates $\alpha_1$, $\alpha_2$, and $\alpha_3$ retain their definitions
    from the algorithm: $\alpha_1$ is used for vanilla training, $\alpha_2$ for
    predictor training with a distributional penalty, and $\alpha_3$ for the MCR
    critic or GAN discriminator. Panel B gives the method-specific penalty weights.
    Panel C reports additional penalty settings shared across datasets.}
    
    \small
    \setlength{\tabcolsep}{4pt}
    \renewcommand{\arraystretch}{1.1}

    \begin{tabular}{|c|c|c|c|c|c|c|c|c|c|c|}
        \hline
        \multicolumn{11}{|c|}{\textbf{Panel A: Dataset-level optimization parameters}} \\
        \hline
        Dataset
        & $\alpha_1$
        & $\alpha_2$
        & $\alpha_3$
        & $\beta_1$
        & $\beta_2$
        & $\lambda_w$
        & $\lambda_{\nabla}$
        & $n_f$
        & $n_g$
        & $B$ \\
        \hline
        MNIST
        & $0.0001$
        & $0.0001$
        & $0.00001$
        & $0.9$
        & $0.999$
        & $0.01$
        & $10$
        & $500/500$
        & $2$
        & $500$ \\
        \hline
        CIFAR-10
        & $0.00001$
        & $0.00001$
        & $0.001$
        & $0.9$
        & $0.99$
        & $0.01$
        & $10$
        & $500/500$
        & $3$
        & $500$ \\
        \hline
        Activity
        & $0.0005$
        & $0.0005$
        & $0.0001$
        & $0.9$
        & $0.999$
        & $0.01$
        & $10$
        & $500/750$
        & $2$
        & $200$ \\
        \hline
        Crop
        & $0.0003$
        & $0.0003$
        & $0.001$
        & $0.9$
        & $0.999$
        & $0.01$
        & $10$
        & $500/500$
        & $2$
        & $500$ \\
        \hline
        scMultiSim
        & $0.0001$
        & $0.0001$
        & $0.00001$
        & $0.9$
        & $0.95$
        & $0.01$
        & $10$
        & $500/750$
        & $5$
        & $100$ \\
        \hline
    \end{tabular}

    \vspace{6pt}

    \begin{tabular}{|c|c|c|c|c|c|}
        \hline
        \multicolumn{6}{|c|}{\textbf{Panel B: Dataset- and method-specific penalty weights}} \\
        \hline
        Dataset
        & $\lambda_d$
        & $\lambda_{\mathrm{GAN}}$
        & $\lambda_{\mathrm{MMD}}$
        & $\lambda_{\mathrm{SH}}$
        & $\lambda_{W_2}$ \\
        \hline
        MNIST
        & $0.001$
        & $0.001$
        & $0.01$
        & $0.1$
        & $0.1$ \\
        \hline
        CIFAR-10
        & $0.002$
        & $0.001$
        & $0.01$
        & $0.01$
        & $0.01$ \\
        \hline
        Activity
        & $0.001$
        & $0.001$
        & $1$
        & $0.1$
        & $0.1$ \\
        \hline
        Crop
        & $0.001$
        & $0.001$
        & $0.01$
        & $0.1$
        & $0.1$ \\
        \hline
        scMultiSim
        & $0.0001$
        & $0.1$
        & $1$
        & $1$
        & $1$ \\
        \hline
    \end{tabular}

    \vspace{6pt}

    \begin{tabular}{|c|c|c|}
        \hline
        \multicolumn{3}{|c|}{\textbf{Panel C: Method-specific penalty settings}} \\
        \hline
        Method
        & Penalty/adversary batch size
        & Additional setting \\
        \hline
        Vanilla
        & -- 
        & No distributional penalty \\
        \hline
        MCR-W
        & $B$
        & WGAN-GP; $n_g$ and $\lambda_{\nabla}$ are given in Panel A \\
        \hline
        GAIN
        & $B$
        & One discriminator update per predictor update \\
        \hline
        GAIN-MMD
        & $100$
        & Median-heuristic bandwidth with scale $s_{\sigma}=1$ \\
        \hline
        OT-Sinkhorn
        & $100$
        & $s_{\varepsilon}=0.05$ and $30$ Sinkhorn iterations \\
        \hline
        OT-$W_2$
        & $100$
        & Equal-weight empirical transport via Hungarian assignment \\
        \hline
    \end{tabular}

    \vspace{5pt}
    \label{tab:real_data_para}
    \label{tab:image_para}
    \label{tab:sensor_para}
    \label{tab:scvi_para}
\end{table*}

For MCR training and the resulting solution $f_u$,
\begin{align}
\textcircled{2}
&=
d_{\Gc_{nn}}(\pib_l,\pib^{f_u})
-
\inf_{f\in\Fc}d_{\Gc_{nn}}(\pib_l,\pib^f)
\nonumber\\
&\leq
d_{\Gc_{nn}}(\pib_l,\pib^{f_u})
\nonumber\\
&\leq
d_{\Gc_{nn}}(\pib_l,\pib_{ul}^{f_u})
+
d_{\Gc_{nn}}(\pib_{ul}^{f_u},\pib^{f_u}).
\label{eq:imperfect_mcr_term2_new}
\end{align}

Using the definition of $\pib_{ul}^{f_u}$ and convexity of an IPM under mixtures, we have
\begin{equation}
\resizebox{\columnwidth}{!}{$
\begin{aligned}
d_{\Gc_{nn}}(\pib_l,\pib_{ul}^{f_u})
&\leq
\frac{n}{m+n}
d_{\Gc_{nn}}(\pib_l,\pib_l^{f_u})
+
\frac{m}{m+n}
d_{\Gc_{nn}}(\pib_l,\pib_u^{f_u})
\nonumber\\
&\leq
\frac{n}{m+n}C(\epsilon_{\Lc})^\alpha
+
\frac{m}{m+n}(\epsilon_d+\xi),
\label{eq:imperfect_mcr_mixture}
\end{aligned}$}
\end{equation}
where we used Assumptions~\ref{assump:5} and~\ref{assump:6} as well as $\xi:=
\inf_{f\in\Fc}
d_{\Gc_{nn}}(\pib_l,\pib_u^f)$ in the last inequality. We also have
\begin{align}
d_{\Gc_{nn}}(\pib_{ul}^{f_u},\pib^{f_u})
&\leq
\sup_{h\in\Hc}
\left|
\pib_{ul}(\xb)h-\pib(\xb)h
\right|.
\end{align}
Hence, with probability at least $1-2\delta$,
\begin{align}
\textcircled{2}
\leq{}&
2\hat{\mathfrak{R}}_{m+n}(\Hc)
+
\frac{\Delta}{2\sqrt{m+n}}
\nonumber\\
&+
\frac{n}{m+n}C(\epsilon_{\Lc})^\alpha
+
\frac{m}{m+n}(\epsilon_d+\xi).
\end{align}

Combining this result with Eq.~\eqref{eq:bound1+3}
proves Eq.~\eqref{eq:imperfect_ndp}.}

\qed

\subsection*{Simulation Settings}

Throughout this section, we will use consistent notation for all optimization hyperparameters. We use $\alpha_1$ for the learning rate of the model trained without MCR. $\alpha_2$ for learning rate of the model trained with MCR. $\alpha_3$ for the learning rate of the neural distance function $g$. $\beta_1$, $\beta_2$, and $\lambda_w$ are the parameters of the AdamW algorithm. $\lambda_{\nabla}$ is the weight for the gradient penalty. $\lambda_d$ is the weight of the neural distance penalty. The corresponding penalty weights for the benchmark algorithms are, $\lambda_{GAN}$ for GAIN, $\lambda_{MMD}$ for GAIN-MMD, $\lambda_{SH}$ for OT-Sinkhorn, and $\lambda_{W_2}$ for OT-$W_2$. $n_f$ for the maximum epoch number for updating the prediction function $f$, $n_g$ for the epoch number for updating neural distance function $g$.

\subsubsection*{Verifying Theoretical Claims}

For comparison of the $m$ and $n$ ratios, we set $d_x = 30, d_z = 20$. The generation process is achieved with randomly initialized two-layer ReLU activation MLP with $64$ neurons in the first hidden layer and $32$ in the second hidden layer. The prediction function is also a two-hidden-layer MLP with ReLU activation and batch normalization; both hidden layers have $64$ neurons. The neural distance function is another two-hidden-layer ReLU activation MLP with $512$ neurons in the first hidden layer and $256$ neurons in the second hidden layer.

In the learning process, the primary loss function $\Lc$ is the mean squared error loss, and the optimizer is AdamW. The hyperparameter settings are shown in Table \ref{tab:ratio_para}.

For model capacity comparison, we follow the same data generation process as described earlier. However, we always generate $100$ fully observed training data, $1000$ partially observed data for penalty calculation, and $2000$ fully observed test data. For varying the model width, we set the depth of the MLP prediction function to $2$ (two hidden layers). For varying the depth of the model, we set the width of the model to $64$. We also use the same neural distance function. The hyperparameter settings are shown in Table \ref{tab:capacity_para}.

For MCR weight $\lambda_d$ comparison, we set $m=2000$ and $n=100$, and the parameter settings are shown in Table~\ref{tab:lambdad}.

For choice of neural net distance $\Gc_{nn}$, the hyperparameters settings are shown in Table~\ref{tab:nnd}. Note that the only difference between MMD and $1-$Wasserstein MCR hyperparameters settings is the $\lambda_d$ values, in consideration of the loss scales to ensure the primary MSE loss converges to similar values in both scenarios.

For the distribution discrepancy $\hat{\xi}$ evaluation, the hyperparameters settings are shown in Table~\ref{tab:xi}.

\subsubsection*{Evaluating Duality Gap Stopping Condition}

We use the same synthetic data generation process as in the previous section, with $n$ set to $100$ and $m$ set to $2000$. The hyper parameters settings are shown in Table~\ref{tab:dg_hype}. For duality gap estimation, we set the duality gap update step to $n_d = 5$ for a more accurate estimation.

\subsubsection*{Versatility of MCR}

For the MNIST reconstruction experiment, we normalize the data to $[0,1]$ by dividing it by $255$. For the CIFAR10 dataset, we do the same thing, while cropping out along the pixel dimension. For both datasets, we choose $1000$ data points as fully observed training data, $5000$ cropped images as penalty data, and the rest as test data. For the prediction function $f$, we choose a convolutional autoencoder with convolution filters in the encoder and convolution transpose in the decoder. The encoder and decoder have a mirror structure, with $32$ channels in the first layer and $64$ channels in the second layer. The latent dimension is $32$ for MNIST and $128$ for CIFAR. We use the ReLU activation function. The neural distance function now has convolution filters with $16$ channels, followed by a linear layer. We also use the AdamW optimizer. The hyperparameter setting is shown in Table \ref{tab:image_para}, where the first row is for MNIST and the second row is for CIFAR10.

For the Activity dataset, we standardize the data set to mean $0$ and standard deviation $1$ along each feature dimension. We sample $400$ data points as fully observed training data, and $2000$ TEB and EDA data for neural distance penalty. The rest of the dataset are used as test data. For the Crop dataset, we do the same standardization procedure, and we sample $1000$ data as fully observed training data, and $5000$ radar data for neural distance penalty. For the prediction function $f$, we use variational autoencoders. Again, encoder and decoder have mirror structures, with the first layer of width $512$, and second layer of width $256$ for both datasets, and a latent dimension of $128$ for Activity dataset, and $32$ for Crop dataset. We use the same MLP neural distance function $g$ as before. The optimization hyperparameter setting is shown in Table \ref{tab:sensor_para}. First row is for Activity dataset, and the second row is for Crop dataset.

Due to a lack of data points in the scMultiSim dataset, we only divide the datasets into two parts. The first part is $200$ fully observed training data, and the second part contains $300$ RNA profiles for neural distance penalty calculation and $300$ ATAC-seq data for testing. We follow the same data normalization approach in \cite{cohen2023joint}. For the prediction function $f$, we write a custom version of MultiVI \cite{ashuach2023multivi} by removing the module that encodes library size, since we only have access to two data modalities. Moreover, we set the weighting factor for KL divergence in the latent space to $10$. We use the same MLP neural net distance function $g$ as before as it still works well in this case. The hyperparameter setting is shown in Table~\ref{tab:scvi_para}.

We also note that the parameter $B$ in Table~\ref{tab:scvi_para} represents the penalty calculation batch size, which is different across datasets for computation feasibility.

\end{document}